\definecolor{Gray}{gray}{0.9}
\definecolor{LightGray}{gray}{0.97}
\newcolumntype{g}{>{\columncolor{LightGray}}S}
\newcommand{\ka}[1]{$\kappa(#1)$}
\newtheorem{definition}{Definition}[section]
\newtheorem{theorem}{Theorem}
\crefname{algocf}{algorithm}{algorithms}
\title{Generative Modelling of Structurally Constrained Graphs}
\author{%
Manuel Madeira \\ 
EPFL, Lausanne, Switzerland \\
\texttt{manuel.madeira@epfl.ch} \\
\And
Clément Vignac \\
EPFL, Lausanne, Switzerland \\
\AND
Dorina Thanou\\
EPFL, Lausanne, Switzerland \\
\And
Pascal Frossard\\
EPFL, Lausanne, Switzerland \\
}
\begin{document}

\maketitle

\begin{abstract}

Graph diffusion models have emerged as state-of-the-art techniques in graph generation; yet, integrating domain knowledge into these models remains challenging. 
Domain knowledge is particularly important in real-world scenarios, where invalid generated graphs hinder deployment in practical applications.
Unconstrained and conditioned graph diffusion models fail to guarantee such domain-specific structural properties. 
We present ConStruct, a novel framework that enables graph diffusion models to incorporate hard constraints on specific properties, such as planarity or acyclicity.
Our approach ensures that the sampled graphs remain within the domain of graphs that satisfy the specified property throughout the entire trajectory in both the forward and reverse processes. This is achieved by introducing an edge-absorbing noise model and a new projector operator.
ConStruct demonstrates versatility across several structural and edge-deletion invariant constraints and achieves state-of-the-art performance for both synthetic benchmarks and attributed real-world datasets. 
For example, by incorporating planarity constraints in digital pathology graph datasets, the proposed method outperforms existing baselines, improving data validity by up to 71.1 percentage points.

\end{abstract}

\section{Introduction}
\label{sec:intro}

Learning how to generate realistic graphs that faithfully mirror a target distribution is crucial for tasks such as data augmentation in network analysis or discovery of novel network structures. This has become a prominent problem in diverse real-world modelling scenarios, ranging from molecule design~\cite{mercado2021graph} and inverse protein folding~\cite{yi2024graph} to anti-money laundering~\cite{li2023diga} or combinatorial optimization~\cite{sun2024difusco}. 
While the explicit representation of relational and structural information with graphs encourage their widespread adoption in numerous applications, their sparse and unordered nature make the task of graph generation challenging. 
\looseness=-1

In many real-world problems, we possess a priori knowledge about specific properties of the target distribution of graphs. 
Incorporating such knowledge into generative models is a natural approach to enforce the generated graphs to comply with the domain-specific properties. Indeed, common generative models, even when conditioned towards graph desired properties, fail to offer guarantees. This may however become particularly critical in settings where noncompliant graphs can lead to real-world application failures.
Many of these desired properties are edge-related, i.e., constraints in the structure of the graph.
For example, in digital pathology, graphs extracted from tissue slides are planar~\cite{gunduz2004cell,schaadt2020graph}. 
Similarly, in contact networks between patients and healthcare workers within hospitals, the degrees of healthcare workers are upper bounded to effectively prevent the emergence of superspreaders and mitigate the risk of infectious disease outbreaks~\cite{jang2019evaluating,adhikari2019fast}.
In graph generation, diffusion models have led to state-of-the-art performance~\cite{vignac2022digress,qin2023sparse,bergmeister2023efficient}, in line with their success on other data modalities~\cite{sohl2015deep, ho2020denoising}. However, constrained generation still lags behind its unconstrained counterpart: despite the remarkable expressivity of graph diffusion models, constraining them to leverage specific graph properties remains a particularly challenging task~\cite{kong2023autoregressive}.
\looseness=-1

\begin{figure*}[t]
    \centering
    \includegraphics[width=\linewidth]{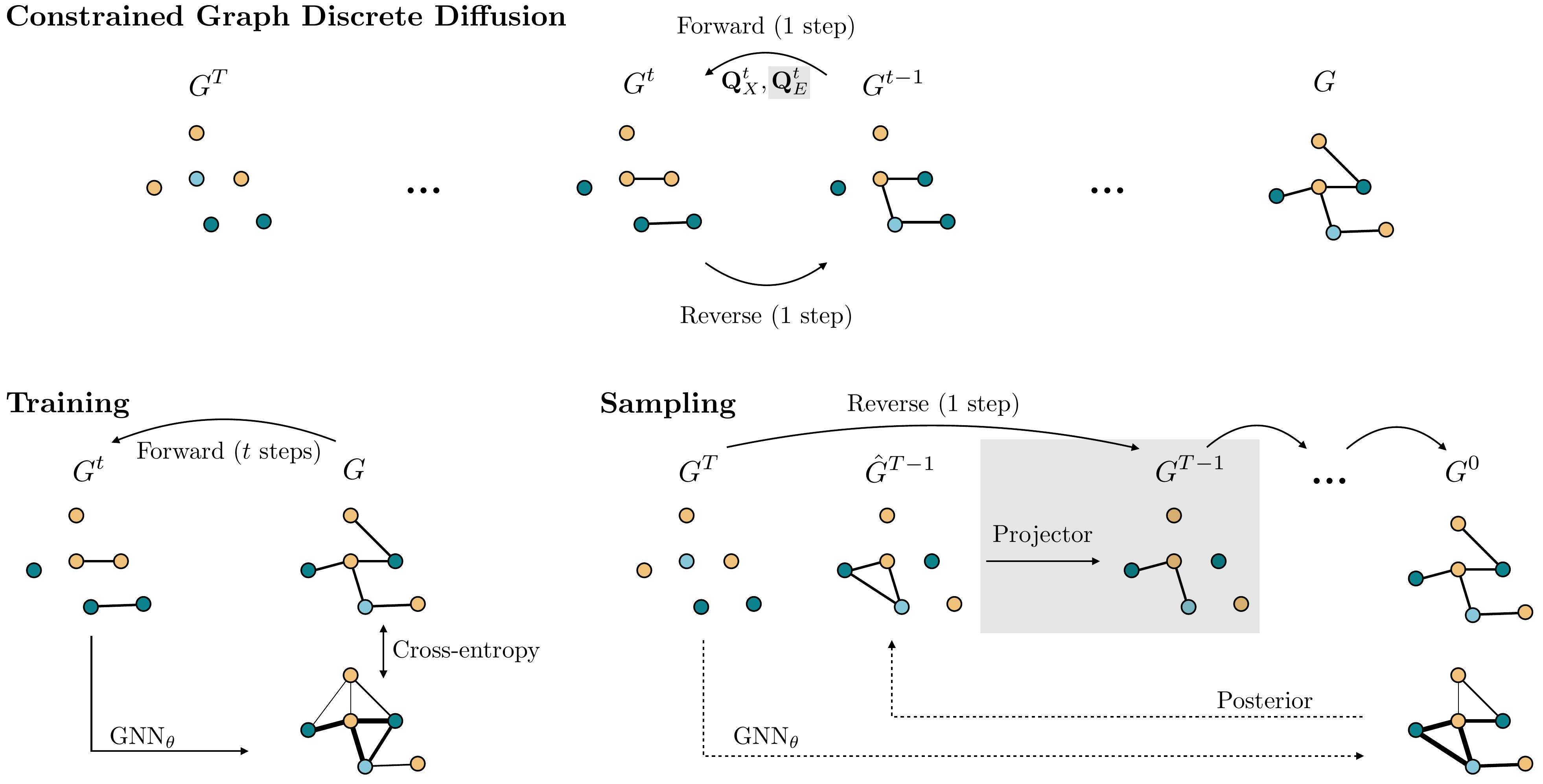}
    \vspace{-10pt}
    \caption{Constrained graph discrete diffusion framework. The forward process consists of an edge deletion process driven by the edge-absorbing noise model, while the node types may switch according to the marginal noise model. At sampling time, the projector operator ensures that sampled graphs remain within the constrained domain throughout the entire reverse process. In the illustrated example, the constrained domain consists exclusively of graphs with no cycles. We highlight in gray the components responsible for preserving the constraining property.}
    \label{fig:constrained_graph_diff}
\end{figure*}

In this paper, we propose ConStruct, a constrained graph discrete diffusion framework that induces specific structural properties in generative models. 
Our focus lies on a broad family of structural properties that hold upon edge deletion, including graph planarity or absence of cycles or triangles, for example.
ConStruct operates within graph discrete diffusion, where both node and edge types lie in discrete state-spaces~\cite{vignac2022digress,haefeli2022diffusion,qin2023sparse,chen2023efficient}. Notably, ConStruct is designed to preserve both the forward and reverse processes of the diffusion model within distribution with respect to a specified structural property. 
To accomplish this, we introduce two main components: an edge absorbing noise model and an efficient projector of the target property. The former casts the forward process as an edge deletion process and the reverse process as an edge insertion process. 
Simultaneously, the projector ensures that the inserted edges in the reverse process, predicted by a trained graph neural network, do not violate the structural property constraints. 
We theoretically ground the projector design by proving that it can retrieve the optimal graph under a graph edit distance setting. Additionally, we further enhance its efficiency by leveraging incremental constraint satisfaction algorithms, as opposed to their full graph versions, and a blocking edge hash table to avoid duplicate constraint property satisfaction checks. These two components enable a reduction in computational redundancy throughout the reverse process.
\looseness=-1

We empirically validate the benefit of promoting the match of distributions between the training and generative processes in terms of sample quality and constraint satisfaction on a set of benchmark datasets, outperforming unconstrained methods. 
We demonstrate the flexibility of ConStruct by testing it with three distinct structural properties constraints: graph planarity, acyclicity and lobster components.
To further illustrate the utility of ConStruct to real-world applications, we evaluate the performance of our model in generating biologically meaningful cell-cell interactions, represented through planar cell graphs derived from digital pathology data.
We focus on the generation of simple yet medically significant tertiary lymphoid structures~\cite{lee2015tertiary,dieu2016tertiary,munoz2020tertiary,helmink2020b,schaadt2020graph}.
Our experiments demonstrate a significant improvement in cell graph generation with ConStruct compared to unconstrained methods %
~\cite{madeira2023tertiary}, notably achieving an increase of up to 71.1 percentage points in terms of cell graph validity. 
These results open new venues for innovative data augmentation techniques and novel instance discovery, addressing a key challenge in digital pathology and real-world applications in general.\footnote{Our code and data are available at \url{https://github.com/manuelmlmadeira/ConStruct}.}
\looseness=-1

\section{Related Work}
\label{sec:related_work}

By decomposing the graph generation task into multiple denoising steps, graph diffusion models have gained prominence due to their superior generative performance in the class of methods that predict the full adjacency matrix at once (e.g., VAEs~\cite{kipf2016variational,simonovsky2018graphvae,vignac2021top,jin2018junction}, GANs~\cite{de2018molgan,krawczuk2020gg,martinkus2022spectre}, and normalizing flows~\cite{liu2019graph,madhawa2019graphnvp,lippe2020categorical,luo2021graphdf}). Diverse diffusion formulations have emerged to address various challenges in the graph setting, encompassing score-based approaches~\cite{niu2020permutation, jo2022score, yan2023swingnn} and discrete diffusion~\cite{vignac2022digress, haefeli2022diffusion, qin2023sparse}.
They have also been employed as intermediate steps in specific generative schemes, such as hierarchical generation through iterative local expansion~\cite{bergmeister2023efficient}. 

The explicit incorporation of structural information (beyond local biases typical of GNNs) has been shown to be an important prior for enhancing the expressiveness of one-shot graph generative models. For example, in the GAN setting, SPECTRE~\cite{martinkus2022spectre} conditions on graph spectral properties to capture global structural characteristics and achieve improved generative performance.
Graph diffusion models are similarly amenable to conditioning techniques~\cite{vignac2022digress, huang2023conditional}, which, despite enabling the guidance of the generation process towards graphs with desired properties, do not guarantee the satisfaction of such properties.
In contrast, autoregressive models can ensure constraint satisfaction through validity checks at each iteration, effectively addressing this challenge. Although graph diffusion models can leverage formulations that are invariant to permutations, thus avoiding the sensitivity to node ordering that characterizes autoregressive approaches~\cite{you2018graphrnn, liao2019efficient, dai2020scalable}, they still lag behind in ensuring constraint satisfaction.

Previous graph diffusion approaches to address this challenge can be categorized according to the nature of the state spaces they assume. 
In the continuous case, aligned with successful outcomes in other data modalities~\cite{christopher2024projected}, PRODIGY~\cite{sharma2023plug} offers efficient guidance for pre-trained models by relaxing adjacency matrices and categorical node features into continuous spaces, subsequently finding low-overhead projections onto the constraint-satisfying set at each reverse step. This approach can impose structural and molecular properties for which closed-form projections can be derived. However, it does not guarantee constraint satisfaction, facing a trade-off between performance and constraint satisfaction due to mismatched training and sampling distributions. This challenge arises from the continuous relaxation approach, which, while effective within the plug-and-play controllable diffusion framework, imposes an implicit ordering between states that can yield suboptimal graph representations when remapping to the inherently discrete graph space. Additionally, the proposed projection operators cannot be derived for some combinatorial constraints over the graph structure that are frequently encountered in real-world scenarios, such as planarity and acyclicity.

Then, in discrete state-spaces, EDGE~\cite{chen2023efficient} leverages a node-wise maximum degree hard constraint due to its degree guidance but it is limited to this particular property. 
Similarly, GraphARM~\cite{kong2023autoregressive}, a graph autoregressive diffusion model, allows for constraint incorporation in the autoregressive manner. However, this method requires learning a node ordering, a task that is at least as complex as isomorphism testing. 
Therefore, to the best of our knowledge, ConStruct consists of the first constrained graph discrete diffusion framework covering a broad class of structural (potentially combinatorial) constraints.

\section{Constrained Graph Diffusion Models}
\label{sec:method}

\begin{figure*}[t]
    \centering
    \includegraphics[width=\linewidth]{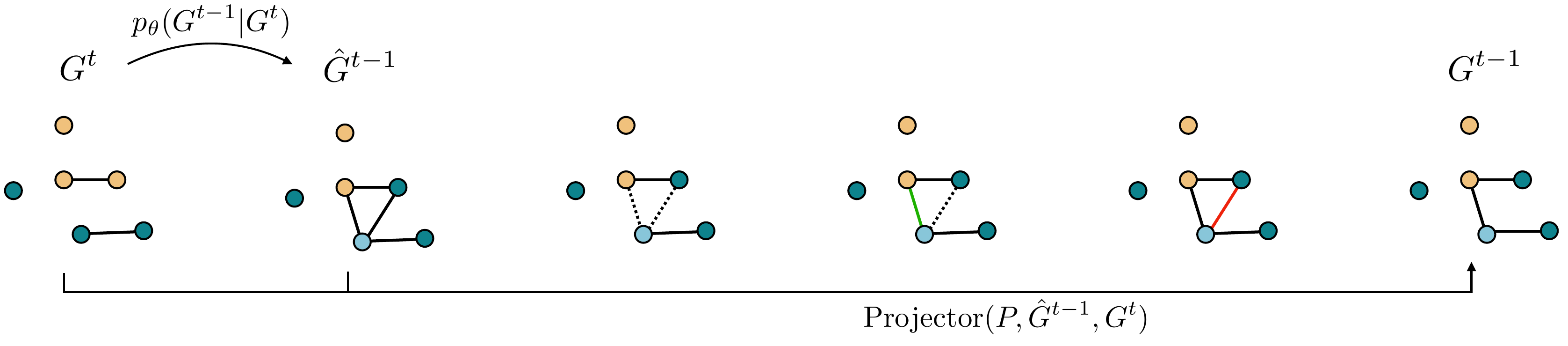}
    \vspace{-10pt}
    \caption{Projector operator. 
    At each iteration, we start by sampling a candidate graph $\hat{G}^{t-1}$ from the distribution $p_\theta(G^{t-1} | G^t)$ provided by the diffusion model. Then, the projector step inserts in an uniformly random manner the candidate edges, discarding those that violate the target property, $P$, i.e., acyclicity in this illustration. In the end of the reverse step, we find a graph $G^{t-1}$ that is guaranteed to comply with such property.}
    \label{fig:projector}
\end{figure*}

We now introduce our framework on generative modelling for structurally constrained graphs. We first present the graph diffusion framework and then focus on the new components for constrained graph generation.

\subsection{Graph Diffusion Models}

We first introduce the mathematical notation adopted in the paper.

\paragraph{Notation} We define a graph as $G= (X, E)$, where $X$ and $E$ denote the sets of attributed nodes and edges, respectively. 
We consider the node and edge features to be categorical and to lie in the spaces $\mathcal{X}$ and $\mathcal{E}$ of cardinalities $b$ and $c$, respectively.
Thus, $x_i$ denotes the node attribute of node $i$ and $e_{ij}$ the edge attribute of the edge between nodes $i$ and $j$. 
With $\mathcal{H}^k = \{ \mathbf{v} = (v_1, \ldots, v_k) \mid v_i \in \{0,1\}, \sum_{i=1}^{k} v_i = 1 \}$, their corresponding one-hot encodings are then $\mathbf{x}_{i} \in \mathcal{H}^b$ and $\mathbf{e}_{ij} \in \mathcal{H}^{c+1}$, since we consider the absence of edge between two nodes as an edge type (``no edge'' type).
These are stacked in tensors $\mathbf{X} \in \{0, 1\}^{n \times b}$ and $\mathbf{E} \in \{0, 1\}^{n \times n \times (c+1)}$, respectively.
So, equivalently to the set notation, we also have $G = (\mathbf{X}, \mathbf{E})$. Additionally, we define the probability simplex, $\Delta^k = \lbrace \left(\lambda_0, \lambda_1, \ldots, \lambda_{k-1}\right) \in \mathbb{R}^{k} \mid \lambda_i \geq 0 \text{ for all } i,  \text{ }  \sum_{i=0}^{k-1} \lambda_i = 1 \rbrace$.

We then recall the core components of generative models based on graph diffusion, 
a state-of-the-art framework in several applications~\cite{jo2022score,vignac2022digress}. 
Graph diffusion models are composed of two main processes: a \textit{forward} and a \textit{reverse} one. 
The forward process consists of a Markovian noise model, $q$, with $T$ timesteps, that allows to progressively perturb a clean graph $G$ to its noisy version $G^t$, where $t \in \{1, \ldots, T\}$. This process is typically modelled independently for nodes and edges. 
The reverse process consists of the opposite development, starting from a fully noisy, $G^T$, and iteratively refining it until a new clean sample is generated. 
This process uses a denoising neural network (NN), the only learnable part of the diffusion model. 
The NN is trained to predict a probability distribution over node and edge types of the clean graph $G$.
After its training, we combine the NN prediction with the posterior term of the forward process to find the distribution $p_\theta({G}^{t-1}|G^t)$, from where we sample a one-step denoised graph. 
The reverse process results from applying this sampling procedure iteratively until we arrive to a fresh new clean graph $G^0$. Both processes are illustrated in \Cref{fig:constrained_graph_diff}.

In some tasks, we are interested in generating instances of a specific class of graphs that conform to well-defined structural properties and align with the training distribution. 
Importantly, these structural properties do not fully define the underlying distribution; rather, the model must still learn this distribution within the specific class of graphs. 
This approach becomes particularly crucial in scenarios where we possess domain knowledge but lack sufficient data for an unconstrained model to capture strict dependencies, allowing us to reduce the task's hypothesis space. 
This need also applies to many real-world applications, where generated graphs become irrelevant if they do not meet certain conditions, as they may be infeasible or lack physical meaning (e.g., in drug design).
Despite the remarkable expressivity of graph diffusion models, incorporating such constraints into their generative process remains a largely unsolved problem.

\subsection{Constrained Graph Discrete Diffusion Models}\label{subsec:constraining_graph_diffusion}

We now introduce ConStruct, a framework that efficiently constrains graph diffusion models based on structural properties. Constraining graph generation implies guaranteeing that such target structural properties are not violated in the generated graphs.
We build on graph discrete diffusion due to its intrinsic capability to effectively preserve fundamental structural properties (e.g., sparsity) of graphs throughout the generative process~\cite{qin2023sparse,vignac2022digress,haefeli2022diffusion}. 

A successful way of imposing constraints to diffusion models in continuous state-spaces consists of constraining the domain where the forward and reverse processes occur~\cite{lou2023reflected,fishman2023diffusion,fishman2023metropolis}. 
However, constraining domains over graphs, which are inherently discrete, poses a challenging combinatorial problem.
Instead, we propose to constrain the graph generative process with specific structural properties. 
In our approach, we explore the broad class of graph structural properties that hold under edge deletion, namely \textit{edge-deletion invariant} properties.
\begin{definition}
    \label{def:edge_deletion_invariance}
    \emph{(Edge-Deletion Invariance)}
    Let \( P \) be a boolean-valued application defined on graphs, referred to as a property. 
    \( P \) is said to be edge-deletion invariant if, for any graph \(G\) and any subset of edges \( \Tilde{E} \subset E \), it satisfies:

    \centering
    \( P(G) = \text{True} \implies P(G') = \text{True}, \quad \text{with } G' = (X, E \setminus \Tilde{E}). \)
\end{definition}
Many properties that are observed in real-world scenarios are edge-deletion invariant. For example, graph planarity is observed in road networks~\cite{xie2009topological}, chip design~\cite{bhatt1984framework}, biochemistry~\cite{simmons1981synthesis} or digital pathology~\cite{jaume2021histocartography}. In evolutionary biology~\cite{gregory2008understanding} or epidemiology~\cite{seibold2016modeling}, we find graphs that must not have cycles. 
Additionally, if we consider the extensions of discrete diffusion to directed graphs (e.g., \citet{asthana2024multi}), there are several domains where graph acyclicity is critical: neural architecture search, bayesian network structure learning~\cite{zhang2019d}, or causal discovery~\cite{sanchez2022diffusion}.
Also, maximum degree constraints are quite common in the design of contact networks~\cite{jang2019evaluating,adhikari2019fast}. 
Finally, it is worth noting that \Cref{def:edge_deletion_invariance} is extendable to continuous graph-level features through a binary decision (e.g., by thresholding continuous values into boolean values). 

Provided that the training graphs satisfy the target structural properties, ConStruct enforces these properties in the generated graphs by relying on two main components: an edge-absorbing noise model and a projector. These two components are described in detail below.

\subsection{Edge-deletion Aware Forward Process}

Our goal is to design a forward process that yields noisy graphs that necessarily satisfy the target property. This process is typically modelled using transition matrices. Thus, $\left[ \mathbf{Q}^t_X \right]_{ij} = q(x^t = j | x^{t-1}=i)$ corresponds to the probability of a node transitioning from type $i$ to type $j$. Similarly, for edges we have $\left[ \mathbf{Q}^t_E \right]_{ij} = q(e^t = j | e^{t-1}=i)$. These are applied independently to each node and edge, yielding $q(G^t|G^{t-1}) = (\mathbf{X}^{t-1}\mathbf{Q}^t_X, \mathbf{E}^{t-1}\mathbf{Q}^t_E)$. Consequently, we can directly jump $t$ timesteps in the forward step through the categorical distribution given by:
\begin{equation}
    q(G^t|G) = (\mathbf{X}\bar{\mathbf{Q}}^t_X, \mathbf{E}\bar{\mathbf{Q}}^t_E),
\end{equation}
with $\bar{\mathbf{Q}}^t_X = \mathbf{Q}^1_X \ldots \mathbf{Q}^t_X$ and $\bar{\mathbf{Q}}^t_E = \mathbf{Q}^1_E \ldots \mathbf{Q}^t_E$. Noising a graph amounts to sampling a graph from this distribution.
For the nodes, we use the marginal noise model~\cite{vignac2022digress} due to its great empirical performance. 
Importantly, to preserve the constraining structural property throughout the forward process, and, consequently, throughout the training algorithm (see \Cref{alg:training}, in \Cref{app:training_algo}), we propose the utilization of an \textit{edge-absorbing noise model}~\cite{austin2021structured}. This noise model forces each edge to either remain in the same state or to transition to an absorbing state (which we define to be the no-edge state) throughout the forward process. 
This edge noise model poses the forward as an edge deletion process, converging to a limit distribution that yields graphs without edges.
Therefore, we obtain the following transition matrices:
\begin{align}
    & \mathbf{Q}_X^t = \alpha^t \mathbf{I} + (1 - \alpha^t) \mathbf{1}_b \mathbf{m}_X' \quad \text{and} \notag \\
    & \mathbf{Q}_E^t = \alpha_{\operatorname{ABS}}^t \mathbf{I} + (1 - \alpha_{\operatorname{ABS}}^t) \mathbf{1}_c \mathbf{e}_E',
\end{align}
where $\alpha^t$ and $\alpha_{\operatorname{ABS}}^t$ transition from 1 to 0 with $t$ according to the popular cosine scheduling~\cite{nichol2021improved} and the mutual-information-based noise schedule ($\alpha^t = 1 - (T + t +1)^{-1}$)~\cite{austin2021structured}, respectively. The vectors $\mathbf{1}_b \in \{1\}^b$ and $\mathbf{1}_c \in \{1\}^{c+1}$ are filled with ones, and $\mathbf{m}_X' \in \Delta^b$ and $\mathbf{e}_E' \in \mathcal{H}^{c+1}$ are row vectors filled with the marginal node distribution and the one-hot encoding of the no-edge state, respectively. 

\subsection{Structurally-Constrained Reverse Process}\label{sec:method_reverse}

The reverse process of the diffusion model is fully characterized by the distribution $p_\theta(G^{t-1}|G^t)$. We detail how to build it from the predictions of a denoising graph neural network, GNN$_\theta$, and the posterior term of the forward process in \Cref{app:parameterization_reverse}. Importantly, the latter imposes the reverse process as an edge insertion process, yet does not necessarily ensure the target structural property. To handle that, we propose an intermediate procedure for each reverse step. 
Provided a noisy graph $G^t$ at timestep $t$, we do not accept directly $\hat{G}^{t-1}$, sampled from $p_\theta(G^{t-1}|G^t)$, as the one step denoised graph. Instead, we iteratively insert the newly added edges to $\hat{G}^{t-1}$ in a random order, discarding the ones that lead to the violation of the target property. 
Therefore, we only have $G^{t-1} = \hat{G}^{t-1}$ if none of the candidate edges breaks the target property. 
We refer to the operator that outputs $G^{t-1}$ provided $\hat{G}^{t-1}$ and $G^{t}$ by discarding the violating edges as the \textit{projector}. 
Its implementation is illustrated in \Cref{fig:projector} and described in \Cref{alg:projector}, in \Cref{app:sampling_and_projector_algos}. 
Importantly, this procedure merely interferes with the sampling algorithm (refer to \Cref{alg:sampling}, in \Cref{app:sampling_and_projector_algos}) and ensures that the diffusion model training remains unaffected, fully preserving its efficiency.

Despite its algorithmic simplicity, the design of our projector is theoretically motivated by the result below. We denote the graph edit distance~\cite{sanfeliu1983distance} with uniform cost between two graphs $G_1$ and $G_2$ by $\operatorname{GED}(G_1, G_2)$ (see \Cref{def:graph_edit_distance}).

\begin{theorem}\label{th:simplified_construct_attains_optimal}
    \emph{(Simplified)} 
    Let $\mathcal{G}^{t-1} = \operatorname{Projector}(P, \hat{G}^{t-1}, G^t)$ be the set of all possible one-step denoised graphs outputted by ConStruct.
    If we define $G^*$ as any optimal solution of:
    \begin{equation} \label{eq:projector_contains_argmin_main}
        \operatorname{min}_{G \in \mathcal{C}} \operatorname{GED}(\hat{G}^{t-1}, G),
    \end{equation}
where $\mathcal{C}= \{ G \in \mathcal{G}| P(G) = True, G \supset G^t \} $ and $\mathcal{G}$ is the set of all unattributed graphs, then $G^*$ can be recovered by our projector, i.e., $G^* \in \mathcal{G}^{t-1}$.
\end{theorem}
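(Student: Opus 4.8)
The plan is to turn the optimization in \eqref{eq:projector_contains_argmin_main} into a purely combinatorial statement about edge sets, identify exactly which edge sets are optimal, and then construct one random insertion order under which the projector returns precisely $G^*$. Since all graphs involved are unattributed and share the same (fixed) vertex set, $\operatorname{GED}(\hat G^{t-1}, G)$ with uniform cost is just the size of the symmetric difference of their edge sets, $|E(\hat G^{t-1}) \triangle E(G)|$. Write $A = E(G^t)$ and $B = E(\hat G^{t-1})$; because the reverse process only inserts edges, $A \subseteq B$, and the set of candidate edges the projector iterates over is exactly $B \setminus A$. Every $G \in \mathcal{C}$ has $E(G) \supseteq A$, so the only degree of freedom is which extra edges to include; note also that $G^t \in \mathcal{C}$ (using $P(G^t) = \text{True}$, guaranteed by the edge-absorbing forward process), so the minimum is attained.

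First I would pin down the optimizers. I claim any optimal $G^*$ satisfies $A \subseteq E(G^*) \subseteq B$. The lower inclusion is $G^* \supset G^t$. For the upper one, suppose $E(G^*)$ contained an edge $f \notin B$; then $f \notin A$ as well, and deleting $f$ gives $G' = (X, E(G^*) \setminus \{f\})$ with $E(G') \supseteq A$ and, by \Cref{def:edge_deletion_invariance}, $P(G') = \text{True}$, while $|B \triangle E(G')| = |B \triangle E(G^*)| - 1$, contradicting optimality. Hence $E(G^*) = A \cup S^*$ for some $S^* \subseteq B \setminus A$, and $|B \triangle E(G^*)| = |(B \setminus A) \setminus S^*|$, so $G^*$ is optimal \emph{if and only if} $S^*$ is a feasible subset of candidate edges — meaning $P(A \cup S^*) = \text{True}$ — of maximum cardinality.

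Next I would characterize the projector's behaviour. Fix any ordering $\pi$ of $B \setminus A$ and let $S_\pi \subseteq B \setminus A$ be the accepted edges. Since $G^t$ satisfies $P$ and an edge is admitted only if the current graph still satisfies $P$, the output $A \cup S_\pi$ lies in $\mathcal{C}$; moreover $S_\pi$ is \emph{inclusion-maximal} feasible, because if a candidate $e$ was rejected then at that moment $A \cup S' \cup \{e\}$ violated $P$ for some $S' \subseteq S_\pi$, and violation propagates upward (the contrapositive of \Cref{def:edge_deletion_invariance}), so $A \cup S_\pi \cup \{e\}$ also violates $P$. Now take any optimal $G^*$ with candidate set $S^*$ and choose $\pi$ to list the edges of $S^*$ first. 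When $e \in S^*$ is processed, the accepted-so-far set is a subset of $S^*$, hence $A \cup (\text{accepted}) \cup \{e\} \subseteq A \cup S^*$, which satisfies $P$; by edge-deletion invariance so does this subgraph, so $e$ is accepted — thus all of $S^*$ gets accepted. Afterwards no remaining candidate can be accepted, since that would yield a feasible subset strictly larger than $S^*$, contradicting maximality. Therefore the projector returns $A \cup S^* = E(G^*)$, i.e. $G^* \in \mathcal{G}^{t-1}$.

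I expect the main obstacle to be the careful two-directional use of \Cref{def:edge_deletion_invariance}: the ``downward'' direction (a feasible graph has feasible subgraphs), which forces acceptance of every edge of $S^*$ when it is scheduled early, and the ``upward'' direction (an infeasible graph has infeasible supergraphs), which certifies maximality of $S_\pi$ and rules out any later acceptance beyond $S^*$; coupled with the pruning argument that an optimal $G^*$ never uses edges outside $B$. Once these are in place the remainder is a standard greedy/exchange-style maximality argument. (The non-simplified version should follow the same outline, with node-attribute substitution costs added to the GED bookkeeping but leaving the edge-insertion structure unchanged.)
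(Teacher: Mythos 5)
Your proposal is correct, and its core is the same as the paper's: both arguments reduce the problem to showing that an optimal $G^*$ adds to $G^t$ only edges proposed by the diffusion model (so $G^t \subset G^* \subset \hat G^{t-1}$), that optimality means adding a maximum number of such candidate edges subject to $P$, and that the projector's random insertion order can then realize $G^*$. The difference is in execution: the paper proves the first part by an induction along an optimal $\operatorname{GED}$ edit path (each elementary edit must be an edge deletion whose intermediate graph still contains $G^t$) and then simply asserts that the projector can produce such a $G^*$, whereas you bypass the edit-path induction by writing $\operatorname{GED}$ as the size of the edge-set symmetric difference, prune edges outside $\hat G^{t-1}$ via edge-deletion invariance, and — most usefully — make the realizability step explicit by scheduling the edges of $S^* = E(G^*)\setminus E(G^t)$ first and checking acceptance of each (downward direction of \Cref{def:edge_deletion_invariance}) and rejection of everything afterwards (maximum cardinality of $S^*$, plus upward propagation of violation). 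That last ordering argument is precisely the step the paper leaves implicit, so your write-up is, if anything, more complete; your inclusion-maximality observation for arbitrary orders also anticipates the paper's follow-up result for acyclicity. One shared caveat: since $\operatorname{GED}$ and the relation $G \supset G^t$ are defined up to isomorphism, equating $\operatorname{GED}(\hat G^{t-1},G)$ with the symmetric difference under the identity node correspondence is a mild simplification — but the paper's own computations make exactly the same identification, so this is not a gap specific to your argument.
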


The relationship between the projector, the candidate element $\hat{G}^{t-1}$ (the instance we aim to project onto a constrained set) and the specified target property $P$ (defining the constrained set) can be analogized to the conventional projection operator in continuous state spaces. However, while projection in continuous spaces is typically straightforward, this is not the case for discrete state spaces, where, for instance, there often lacks an inherent notion of order between different states.
In particular, projecting into an arbitrary subclass of graphs is a complex general combinatorial problem to which there is no efficient solution. For example, finding the maximum planar subgraph of a given graph is NP-hard~\cite{chimani2019exact}. 
Therefore, the novelty of our method is introduced by considering an additional dependency on ${G}^{t}$: to make such problem efficiently approachable, we use the previous iterate, ${G}^{t}$, which we know by construction that verifies the target property, as a reference. This information is added into the optimization problem through the formulation of the set $C$. Importantly, this formulation is consistent with the designed noise for the diffusion model, as it complies with the reverse process as an edge insertion process (i.e., $G^t \subset G^{t-1}$). The complete version of this theorem and extensions for specific constraints can be found in \Cref{app:theoretical_analysis}. 

Importantly, the utilization of the projector breaks the independent sampling of new edges since the insertion of an edge now depends on the order by which we insert them at a given timestep. This sacrifices the tractability of an evidence lower bound for the diffusion model's likelihood. In exchange, it conserves all the sampled graphs throughout the reverse process in the constrained domain. Therefore, the edge-absorbing noise model and the projector jointly ensure that the graph distributions of the training and sampling procedures match, within the predefined constrained graph domain. With these blocks in place, we are now able to both train and sample from the constrained diffusion model.
\looseness=-1

\subsection{Implementation Improvements}\label{subsec:efficiency_projector}

 We further enhance the efficiency of the sampling algorithm with the two improvements detailed below. 

\paragraph{Blocking Edge Hash Table} Throughout the reverse process, we keep in memory the edges that have already been rejected in previous timesteps (higher $t$). Therefore, once an edge is rejected, it is blocked throughout the rest of the reverse process. This prevents the repetition of redundant constraint satisfaction checks since we know \textit{a priori} that inserting a previously rejected edge would lead to constraint violation. We store this information in a hash table, where both the lookup and update operations are $O(1)$, causing minor overhead. Since we only perform the validity check, of complexity $O(V)$, once for each edge - if it is a candidate edge, we either insert it or block it -, it incurs a $O(n^2 V)$ overhead throughout the full reverse process. Note that we lose any dependency on the number of timesteps of the reverse process, which is typically the limiting factor in diffusion models efficiency due to its required high values ($T \approx 10^3$).

\paragraph{Incremental Algorithms} Our reverse process consists solely of edge insertion steps, making it well-suited for the application of incremental algorithms. These algorithms efficiently check whether newly added edges preserve the target property by updating and checking smartly designed representations of the graph. This approach contrasts with full graph counterparts, leading to significant efficiency gains by reducing redundant computation.
For instance, while the best full planar testing algorithm is $O(n)$~\cite{hopcroft1974efficient}, its fastest known incremental test has amortized running time of $O(\alpha(\mathfrak{q},n))$, where $\mathfrak{q}$ is the total number of operations (edge queries and insertions), and $\alpha$ denotes the inverse-Ackermann function~\cite{la1994alpha} (often considered ``almost constant'' complexity). More details for different properties in \Cref{app:incremental_algos}. 

At each reverse step, the denoising network makes predictions for all nodes and pairs of nodes. This results in $O(n^2)$ predictions per step. Thus, the complexity of the sampling algorithm of the underlying discrete diffusion model is $O(n^2T)$. In addition, the complexity overhead imposed by the projector is $O(NV)$. Here, $V$ represents the complexity of the property satisfaction algorithm and $N$ is the total number of times this algorithm is applied throughout the reverse process. So, in total, we have $O(n^2T + NV)$. Our analysis in \Cref{app:incremental_algos} shows that incremental property satisfaction algorithms have notably low complexity. For instance, in cases like acyclicity, lobster components, and maximum degree, we have $V = O(|E_\textrm{added}|)$. Since the projector adds one edge at a time, we have $V=O(1)$. Additionally, since the blocking edge hash table limits us to perform at most one property satisfaction check per newly proposed edge (either we have never tested it or it is already blocked), $N$ corresponds to the total number of different edges proposed by the diffusion model across the whole reverse process. A reasonable assumption is that the model proposes $N=O(|E|)$ edges throughout the reverse process, with $|E|$ referring to the number of edges of the clean graph. This is for example true if the model is well trained and predicts the correct graph. Most families of graphs are sparse, meaning that $O(|E|/n^2) \to 0$ as $n \to \infty$. For example, planar and tree graphs can be shown to satisfy $|E|/n^2 = O(1/n)$. Thus, we necessarily have $N\leq n^2$. For these reasons, we directly find $O(NV) \ll O(n^2T)$, highlighting the minimal overhead imposed by the projector compared to the discrete diffusion model. This explains the low runtime overhead observed for ConStruct, as detailed in \Cref{app:sampling_times} (9\% for graphs of the tested size). Therefore, we can conclude that asymptotically $O(n^2T + NV) = O(n^2T)$, i.e., the projector overhead becomes increasingly negligible relative to the diffusion algorithm itself as the graph size increases, highlighting the scalability of our method.

\section{Experiments}
\label{sec:experiments}

In this section, we first explore the flexibility of ConStruct to accommodate different constraints in synthetic unattributed graph datasets. Then, we test its applicability to a real-world scenario with digital pathology data. 

\begin{table*}[ht]
    \caption{Graph generation performance on synthetic graphs. 
    We present the results over five sampling runs of 100 generated graphs each, in the format mean ± standard error of the mean.
    The remaining values are retrieved from \citet{bergmeister2023efficient} for the planar and tree datasets, and from \citet{dai2020scalable} and \citet{jang2023simple} for the lobster dataset.
    For the average ratio computation, we follow~\cite{bergmeister2023efficient} and do not consider the metrics whose train set MMD is 0.
    We recompute the train set MMDs according to
    our splits but, for fairness, in the retrieved methods the average ratio metric is not recomputed.
    }
    \label{tab:synthetic_graphs}
    \centering
    \resizebox{\linewidth}{!}{
    \begin{tabular}{l*{5}{S}g*{3}{S}g*{1}{S}}
        \toprule
        & \multicolumn{11}{c}{Planar Dataset} \\
        \cmidrule(lr){2-12}
        {Model} & {Deg.\,$\downarrow$} & {Clus.\,$\downarrow$} & {Orbit\, $\downarrow$} & {Spec.\,$\downarrow$} & {Wavelet\, $\downarrow$}  & {Ratio\,$\downarrow$} & {Valid\,$\uparrow$} & {Unique\, $\uparrow$} & {Novel\,$\uparrow$} & {V.U.N.\, $\uparrow$} & {Property \,$\uparrow$} \\
        \midrule
        {Train set} & {0.0002} & {0.0310} & {0.0005} & {0.0038} & {0.0012} & {1.0}   & {100}  & {100}  & {0.0}   & {0.0} & {100}  \\
        \midrule
        {GraphRNN \citep{you2018graphrnn}}         & {0.0049} & {0.2779} & {1.2543} & {0.0459} & {0.1034} & {490.2} & {0.0}   & {100}  & {100}  & {0.0} & {---} \\
        {GRAN \citep{liao2019efficient}}                & {0.0007} & {0.0426} & {0.0009} & {0.0075} & {0.0019} & {2.0}   & {97.5}  & {85.0} & {2.5}  & {0.0} & {---} \\
        {SPECTRE \citep{martinkus2022spectre}}     & {0.0005} & {0.0785} & {0.0012} & {0.0112} & {0.0059} & {3.0}   & {25.0}  & {100}  & {100}  & {25.0} & {---} \\
        {DiGress \citep{vignac2022digress}}        & {0.0007} & {0.0780} & {0.0079} & {0.0098} & {0.0031} & {5.1}   & {77.5}  & {100}  & {100}  & {77.5} & {---} \\
        {EDGE \citep{chen2023efficient}} & {0.0761} & {0.3229} & {0.7737} & {0.0957} & {0.3627} & {431.4} & {0.0}   & {100}  & {100}  & {0.0} & {---} \\
        {BwR \citep{diamant2023improving}}      & {0.0231} & {0.2596} & {0.5473} & {0.0444} & {0.1314} & {251.9} & {0.0}   & {100}  & {100}  & {0.0}  & {---} \\
        {BiGG \citep{dai2020scalable}}  & {0.0007} & {0.0570} & {0.0367} & {0.0105} & {0.0052} & {16.0}  & {62.5}  & {85.0} & {42.5} & {5.0} & {---} \\
        {GraphGen \citep{goyal2020graphgen}}  & {0.0328} & {0.2106} & {0.4236} &{0.0430} &{0.0989} &{210.3} &{7.5} &{100} &{100} &{7.5}& {---} \\
        {HSpectre (one-shot) \citep{bergmeister2023efficient}}   & {0.0003} & {0.0245} & {0.0006} & {0.0104} & {0.0030} & {1.7}   & {67.5}  & {100}  & {100}  & {67.5} & {---} \\
        {HSpectre \citep{bergmeister2023efficient}}              & {0.0005} & {0.0626} & {0.0017} & {0.0075} & {0.0013} & {2.1}   & {95.0}  & {100}  & {100}  & {95.0} & {---} \\
        {DiGress+} & {0.0008 \scriptsize{±0.0001}} & {0.0410 \scriptsize{±0.0033}} & {0.0048 \scriptsize{±0.0004}} & {0.0056 \scriptsize{±0.0004}} & {0.0020 \scriptsize{±0.0002}} & {3.6 \scriptsize{±0.2}} & {76.4 \scriptsize{±1.3}} & {100.0 \scriptsize{±0.0}} & {100.0 \scriptsize{±0.0}} & {76.4 \scriptsize{±1.3}} & {76.4 \scriptsize{±1.3}}
 \\
        \midrule
        \rowcolor{Gray}
        {ConStruct} & {0.0003 \scriptsize{±0.0001}} & {0.0403 \scriptsize{±0.0047}} & {0.0004 \scriptsize{±0.0001}} & {0.0053 \scriptsize{±0.0004}} & {0.0009 \scriptsize{±0.0001}} & {\textbf{1.1} \scriptsize{±0.1}} & {100.0 \scriptsize{±0.0}} & {100.0 \scriptsize{±0.0}} & {100.0 \scriptsize{±0.0}} & {\textbf{100.0} \scriptsize{±0.0}} & {100.0 \scriptsize{±0.0}} \\

        \midrule
        & \multicolumn{11}{c}{Tree Dataset} \\
        \cmidrule(lr){2-12}
        {Train set}      & {0.0001} & {0.0000} & {0.0000} & {0.0075} & {0.0030} & {1.0}   & {100}  & {100}  & {0.0}   & {0.0} & {100} \\
        \midrule
        {GRAN \citep{liao2019efficient}}    & {0.1884} & {0.0080} & {0.0199} & {0.2751} & {0.3274} & {607.0} & {0.0}   & {100}  & {100}  & {0.0} & {---} \\
        {DiGress \citep{vignac2022digress}}                    & {0.0002} & {0.0000} & {0.0000} & {0.0113} & {0.0043} & {1.6}   & {90.0}  & {100}  & {100} & {90.0}  & {---} \\
        {EDGE \citep{chen2023efficient}}   & {0.2678} & {0.0000} & {0.7357} & {0.2247} & {0.4230} & {850.7} & {0.0}   & {7.5}   & {100}  & {0.0} & {---} \\
        {BwR \citep{diamant2023improving}}    & {0.0016} & {0.1239} & {0.0003} & {0.0480} & {0.0388} & {11.4}  & {0.0}   & {100}  & {100}  & {0.0} & {---} \\
        {BiGG \citep{dai2020scalable}}                  & {0.0014} & {0.0000} & {0.0000} & {0.0119} & {0.0058} & {5.2}   & {100}   & {87.5} & {50.0} & {75.0} & {---} \\
        {GraphGen \citep{goyal2020graphgen}}                  & {0.0105} & {0.0000} & {0.0000} & {0.0153} & {0.0122} & {33.2}   & {95.0}   & {100} & {100} & {95.0} & {---} \\
        {HSpectre (one-shot) \citep{bergmeister2023efficient}}   & {0.0004} & {0.0000} & {0.0000} & {0.0080} & {0.0055} & {2.1}   & {82.5}  & {100}  & {100}  & {82.5} & {---} \\
        {HSpectre \citep{bergmeister2023efficient}}              & {0.0001} & {0.0000} & {0.0000} & {0.0117} & {0.0047} & {4.0}   & {100}   & {100}  & {100}  & {\textbf{100}} & {---} \\
        {DiGress+} & {0.0002 \scriptsize{±0.0001}} & {0.0000 \scriptsize{±0.0000}} & {0.0000 \scriptsize{±0.0000}} & {0.0092 \scriptsize{±0.0005}} & {0.0032 \scriptsize{±0.0001}} & {\textbf{1.3} \scriptsize{±0.2}} & {91.6 \scriptsize{±0.7}} & {100.0 \scriptsize{±0.0}} & {100.0 \scriptsize{±0.0}} & {91.6 \scriptsize{±0.7}} & {97.0 \scriptsize{±0.8}}\\
        \midrule
        \rowcolor{Gray}
        {ConStruct}  & {0.0003 \scriptsize{±0.0001}} & {0.0000 \scriptsize{±0.0000}} & {0.0000 \scriptsize{±0.0000}} & {0.0073 \scriptsize{±0.0008}} & {0.0034 \scriptsize{±0.0002}} & {1.9 \scriptsize{±0.3}} & {83.0 \scriptsize{±1.8}} & {100.0 \scriptsize{±0.0}} & {100.0 \scriptsize{±0.0}} & {83.0 \scriptsize{±1.8}} & {100.0 \scriptsize{±0.0}} \\
        
        \midrule
        
        & \multicolumn{11}{c}{Lobster Dataset} \\
        \cmidrule(lr){2-12}
        {Train set}   & {0.0002} & {0.0000} & {0.0000} & {0.0070} & {0.0070} & {1.0}   & {100}  & {100}  & {0.0}   & {0.0} & {100}  \\
        \midrule
        {GraphRNN \citep{you2018graphrnn}}        & {0.000} & {0.000} & {0.000} & {0.011} & {---} & {---}   & {100}  & {---}  & {---}  & {---} & {---} \\
        {GRAN \citep{liao2019efficient}} & {0.038} & {0.000} & {0.001} & {0.027} & {---} & {---}   & {88.0}  & {---}  & {---}  & {---} & {---}\\
        {GraphGen \citep{goyal2020graphgen}} & {0.548} & {0.040} & {0.247} & {---} & {---} & {---}   & {---}  & {---}  & {---}  & {---} & {---}\\
        {GraphGen-Redux \citep{bacciu2021graphgen}} &  { 1.189} & {1.859} & {0.885} & {---} & {---} & {---}   & {---}  & {---}  & {---}  & {---} & {---}\\
        {BiGG \citep{dai2020scalable}} & {0.000} & {0.000} & {0.000} & {0.009} & {---} & {---}   & {100}  & {---}  & {---}  & {---} & {---}\\
        {GDSS \citep{jo2022score}} & {0.117} & { 0.002} & {0.149} & {---} & {---} & {---}   & {18.2}  & {100}  & {100}  & {18.2} & {---}\\
        {BwR \citep{diamant2023improving}} & {0.316} & {0.000} & {0.247} & {---} & {---} & {---} & {100}  & {63.6}  & {100}  & {63.6} & {---} \\
        {GEEL \citep{jang2023simple}} & {0.002} & {0.000} & {0.001} & {---} & {---} & {---}   & {72.7}  & {100}  & {72.7}  & {$\leq$ 72.7 } & {---}\\
        {HGGT \citep{jang2023graph}} & {0.003} & {0.000} & {0.015} & {---} & {---} & {---}   & {---}  & {---}  & {---}  & {---} & {---}\\
        {DiGress \citep{vignac2022digress}} & {0.021} & {0.000} & {0.004} & {---} & {---} & {---}   & {54.5}  & {100}  & {100}  & {54.5} & {---}\\
        
        {DiGress+} & {0.0005 \scriptsize{±0.0001}} & {0.0000 \scriptsize{±0.0000}} & {0.0000 \scriptsize{±0.0000}} & {0.0114 \scriptsize{±0.0006}} & {0.0093 \scriptsize{±0.0005}} & {1.8 \scriptsize{±0.1}} & {79.0 \scriptsize{±1.1}} & {98.0 \scriptsize{±0.7}} & {96.6 \scriptsize{±0.6}} & {69.4 \scriptsize{±1.2}} & {76.8 \scriptsize{±1.7}} \\
        
        \midrule
        \rowcolor{Gray}
        {ConStruct}  & {0.0003 \scriptsize{±0.0001}} & {0.0000 \scriptsize{±0.0000}} & {0.0000 \scriptsize{±0.0000}} & {0.0092 \scriptsize{±0.0009}} & {0.0074 \scriptsize{±0.0004}} & {\textbf{1.3} \scriptsize{±0.2}} & {86.8 \scriptsize{±2.4}} & {98.8 \scriptsize{±0.6}} & {97.0 \scriptsize{±0.9}} & {\textbf{83.2} \scriptsize{±2.3}} & {100.0 \scriptsize{±0.0}} \\

        \bottomrule
    \end{tabular}}
\end{table*}

\subsection{Synthetic Graphs}
\label{sec:exp_synthetic_graphs}

\paragraph{Setup} We focus on three synthetic datasets with different structural properties: the \textit{planar} dataset~\cite{martinkus2022spectre}, composed of planar and connected graphs; the \textit{tree} dataset~\cite{bergmeister2023efficient}, composed of connected graphs without cycles (tree graph); and the \textit{lobster} dataset~\cite{liao2019efficient}, composed of connected graphs without cycles, where no node is more than 2 hops away from a backbone path (lobster graph). 
We follow the splits originally proposed for each of the datasets: 80\% of the graphs are used in the training set and the remaining 20\% are allocated to the test set. We use 20\% of the train set as validation set.
Statistics of these datasets are shown in \Cref{app:syn_datasets}.
As the graphs in these datasets are unattributed, we can specifically isolate ConStruct's capability of incorporating structural information in comparison to previously proposed methods, which are described in \Cref{app:compared_methods}. From here on, we use DiGress+ to denote the DiGress model with the added extra features described in \Cref{app:training_algo} and HSpectre to refer to the model proposed by \citet{bergmeister2023efficient}.

Regarding performance metrics, we follow the evaluation procedures from \citet{martinkus2022spectre}. 
We assess how close the distributions of different graph statistics computed from the generated and test sets are. To accomplish that, we compute the Maximum Mean Discrepancy (MMD)\footnote{To align with previous literature, we actually compute MMD$^2$.} for the node degrees (Deg.), clustering coefficients (Clus.), orbit count (Orbit), eigenvalues of the normalized graph Laplacian (Spec.), and statistics from a wavelet graph transform (Wavelet). 
To summarize this set of metrics, we compute the ratios against the corresponding metrics from the training set and then average them (Ratio).
We also compute the proportion of generated graphs that are non-isomorphic to each other (Unique), the proportion that are non-isomorphic to any graph in the training set (Novel), and the proportion of generated graphs that are valid (Valid).
Graphs are considered valid if they are planar and connected, trees, or lobster graphs, when the generative model is trained on the planar, tree, or lobster dataset, respectively. We merge these three metrics through the proportion of generated graphs that are simultaneously valid, unique and novel (V.U.N.).

\paragraph{Constraining Criteria}
Various constraining criteria are chosen for ConStruct according to the structural properties of each dataset. For the planar dataset, we use planarity. For the tree dataset, we impose the absence of cycles. For the lobster dataset, we constrain the graph domain to those graphs whose connected components are lobsters. To check to what extent these criteria are verified by the compared methods, we  compute the proportion of generated graphs that comply with the selected constraining criterion of the corresponding dataset (listed under the ``Property" column in \Cref{tab:synthetic_graphs}).

\paragraph{Graph Generation Performance} 
We present the results in \Cref{tab:synthetic_graphs}.
For the planar dataset, ConStruct achieves nearly optimal performance, clearly outperforming all other methods. It is actually the first method to achieve 100\% V.U.N.,  indicating state-of-the-art performance.  Moreover, in terms of average ratio, it clearly outperforms all other methods, with the average ratio approaching 1, suggesting high sample quality. 
Regarding the lobster dataset, ConStruct exhibits a similar trend, demonstrating superior performance compared to DiGress+. It leads to state-of-the-art results in both average ratio and V.U.N. metrics. 
The lower novelty and uniqueness values ($<$100\%) are attributed to the dataset's smaller size.
In fact, we train both models on 64 examples (80\% of the train set) while generating 100 graphs in each run. 
Conversely, for the tree dataset, ConStruct is outperformed by DiGress+ due to the marginally lower expressivity of the edge-absorbing noise model for this particular case (see \Cref{app:performance analysis} for details). 
As a sanity check, we observe that for all three datasets, ConStruct ensures the constraining property for all generated graphs. 
However, the validity values are below 100\% (except for planar) since connectedness of the generated graph is not guaranteed. In general, this property is not ensured by one-shot models and cannot be included as a constraining property since it is not edge-deletion invariant.

\begin{table*}[t]
    \caption{Graph generation performance on digital pathology graphs. 
    We present the results for each method over five sampling runs of 100 generated graphs each, in the format mean ± standard error of the mean.
    }
    \label{tab:digipath_graphs}
    \centering
    \resizebox{\linewidth}{!}{
    \begin{tabular}{lg*{2}{S}g*{6}{S}g}
        \toprule
        & \multicolumn{11}{c}{Low TLS Dataset} \\
        \cmidrule(lr){2-12}
        {Model} & {Ratio\,$\downarrow$} & {Conn.\,$\uparrow$} & {Planar\, $\uparrow$} & {V.U.N.\,$\uparrow$} & {\ka{0}\,$\downarrow$}  & {\ka{1}\,$\downarrow$} & {\ka{2}\,$\downarrow$} & {\ka{3}\,$\downarrow$} & {\ka{4}\,$\downarrow$} & {\ka{5}\,$\downarrow$} & {TLS Valid\, $\uparrow$}\\
        \midrule
        {Train set}   & {1.0} & {100} & {100} & {0.0} & {0.6928} & {0.0000}   & {0.0000}  & {0.0000}  & {0.0000}   & {0.0000} &  {100} \\
        \midrule
        {Baseline~\cite{madeira2023tertiary}}   & {194.6 \scriptsize{±3.0}} & {50.3 \scriptsize{±0.7}} & {10.0 \scriptsize{±0.5}} & {0.0 \scriptsize{±0.0}} & {0.6256 \scriptsize{±0.0228}} & {0.2350 \scriptsize{±0.0000}} & {0.2350 \scriptsize{±0.0000}} & {0.0470 \scriptsize{±0.0470}} & {0.0000 \scriptsize{±0.0000}} & {0.0000 \scriptsize{±0.0000}} &  {0.0 \scriptsize{±0.0}} \\
        {GraphGen~\cite{goyal2020graphgen}} & {212.7 \scriptsize{±4.2}} & {100.0 \scriptsize{±0.0}} & {33.3 \scriptsize{±0.5}} & {33.0 \scriptsize{±1.8}} & {0.7354 \scriptsize{±0.0220}} & {0.1880 \scriptsize{±0.0470}} & {0.0470 \scriptsize{±0.0470}} & {0.0000 \scriptsize{±0.0000}} & {0.0000 \scriptsize{±0.0000}} & {0.0000 \scriptsize{±0.0000}} & {33.3 \scriptsize{±0.5}} \\
        {BiGG~\cite{dai2020scalable}}  & {132.0 \scriptsize{±6.3}} & {99.5 \scriptsize{±0.1}} & {23.3 \scriptsize{±0.6}} & {0.8 \scriptsize{±0.2}} & {0.6184 \scriptsize{±0.0437}} & {0.1410 \scriptsize{±0.0576}} & {0.0470 \scriptsize{±0.0470}} & {0.0470 \scriptsize{±0.0470}} & {0.0470 \scriptsize{±0.0470}} & {0.0470 \scriptsize{±0.0470}} & {23.3 \scriptsize{±0.6}} \\
        {SPECTRE~\cite{martinkus2022spectre}}& {427.0 \scriptsize{±4.3}} & {95.3 \scriptsize{±0.2}} & {51.2 \scriptsize{±0.6}} & {15.8 \scriptsize{±1.2}} & {0.2350 \scriptsize{±0.0000}} & {0.0000 \scriptsize{±0.0000}} & {0.0000 \scriptsize{±0.0000}} & {0.0000 \scriptsize{±0.0000}} & {0.0000 \scriptsize{±0.0000}} & {0.0000 \scriptsize{±0.0000}} & {50.6 \scriptsize{±0.7}} \\
        {DiGress+}  & {\textbf{4.9} \scriptsize{±1.0}} & {96.0 \scriptsize{±0.7}} & {19.8 \scriptsize{±1.8}} & {18.6 \scriptsize{±1.8}} & {0.7306 \scriptsize{±0.0371}} & {0.1410 \scriptsize{±0.0576}} & {0.0000 \scriptsize{±0.0000}} & {0.0000 \scriptsize{±0.0000}} & {0.0000 \scriptsize{±0.0000}} & {0.0000 \scriptsize{±0.0000}} &  {18.6 \scriptsize{±1.8}} \\
        \midrule
        \rowcolor{Gray}
        {ConStruct}  & {\textbf{4.4} \scriptsize{±0.3}} & {98.4 \scriptsize{±0.8}} & {100.0 \scriptsize{±0.0}} & {\textbf{98.4} \scriptsize{±0.8}} & {0.6781 \scriptsize{±0.0795}} & {0.2350 \scriptsize{±0.0000}} & {0.0940 \scriptsize{±0.0576}} & {0.0000 \scriptsize{±0.0000}} & {0.0000 \scriptsize{±0.0000}} & {0.0000 \scriptsize{±0.0000}} & {\textbf{96.2} \scriptsize{±0.7}} \\

        \midrule

        & \multicolumn{11}{c}{High TLS Dataset} \\
        \cmidrule(lr){2-12}
        {Train set}   & {1.0} & {100} & {100} & {0.0} & {0.4257} & {0.4512}   & {0.4745}  & {0.6395}  & {0.7770}   & {0.7663} & {100} \\
        \midrule
        {Baseline~\cite{madeira2023tertiary}} & {354.9 \scriptsize{±2.2}} & {49.8 \scriptsize{±0.3}} & {3.4 \scriptsize{±0.2}} & {0.2 \scriptsize{±0.2}} & {0.3276 \scriptsize{±0.0023}} & {0.3412 \scriptsize{±0.0070}} & {0.3669 \scriptsize{±0.0172}} & {0.5096 \scriptsize{±0.0157}} & {0.6231 \scriptsize{±0.0176}} & {0.6988 \scriptsize{±0.0203}}  & {0.0 \scriptsize{±0.0}} \\
        {GraphGen~\cite{goyal2020graphgen}} & {559.1 \scriptsize{±9.8}} & {100.0 \scriptsize{±0.0}} & {48.1 \scriptsize{±0.6}} & {47.4 \scriptsize{±1.6}} & {0.3311 \scriptsize{±0.0158}} & {0.3620 \scriptsize{±0.0228}} & {0.4613 \scriptsize{±0.0121}} & {0.6034 \scriptsize{±0.0359}} & {0.7500 \scriptsize{±0.0231}} & {0.7523 \scriptsize{±0.0346}} & {16.9 \scriptsize{±0.6}} \\
        {BiGG~\cite{dai2020scalable}} & {307.7 \scriptsize{±15.5}} & {99.5 \scriptsize{±0.1}} & {10.1 \scriptsize{±0.8}} & {0.4 \scriptsize{±0.2}} & {0.3706 \scriptsize{±0.0225}} & {0.4850 \scriptsize{±0.0361}} & {0.5970 \scriptsize{±0.0166}} & {0.7151 \scriptsize{±0.0112}} & {0.7494 \scriptsize{±0.0128}} & {0.7515 \scriptsize{±0.0220}} & {10.0 \scriptsize{±0.8}} \\
        {SPECTRE~\cite{martinkus2022spectre}}  & {938.1 \scriptsize{±4.1}} & {91.3 \scriptsize{±0.3}} & {0.0 \scriptsize{±0.0}} & {0.0 \scriptsize{±0.0}} & {0.3190 \scriptsize{±0.0293}} & {0.3585 \scriptsize{±0.0279}} & {0.4033 \scriptsize{±0.0230}} & {0.5130 \scriptsize{±0.0230}} & {0.6039 \scriptsize{±0.0127}} & {0.6804 \scriptsize{±0.0137}} & {0.0 \scriptsize{±0.0}}\\
        {DiGress+}  & {10.5 \scriptsize{±0.6}} & {97.8 \scriptsize{±0.8}} & {8.4 \scriptsize{±1.1}} & {7.8 \scriptsize{±1.2}} & {0.3194 \scriptsize{±0.0034}} & {0.3308 \scriptsize{±0.0041}} & {0.3598 \scriptsize{±0.0096}} & {0.4878 \scriptsize{±0.0155}} & {0.6234 \scriptsize{±0.0305}} & {0.6887 \scriptsize{±0.0250}} & {6.6 \scriptsize{±0.9}} \\
        \midrule
        \rowcolor{Gray}
        {ConStruct}  & {\textbf{6.4} \scriptsize{±0.6}} & {99.8 \scriptsize{±0.2}} & {100.0 \scriptsize{±0.0}} & {\textbf{99.8} \scriptsize{±0.2}} & {0.3378 \scriptsize{±0.0048}} & {0.3437 \scriptsize{±0.0104}} & {0.3799 \scriptsize{±0.0112}} & {0.5306 \scriptsize{±0.0150}} & {0.6360 \scriptsize{±0.0177}} & {0.6798 \scriptsize{±0.0436}} & {\textbf{88.0} \scriptsize{±0.5}} \\
        \bottomrule
    \end{tabular}}
\end{table*}

\subsection{Digital Pathology Cell Graphs}\label{sec:digipath}

\paragraph{Setup}
In the next set of experiments, we explore digital pathology data. Due to the their natural representation of relational data, graphs are widely used to capture spatial biological dependencies from tissue images. We focus on cell graphs, whose nodes represent biological cells and edges serve as proxies for local cell-cell interactions.
We build these structures from the genomic and clinical data available from the Molecular Taxonomy of Breast Cancer International Consortium (METABRIC) molecular dataset~\cite{curtis2012genomic,rueda2019dynamics,danenberg2022breast}. 
Each node is attributed with one of the nine possible phenotypes, which extensively characterizes a cell both anatomically and physiologically (more details in ~\Cref{app:cell_graphs}). Regarding edges, we followed the typical procedure for cell graphs in digital pathology~\cite{jaume2021histocartography,jaume2021quantifying,ahmedt2022survey,wu2022graph}: first we employ Delaunay triangulation on the cell positions to construct the graphs, followed by edge thresholding to discard long edges. Our focus lies on generating biologically meaningful Tertiary Lymphoid Structures (TLSs), further described in \Cref{app:tertiary_lymphoid_structures}. Thus, we extract non-overlapping 4-hop subgraphs centered at nodes whose class is ``B'' from the whole-slide graphs. 
In terms of dimensionality, we obtain graphs with $b=9$, corresponding to the 9 phenotypes detailed in \Cref{app:cell_graphs}, and $c=1$.
We explore two datasets: one comprising graphs with high TLS content and another consisting of graphs with low TLS content, based on domain-specific metrics (see below). We provide their statistics in \Cref{app:digipath_stats}. We open-source both of them, representing to the best of our knowledge the first open-source digital pathology datasets specifically tailored for graph generation.
For the sake of comparison, besides ConStruct and DiGress+, we implement a non-deep learning baseline method proposed in \cite{madeira2023tertiary} for this setting, which essentially captures 1-hop dependencies of cell graphs (see \Cref{app:baseline}). Additionally, we run BiGG~\cite{dai2020scalable}, GraphGen~\cite{goyal2020graphgen}, and SPECTRE~\cite{martinkus2022spectre}. These are the methods that, besides DiGress, can handle attributed graphs and attain non-zero V.U.N. for the planar dataset in \Cref{tab:synthetic_graphs}, which we consider a proxy for performance in the digital pathology datasets due to the structural similarities (i.e., planarity) between the datasets.

\paragraph{Metrics}
The \textit{TLS embedding}, $\kappa=[\kappa_0, \ldots, \kappa_5] \in \mathbb{R}^6$, has been proposed to quantify the TLS content in a cell graph~\cite{schaadt2020graph,madeira2023tertiary}. See \Cref{app:tertiary_lymphoid_structures} for more details.
Based on this metric, we define a graph $G$ to contain low TLS content if $k_1(G) < 0.05$ and high TLS content if $k_2(G) > 0.05$~\cite{madeira2023tertiary}. 
To evaluate the generative performance, we adopt the average ratio for structural graph statistics (Ratio) and V.U.N. metrics used in \Cref{sec:exp_synthetic_graphs}. Here we consider a planar and connected graph as a valid graph. Thus, we explicitly present the proportion of generated graphs that are connected (Conn.) and (Planar). Furthermore, for a biologically meaningful evaluation of the generated cell graphs, we use the domain metrics. We report the MMD between the distributions of the components of $\kappa$. We also consider the proportion of graphs that are planar, connected, and verify the low or high TLS content condition (TLS Valid), depending on the train set used.

\paragraph{Constraining criterion}
We use graph planarity as target structural property for ConStruct, as cell graphs are extracted from tissue slides using Delaunay triangulation, thus necessarily planar.

\paragraph{Results}
ConStruct outperforms all baselines across all summary evaluation metrics (shown in light gray in \Cref{tab:digipath_graphs}) for cell graph generation on both datasets.
Unlike the synthetic datasets, here the structural distribution is conditioned on the node types, which is inherently a more complex task. 
This complexity contributes to the poor performance of the several unconstrained models. 
Constraining the edge generation process allows to significantly alleviate this modelling complexity, highlighting the benefits of ConStruct in such scenarios.
We emphasize the substantial improvement in the V.U.N. of the generated graphs, with values approaching 100\% using our framework, which aligns with the main motivation behind the proposed method. Interestingly, it also promotes the generation of more connected graphs.
Finally, the 1-hop baseline model, while capturing the node type dependencies to some extent, as illustrated by the MMD on the components of $\kappa$, completely fails to capture structure-based dependencies. 

Additionally, we carry out some experiments for molecular datasets in \Cref{app:molecular_datasets}: we explore the utilization of planarity for constrained molecular generation and showcase how ConStruct can be used for controlled generation.
Finally, we explore likelihood-based variants of ConStruct, as well as some ablations to the projector in \Cref{app:variants}.

\section{Limitations and Future Directions}
\label{sec:limitations}

In our work, we cover edge-deletion invariant properties. However, ConStruct can be easily extended to also handle edge-insertion invariant properties (i.e., properties that hold upon edge insertion). This extension can be useful in domains where constraints such as having at least $n$ cycles in a graph are important. To achieve this, we can simply "invert" the proposed framework: design the transition matrices with the absorbing state in an existing edge state (instead of the no-edge state) and a projector that removes edges progressively (instead of inserting them) while conserving the desired property. 

In the particular context of molecular generation, \Cref{app:molecular_datasets} illustrates that, while purely structural constraints can guide the generation of molecules with specific structural properties (e.g., acyclicity), for general properties shared by all molecules (e.g., planarity) they are too loose. In contrast, autoregressive models thrive in such setting due to the possibility of molecular node ordering (e.g., via canonical SMILES) and the efficient incorporation of \emph{joint node-edge} constraints (e.g., valency). Therefore, although it consists of a fundamentally different setting than the one considered in this paper, incorporating joint node-edge constraints into ConStruct represents an exciting future direction. 

Additionally, the induced sparsity created by the edge-absorbing noise model presents opportunities for further exploitation. By leveraging this sparsity, future extensions of ConStruct could enhance sampling efficiency and improve the underlying diffusion model's scalability for generating larger graphs.

\section{Conclusion}
\label{sec:conclusion}

In this paper, we introduced ConStruct, a framework that allows to integrate domain knowledge via structural constraints into graph diffusion models. By constraining the diffusion process based on a diverse set of geometric properties, we enable the generation of realistic graphs in scenarios with limited data. 
To accomplish that, we leverage an edge-absorbing noise model and a projector operator to ensure that both the forward and reverse processes preserve the sampled graphs within the constrained domain and, thus, maintain their validity.
Despite its algorithmic simplicity, our approach overcomes the arbitrarily hard problem of projecting a given graph into a combinatorial subspace in an efficient and theoretically grounded manner. Through several experiments on benchmark datasets, we showcase the versatility of ConStruct across various structural constraints. For example, in digital pathology datasets, our method outperforms existing approaches, bringing the validity of the generated graphs close to 100\%. 
Overall, ConStruct opens new avenues for integrating domain-specific knowledge into graph generative models, thereby paving the way for their application in real-world scenarios.
\looseness=-1

\section*{Acknowledgements}
\label{sec:acknowledgements}

We thank Cédric Vincent-Cuaz, Nikolaos Dimitriadis, Vaishnavi Subramanian, Yiming Qin, Sevda Öğüt, and Laura Toni for helpful discussions and feedback. 
We also thank Andreas Bergmeister and Yunhui Jang for helping set up the code to reproduce their experiments.

\bibliographystyle{plainnat}
\bibliography{tools/bibliography}

\clearpage
\appendix\label{sec:appendix}
\section{Graph Discrete Diffusion Model}\label{app:graph_diffusion}

In this section, we further detail the design of the graph discrete diffusion model used to illustrate the constraining framework of ConStruct.

\subsection{Training Algorithm}\label{app:training_algo}

The denoising neural network is trained using the cross-entropy loss between its predicted probabilities for each node and edge types, $\hat{p}^G = (\hat{p}^X, \hat{p}^E)$ and the actual node and edge types of a clean graph, $\mathbf{G} = (\mathbf{X}, \mathbf{E})$:
\begin{equation}
    \label{eq:loss}
    L(\hat{p}^G, G) = \operatorname{CE}\left(\hat{p}^X, \mathbf{X}\right)+ \lambda \operatorname{CE}\left(\hat{p}^E, \mathbf{E}\right),
\end{equation}
where $\lambda$ is an hyperparameter that is tuned to balance both loss terms.

As shown by \citet{vignac2022digress}, the loss in \Cref{eq:loss} is node permutation invariant. Thus, if we also consider an equivariant architecture, the diffusion model is endowed of the desired equivariance properties, allowing the model to dodge the node ordering sensitivity from which, for example, autoregressive models suffer. For this reason, we adopt a Graph Neural Network, $\text{GNN}_\theta$, as the denoising neural network of our diffusion model. In particular, we employ the exact same denoising network architecture of DiGress~\cite{vignac2022digress}, a Graph Transformer~\cite{dwivedi2020generalization}. 

Importantly, the edge-absorbing noise model used in ConStruct increases graph sparsity throughout the forward trajectory. Consequently, beyond the distribution preserving guarantees, it also allows for the efficient computation of extra features on the noisy graphs that otherwise the $\text{GNN}_\theta$ would not be able to capture. Following \citet{vignac2022digress}, these are fed as a supplementary input to the denoising network (see \Cref{alg:training} and \Cref{alg:sampling}), further enhancing its expressivity beyond the well-known limited representational power of GNN architectures~\cite{xu2018powerful,morris2019weisfeiler}. More concretely, besides the spectral (eigenvalues and eigenvectors of the Laplacian) and structural (number of cycles) features from DiGress, we also consider some additional features. We add as graph features the degree distribution and the node and edge type distributions. While the former enhances the positional information within the graph, the latter helps in making more explicit to the model the prevalence of each class in the dataset. Additionally, we add auxiliary structural encodings to edges to boost edge label prediction. We compute the Adamic-Adar index to aggregate local neighborhood information and the shortest distance between nodes to encode node interactions. Due to computational limitations, we only consider information within a 10-hop radius for these computations. These additional features were previously proposed by~\citet{qin2023sparse}.

Provided such loss function and denoising neural network architecture, all the necessary elements are in place for the training of the diffusion model, which is defined in \Cref{alg:training}. 

\begin{algorithm}[H]
  \caption{Training Algorithm for Graph Discrete Diffusion Model}
  \label{alg:training}
  \SetAlgoLined
  \KwIn{Graph dataset $\mathcal{D}$}
  \Repeat{\emph{convergence of $\text{GNN}_\theta$}}{
    Sample $G = (\mathbf{X}, \mathbf{E}) \sim \mathcal{D}$\;
    Sample $t \sim \mathcal U(1, ..., T)$\;
    Sample $G^t \sim \mathbf{X} \bar{\mathbf{Q}}^t_X \times \mathbf{E} \bar{\mathbf{Q}}^t_E$\;
    $h \gets  f(G^t, t)$ \tcp*{Compute extra features}
    $\hat p^X, \hat p^E \gets \operatorname{GNN}_\theta(G^t, h)$\;
    $\operatorname{loss} \gets \operatorname{CE}(\hat{p}^X, \mathbf{X}) + \lambda \operatorname{CE}(\hat{p}^E, \mathbf{E})$\;
    $\operatorname{optimizer.step}(\operatorname{loss})$\;
  }
\end{algorithm}

\subsection{Parameterization of the Reverse Process}\label{app:parameterization_reverse}

The distribution $p_\theta(G^{t-1}|G^t)$ fully defines the reverse process.
Under an independence assumption between nodes and edges, this distribution can be modelled as:
\begin{equation}
    \label{eq:reverse_noise_model}
    p_\theta(G^{t-1}|G^t) = \prod_{1 \leq i \leq n} p_\theta(x_i^{t-1} | G^t) \prod_{1 \leq i,j \leq n} p_\theta(e_{ij}^{t-1} | G^t). 
\end{equation}
To compute each of these terms, we use the $\text{GNN}_\theta$ predictions through the following marginalization:
\begin{equation}
    \label{eq:marginal_node}
    p_\theta( x_i^{t-1}| G^t) = \sum_{x \in \mathcal{X}} p_\theta( x_i^{t-1}| x_i = x, G^t) \; \hat{p}_i^X(x),
\end{equation}
where $\hat{p}_i^X(x)$ denotes the $\text{GNN}_\theta$ predicted probability of node $i$ being of type $x$. Similarly, for the edges we have
$p_\theta( e_{ij}^{t-1}| G^t) = \sum_{e \in \mathcal{E}} p_\theta( e_{ij}^{t-1}| e_{ij} = e, G^t) \; \hat{p}_{ij}^E(e)$.
To compute the missing term in \Cref{eq:marginal_node}, we equate it to the posterior term of the forward process:
\begin{equation}\label{eq:forward_posterior}
    p_\theta( x_i^{t-1}| x_i = x, G^t) = \begin{cases}
            \frac{\mathbf{x}_{i}^t (\mathbf{Q}_X^t)' \odot \; \mathbf{x}_{i} \bar{\mathbf{Q}}^{t-1}_X}{\mathbf{x}_{i}^t \bar{\mathbf{Q}}^t_X \mathbf{x}_{i}} & \text{if } q(x_i^t | x_i = x) > 0, \\
            0 & \text{otherwise,}
        \end{cases}
\end{equation}
where $'$ denotes transposition.

\subsection{Sampling Algorithm}\label{app:sampling_and_projector_algos}

In this section, we first introduce the algorithm describing the proposed projector operator in \Cref{alg:projector}. This projector is employed at each time step to keep the sampled graphs throughout the reverse process within the constrained domain. The full sampling algorithm is shown in \Cref{alg:sampling}.

\begin{algorithm}[H]
  \caption{Projector}\label{alg:projector}
  \SetAlgoLined
  \KwIn{Constraining property $P$, noisy graph $G^t = (X^t, E^t)$, and candidate graph $\hat{G}^{t-1} = (\hat{X}^{t-1}, \hat{E}^{t-1})$}
  $G^{t-1} \gets (\hat{X}^{t-1}, E^t)$\;
  $E' \gets \hat{E}^{t-1} \setminus E^t$   \tcp*{Get candidate edges}
  \Repeat{$E' = \emptyset$}{
    Sample $e' \sim E'$\;
    \If{$P(G^{t-1}.\operatorname{insert}(e'))$}{
      $G^{t-1} \gets G^{t-1}.\operatorname{insert}(e')$    \tcp*{Insert only valid edges}
    }
    $E' \gets E' \setminus \{e'\}$\;
  }
  \KwRet{$G^{t-1}$}\;
\end{algorithm}

\begin{algorithm}[H]
  \caption{Sampling Algorithm for Constrained Graph Discrete Diffusion Model}
  \label{alg:sampling}
  \SetAlgoLined
  \KwIn{Number of graphs to sample $N$ and constraining property $P$}
  \For{$i = 1$ \KwTo $N$}{
    Sample $n$ from the training set distribution \tcp*{Sample number of nodes}
    Sample $G^T  \sim q_X(n) \times q_E(n)$   \tcp*{Sample from limit distribution}
    \For{$t = T$ \KwTo $1$}{
      $h \gets  f(G^t, t)$   \tcp*{Compute extra features}
      $\hat p^X, \hat p^E \gets \text{GNN}_\theta(G^t, h)$\;
      $\hat{G}^{t-1} \sim p_\theta(G^{t-1} | G^t)$   \tcp*{Sample from distribution in \Cref{eq:reverse_noise_model}}
      \textcolor{blue}{$G^{t-1} \gets \operatorname{Projector}(P, \hat{G}^{t-1}, G^t)$}\;
    }
    Store $G^0$\;
  }
\end{algorithm}

\clearpage
\section{Theoretical Analysis}\label{app:theoretical_analysis}

\setcounter{theorem}{0}

In this section, we theoretically analyse the projector. We start by defining a notion of distance between graphs~\cite{sanfeliu1983distance}.
\begin{definition}
\label{def:graph_edit_distance}
Let $G_1$ and $G_2$ be two unattributed graphs. The graph edit distance with uniform cost, denoted by $\operatorname{GED}(G_1, G_2)$, is defined as: 
\begin{equation}
    \operatorname{GED}\left(G_1, G_2\right)=\min_{\left(e_1, \ldots, e_k\right) \in \mathcal{E}\left(G_1, G_2\right)} \sum_{i=1}^k c\left(e_i\right) = \min_{\left(e_1, \ldots, e_k\right) \in \mathcal{E}\left(G_1, G_2\right)} \alpha |(e_1, \ldots, e_k)|
\end{equation}
where $\mathcal {E}(g_{1},g_{2})$ denotes the set of edit paths that convert $G_1$ into $G_2$ (up to an isomorphism), $c(e) = \alpha > 0$ is the uniform cost of each usual set of elementary graph edit operators and $|(e_1, \ldots, e_k)|$ refers to the cardinality of the edit path. %
\end{definition}

Importantly, in this analysis we choose $\operatorname{GED}$ due to its permutation invariance properties. We only define it over unattributed graphs for an objective evaluation as ConStruct only operates at the graph structural level. Moreover, as our generative process imposes a fixed number of nodes throughout the whole reverse process, the relevant elementary edits for GED are edge insertion and deletion.

Additionally, we use the notation $G \supset G'$ to denote that $G'=(X', E')$ is a subgraph of $G=(X, E)$, i.e., that up to an isomorphism, we have $E \supset E'$ and $X=X'$. For brevity, we slightly abuse notation and also define the union between a graph, $G=(X,E)$, and a set of edges, $E'$, to be the graph whose edges result from the union of its edges with those of the set, i.e., $G \cup E' = G' = (X, E\cup E')$. Similarly, we have $G \setminus E' = G' = (X, E \setminus E')$.

The next results are organized in the following way: the first theorem proves that for any edge-deletion invariant constraining property (\Cref{def:edge_deletion_invariance}), our projector can retrieve a graph that results from a projection onto the constrained set under the GED sense. Then, we prove that when considering acyclicity as target structural property, the projector is guaranteed to output the optimal (projected) samples. We finally show that this second property does not hold for all edge-deletion invariant properties, giving counter-examples for the cases of planarity, maximum degree and lobster components.

\begin{theorem}
    Let:
\begin{itemize}[noitemsep, topsep=0pt]
    \item $P$ be the edge-deletion invariant (\Cref{def:edge_deletion_invariance}) constraining property of the projector;
    \item $G^t$ be a noisy graph obtained at timestep $t$;
    \item $\hat{G}^{t-1}$ be a sampled graph from $p_\theta(G^{t-1} | G^t)$, i.e., the one-step denoised candidate graph directly proposed by the diffusion model when taking $G^t$ as input;
    \item $\mathcal{G}^{t-1} = \operatorname{Projector}(P, \hat{G}^{t-1}, G^t)$ be the set of all possible final one-step denoised graph outputted by ConStruct.
\end{itemize}
We define $(G^*, e^*)$ any optimal solutions of the following optimization problem:
\begin{equation} %
    \operatorname{min}_{G \in \mathcal{C}} \operatorname{GED}(\hat{G}^{t-1}, G) = \min_{G \in \mathcal{C}} \min_{\left(e_1, \ldots, e_k\right) \in \mathcal{E}\left(\hat{G}^{t-1}, G\right)} \alpha |(e_1, \ldots, e_k)|,
\end{equation}
where $\mathcal{C}= \{ G \in \mathcal{G}| P(G) = True, G \supset G^t \} $, with $\mathcal{G}$ the set of all unattributed graphs. Then, $(G^*, e^*)$ can be recovered by our projector, i.e. $G^* \in \mathcal{G}^{t-1}$.
\end{theorem}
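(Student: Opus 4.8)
The plan is to analyze the structure of the optimization problem in \Cref{eq:projector_contains_argmin_main} and show that the projector's random edge-insertion procedure can always land on an optimal solution $G^*$. First I would observe that any feasible $G \in \mathcal{C}$ satisfies $G \supset G^t$, and recall that the projector initializes $G^{t-1} \gets (\hat{X}^{t-1}, E^t)$ and only ever \emph{inserts} edges drawn from the candidate set $E' = \hat{E}^{t-1} \setminus E^t$. Hence every graph in $\mathcal{G}^{t-1}$ is of the form $G^t \cup E''$ for some $E'' \subseteq E'$ that keeps the property $P$ true at each intermediate insertion. The key reduction is therefore: (i) show that the optimal $G^*$ only adds edges of $\hat{G}^{t-1}$ (i.e. $G^* \subseteq \hat{G}^{t-1} \cup G^t$, so $G^* = G^t \cup E^*$ with $E^* \subseteq E'$); and (ii) show that the edges of $E^*$ can be inserted in \emph{some} order without ever violating $P$, so that this order has positive probability under the projector's uniform random sampling.

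For step (i), I would argue by an exchange/cut-and-paste argument on the edit path. Suppose toward contradiction that an optimal $G^*$ contains an edge $e \notin \hat{E}^{t-1}$ (and $e \notin E^t$, since we need $G^* \supset G^t$ but edges outside both are "extra"). Deleting $e$ from $G^*$ costs $\alpha$ less in the GED to $\hat{G}^{t-1}$ (one fewer deletion needed in the edit path), still contains $G^t$, and — crucially — still satisfies $P$ because $P$ is edge-deletion invariant (\Cref{def:edge_deletion_invariance}). This contradicts optimality, so $G^* = G^t \cup E^*$ with $E^* \subseteq \hat{E}^{t-1} \setminus E^t = E'$, and $\operatorname{GED}(\hat{G}^{t-1}, G^*) = \alpha\,|E' \setminus E^*|$, i.e. $G^*$ is a \emph{maximum}-cardinality property-preserving subset of the candidate edges added on top of $G^t$.

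For step (ii), the claim is that for such a maximal feasible $E^*$, \emph{every} ordering of $E^*$ is safe: inserting the edges of $E^*$ one at a time into $G^t$, every intermediate graph $G^t \cup S$ with $S \subseteq E^*$ satisfies $P$. This is exactly because $P$ is edge-deletion invariant — $G^t \cup S$ is obtained from $G^* = G^t \cup E^*$ (which satisfies $P$) by deleting the edges $E^* \setminus S$, so $P(G^t \cup S) = \text{True}$. Consequently, when the projector processes the candidate edges in random order, the run in which the edges of $E^*$ happen to all be accepted (they never cause a violation) and the remaining candidate edges $E' \setminus E^*$ are each rejected (each would create a violation, since $E^*$ was maximal — here one needs the mild observation that adding any $e \in E' \setminus E^*$ to $G^*$ breaks $P$, which follows from maximality of $E^*$ among property-preserving subsets) produces exactly $G^*$. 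Thus $G^* \in \mathcal{G}^{t-1}$.

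The main obstacle I anticipate is the subtle point hidden in the parenthetical of step (ii): one must be careful that "maximal $E^*$" in the sense of solving \Cref{eq:projector_contains_argmin_main} (maximum cardinality property-preserving subset) genuinely implies that \emph{no} leftover candidate edge can be appended to $G^*$ without breaking $P$. If some $e \in E' \setminus E^*$ could be added to $G^*$ while preserving $P$, then $E^* \cup \{e\}$ would be a strictly larger feasible set, contradicting optimality — so this is fine, but it must be stated explicitly, and it uses edge-deletion invariance again to handle the interaction with intermediate states. A second minor care point is verifying that the GED edit path between $\hat{G}^{t-1}$ and $G^* = G^t \cup E^*$ consists purely of deletions of the edges in $E' \setminus E^*$ (no insertions, no vertex edits), which holds because the node sets agree and $G^t \subseteq \hat{G}^{t-1}$ forces $G^* \subseteq \hat{G}^{t-1}$; this pins down the GED value and makes the cardinality-maximization characterization precise.
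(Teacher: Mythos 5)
Your proposal is correct and rests on the same two pillars as the paper's proof (edge-deletion invariance plus optimality forces $G^* = G^t \cup E^*$ with $E^* \subseteq \hat{E}^{t-1}\setminus E^t$), but it is organized differently. The paper argues by induction along an optimal edit path, showing edit-by-edit that every $e^*_i$ must be an edge deletion whose intermediate graph still contains $G^t$, and then simply asserts that the projector can realize such a deletion-only path; you instead use a one-shot exchange argument (delete any edge of $G^*$ outside $\hat{G}^{t-1}$; invariance keeps $P$ true and $G^t$ contained, while the GED strictly drops, contradicting optimality) to characterize $G^*$ directly as a maximum-cardinality property-preserving augmentation of $G^t$ by candidate edges, and then you explicitly verify realizability by the projector: all intermediate graphs $G^t \cup S$, $S \subseteq E^*$, satisfy $P$ by invariance, and maximality forces rejection of every $e \in E'\setminus E^*$ once $G^*$ is built. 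This buys a shorter argument and makes explicit a step the paper leaves implicit (that a projector run actually outputs $G^*$). One small point you should state explicitly: not every uniformly sampled ordering yields $G^*$, since a leftover edge $e \in E'\setminus E^*$ processed early might be accepted at an intermediate state where it does not yet violate $P$; you need to exhibit one admissible ordering, e.g.\ all of $E^*$ first and $E'\setminus E^*$ afterwards, for which all your accept/reject claims hold and which occurs with positive probability — this is exactly the ordering your argument implicitly uses, so it is a one-line fix rather than a gap. (Both your write-up and the paper's treat the "up to isomorphism" aspect of $\operatorname{GED}$ and of $G \supset G^t$ at the same informal level, so no discrepancy there.)
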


\begin{proof}
If $\hat{G}^{t-1} \in \mathcal{C}$, the theorem is trivially verified since the output of the projector is directly $\hat{G}^{t-1}$, as well as the solution of the minimization problem. Therefore, for the rest of the proof, we only consider the case $\hat{G}^{t-1} \notin \mathcal{C}$.

Now, since the reverse process of the diffusion model is an edge insertion process, we have $\hat{G}^{t-1} = G^t \cup E_{\text{candidate}} \supset G^t$. Also, we notice that the projector amounts to randomly remove the edges that are not in $G^t$ from $\hat{G}^{t-1}$ until we find a graph within the constraint set (equivalently, it entails adding as many edges as possible to $G^t$ while ensuring that the graph remains within the constraint set). Thus, it suffices to prove that for $(G^*, e^*)$ we necessarily have an optimal edit path $e^*=(e^*_1, ..., e^*_k)$ from $\hat{G}^{t-1}$ to $G^*$ exclusively composed of edge deletions. In this case, our projector can necessarily produce $G^*$.

Let $(G^*, e^*)$ be such a solution, and define $G^*_{:i}$ the graph resulting from the $i^{th}$ first edits $e^*_{:i} = (e^*_1, ..., e^*_i)$ with $i\leq k$. We will prove by induction that, for all $1 \leq i \leq k$, $e^*_{:i}$ is only composed of edge deletions such that $G^t \subset G^*_{:i}$. 

$\mathbf{i =1}$:
Since $\hat{G}^{t-1} \notin \mathcal{C}$ and $\hat{G}^{t-1} \supset G^t$, we have that $P(\hat{G}^{t-1}) = False$. 
As $P$ is edge-deletion invariant, inserting any set of edges $E$ to $\hat{G}^{t-1}$ implies that
\begin{equation}
 P(\hat{G}^{t-1}) = False \implies P(\hat{G}^{t-1} \cup E) = False.
\end{equation}
Therefore, we have:
\begin{align}
    \min_{G \in \mathcal{C}} \operatorname{GED}(\hat{G}^{t-1}, G) 
    &= \min_{(G^t \cup E_G) \in \mathcal{C}} \operatorname{GED}(G^t \cup E_{\text{candidate}}, G^t \cup E_G) \\
    &= \min_{(G^t \cup E_G) \in \mathcal{C}} \alpha|E_{\text{candidate}} \setminus E_G| \\ 
    & \leq \min_{(G^t \cup E_G) \in \mathcal{C}} \alpha|E_{\text{candidate}} \cup E \setminus E_G| \\ 
    & = \min_{(G^t \cup E_G) \in \mathcal{C}} \operatorname{GED}(\hat{G}^{t-1}\cup E, G^t \cup E_G) \\
    & = \min_{G \in \mathcal{C}} \operatorname{GED}(\hat{G}^{t-1} \cup E, G).
\end{align}
Thus, we conclude that any edge insertions would take us further away from the constraint set. Therefore, $e^*_1$ cannot represent an edge insertion. However, it could still be an edge deletion such that $G^t \not\subset G^*_{:1}$
In this case, an extra edge insertion would be necessary to recover $G^t$ in $G^*_{:1}$, which is required since $G^* \supset G^t$, i.e.,
\begin{equation}
    \min_{G \in \mathcal{C}} \operatorname{GED}(\hat{G}^{t-1}, G) \leq  \min_{G \in \mathcal{C}} \operatorname{GED}(G^*_{:1}, G).
\end{equation}
Contrarily, if $e^*_1$ is an edge deletion such that $G^t \subset G^*_{:1}$, we have:
\begin{equation}
    \min_{G \in \mathcal{C}} \operatorname{GED}(\hat{G}^{t-1}, G) >  \min_{G \in \mathcal{C}} \operatorname{GED}(G^*_{:1}, G),
\end{equation}
since $G^t \subset G^*_{:1} \subset \hat{G}^{t-1}$. Therefore, we verify the intended property for $i=1$. 

$\mathbf{1 < i \leq k}$:
We have $G^*_{:i-1} \notin \mathcal{C}$ because $P(G^*_{:i-1})=False$. Otherwise $G^*_{:i-1}$ would be the solution since $G^t \subset G^*_{:i-1}$. Hence for any set of inserted edges $E$,
\begin{equation}
    \min_{G \in \mathcal{C}} \operatorname{GED}({G}^*_{:i-1}, G) \leq \min_{G \in \mathcal{C}} \operatorname{GED}(G^*_{:i} \cup E, G),
\end{equation}
implying that $e^*_{i}$ is an edge deletion. By the same token, if $G^t \not\subset G^*_{:i}$, then necessarily an extra insertion edit would be necessary to recover $G^t$ further in $G^*$, so we have again:
\begin{equation}
    \min_{G \in \mathcal{C}} \operatorname{GED}(G^*_{:i-1}, G) \leq  \min_{G \in \mathcal{C}} \operatorname{GED}(G^*_{:i}, G),
\end{equation}
which, as seen before, is suboptimal. Thus, $e^*_{i}$ is an edge deletion such that $G^t \subset G^*_{:i}$. By noticing that $G^*_{:k} = G^*$, we conclude our proof. This induction shows that only an edit path $e^*$ composed of edge deletions such that all intermediate graphs contain $G^t$ leads to an optimal projection w.r.t $\operatorname{GED}$. 
\end{proof}

\textbf{Critical analysis of the result in Theorem 1}: See \Cref{sec:method_reverse} for a critical analysis of this result.

\vspace{20pt}
In the following theorem we prove that the projector always picks the solution of the optimization problem, i.e., that any element of $\mathcal{G}^{t-1}$ is actually a solution of the optimization problem in Theorem 1. In this proof, we use the concept of connected component of a graph, i.e., a subgraph of the given graph in which there is a path between any of its two vertices, but no path exists between any vertex in the subgraph and any vertex outside of it. Therefore, any edge inserted between two nodes in the same connected component leads to a cycle. Importantly, we trivially consider an isolated node as a connected component.

\begin{theorem}\label{th:acyclicity_equals}
    Under the same conditions of Theorem 1, if $P$ returns true for graphs with no cycles, we have:
    \begin{equation*}
        \mathcal{G}^{t-1} = \operatorname{argmin}_{G \in \mathcal{C}} \operatorname{GED}(\hat{G}^{t-1}, G).
    \end{equation*}
\end{theorem}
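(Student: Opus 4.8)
By Theorem 1 we already know that $\mathcal{G}^{t-1}$ contains at least one optimal solution of the minimization problem, so it suffices to establish the reverse inclusion: every graph output by the projector when $P$ is acyclicity is itself optimal, i.e. $\mathcal{G}^{t-1} \subseteq \operatorname{argmin}_{G \in \mathcal{C}} \operatorname{GED}(\hat{G}^{t-1}, G)$. Combined with Theorem 1 this forces $\mathcal{G}^{t-1}$ to be exactly the argmin set. The plan is to show that the number of edges discarded by the projector is the same, regardless of the random order in which the candidate edges $E' = \hat{E}^{t-1} \setminus E^t$ are processed, and that this common number equals the minimal GED.

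First I would recall from the proof of Theorem 1 that minimizing $\operatorname{GED}(\hat{G}^{t-1}, G)$ over $\mathcal{C}$ is equivalent to maximizing the number of candidate edges from $E'$ that can be added to $G^t$ while keeping the graph acyclic; equivalently, minimizing $|E' \setminus E_G|$, the number of candidate edges left out. So the whole statement reduces to: \emph{every maximal acyclic extension of $G^t$ inside $G^t \cup E'$ uses the same number of candidate edges}, and the projector always produces a maximal one. Maximality of the projector's output is immediate from its construction: it only stops when $E' = \emptyset$, and by the blocking logic it has tried to insert every candidate edge, keeping each one that did not create a cycle — so no further candidate edge can be added at the end. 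The key structural fact is then the matroid-type property: define, for the graph $G^t$ with vertex set $X$, a partition of $X$ into connected components $C_1, \dots, C_m$ (isolated vertices counted as singletons). Adding a candidate edge $e = (u,v)$ keeps the graph acyclic if and only if $u$ and $v$ lie in different current components, and doing so merges those two components. Hence after adding a maximal acyclic set of candidate edges we have merged the $m$ components into some number $m'$ of "super-components", using exactly $m - m'$ candidate edges. I would argue that $m'$ — the number of connected components of the full graph $G^t \cup E'$ — is an invariant of the input, not of the insertion order: a candidate edge $(u,v)$ can ever be successfully inserted (in some order) iff $u$ and $v$ are connected in $G^t \cup E'$ but not in $G^t$, and two components of $G^t$ end up merged in any maximal extension iff they are connected through $G^t \cup E'$; therefore every maximal acyclic extension merges precisely the same groups of original components and uses exactly $m - m'$ candidate edges, where $m'$ is the component count of $G^t\cup E'$. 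This is the spanning-forest / graphic-matroid statement that all bases have the same cardinality, specialized to the contracted graph where $G^t$'s components are the ground vertices.

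With that invariant in hand the argument closes quickly: the projector's output discards exactly $|E'| - (m - m')$ candidate edges, which is the minimum possible over all of $\mathcal{C}$; hence its GED to $\hat{G}^{t-1}$ equals $\alpha(|E'| - (m - m'))$, the optimal value, so every element of $\mathcal{G}^{t-1}$ lies in the argmin. Together with Theorem 1's inclusion this gives the claimed equality.

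\textbf{Main obstacle.} The delicate point is the order-independence of which original components get merged — i.e. proving rigorously that "reachability in $G^t \cup E'$ modulo $G^t$" is exactly captured by every greedy maximal run, rather than just bounding cardinalities. I would handle it by the standard matroid exchange argument applied to the minor of $G^t \cup E'$ obtained by contracting each component of $G^t$ to a single vertex: the candidate edges become the edges of this minor, acyclic extensions of $G^t$ correspond to forests in the minor, maximal ones correspond to spanning forests, and all spanning forests of a fixed graph have the same number of edges ($=$ number of vertices minus number of connected components). Care is needed only to check that self-loops in the contracted minor (candidate edges inside a single component of $G^t$) are correctly excluded — but such an edge would create a cycle immediately and is rejected by the projector in any order, so it never contributes, consistent with it being a loop in the matroid.
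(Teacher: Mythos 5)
Your proposal is correct and follows essentially the same route as the paper: reduce $\operatorname{GED}$ minimization over $\mathcal{C}$ (via Theorem 1) to maximizing the number of candidate edges inserted into $G^t$, then argue that every order of greedy insertion accepts the same number of edges because an edge is insertable iff its endpoints lie in different current connected components, and combine with Theorem 1's inclusion $\operatorname{argmin} \subseteq \mathcal{G}^{t-1}$ to get equality. Your spanning-forest count $m-m'$ on the contracted minor is in fact a slightly more careful version of the paper's bound $|\text{CC}_{\text{candidate}}|-1$, which implicitly assumes the candidate edges link all touched components into a single cluster, so no gap.
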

\begin{proof}

Following the proof of Theorem 1,
if we define again $\hat{G}^{t-1} = G^t \cup E_{\text{candidate}}$, we have:
\begin{align*}
    \min_{G \in \mathcal{C}} \operatorname{GED}(\hat{G}^{t-1}, G)  
    & = \min_{(G^t \cup E_G) \in \mathcal{C}} \operatorname{GED}(G^t \cup E_{\text{candidate}}, G^t \cup E_{G}) \\
    & = \min_{ \{E_{G}| \; P(G^t \cup E_{G})=\text{True}, \; E_{G} \subset E_{\text{candidate}} \} }  |E_{\text{candidate}} \setminus E_{G}| \\
    & =  \max_{ \{E_{G}| \; P(G^t \cup E_{G})=\text{True}, \; E_{G} \subset E_{\text{candidate}} \} } |E_G|, \\
\end{align*}
where the first equality is just a change of variables and the second comes from $ E_{\text{candidate}} \supset E_{G}$, as shown in Theorem 1.
This shows that a solution to $\operatorname{argmin}_{G \in \mathcal{C}} \operatorname{GED}(\hat{G}^{t-1}, G)$ maximizes the number of edges added to $G^t$. This is a general result under the conditions of Theorem 1 and not specific for graphs with no cycles.
We now want to show that any element of $\mathcal{G}^{t-1}$ is a solution to this optimization problem.

We define $|\text{CC}_{\text{candidate}}|$ as the number of distinct connected components of $G^t$ reached by $E_{\text{candidate}}$. We remark that the considered graphs all have the same fixed number of nodes.
A well-known result for graphs without cycles is that, under the provided setting, the maximum number of edges from $E_{\text{candidate}}$ that we can insert is $|\text{CC}_{\text{candidate}}|-1$, i.e., we sequentially insert an edge per pair of separate connected components. Thus, we have:
    \begin{equation*}
        \max_{ \{E_{G}| \; P(G^t \cup E_{G})=\text{True}, \; E_{G} \subset E_{\text{candidate}} \} } |E_G| \leq |\text{CC}_{\text{candidate}}| - 1.
    \end{equation*}

On the other hand, the only edges from $E_{\text{candidate}}$ that the projector rejects are the ones creating cycles. This means that the refused edges would not reduce the number of separate connected components, since they would connect vertices already in the same connected component. Thus, for a graph $G_\text{projector} = G^t \cup E_\text{projector} \in \mathcal{G}^{t-1}$, we necessarily have:
    \begin{equation*}
        |E_\text{projector}| \geq |\text{CC}_{\text{candidate}}| - 1,
    \end{equation*}
which tightly matches the upper bound for the  optimization problem seen above. Consequently, $|E_\text{projector}| = |\text{CC}_{\text{candidate}}| - 1$ and any $G_\text{projector} \in \mathcal{G}^{t-1}$ is necessarily solution of the optimization problem, i.e.:
    \begin{equation*}
        \mathcal{G}^{t-1} \subset \operatorname{argmin}_{G \in \mathcal{C}} \operatorname{GED}(\hat{G}^{t-1}, G)
    \end{equation*}
\end{proof}

For other edge-deletion invariant properties, we provide examples of $G_\text{projector}$ with a different number of edges inserted by the projector, $|E_\text{projector}|$, in \Cref{fig:table_cardinalities}. These are necessarily counter-examples to what was proved in \Cref{th:acyclicity_equals} for acyclic graphs ($\mathcal{G}^{t-1} \subset \operatorname{argmin}_{G \in \mathcal{C}} \operatorname{GED}(\hat{G}^{t-1}, G)$). Nevertheless, the opposite relation still holds from \Cref{th:simplified_construct_attains_optimal}.

\begin{figure}
    \centering
    \includegraphics[width=\textwidth]{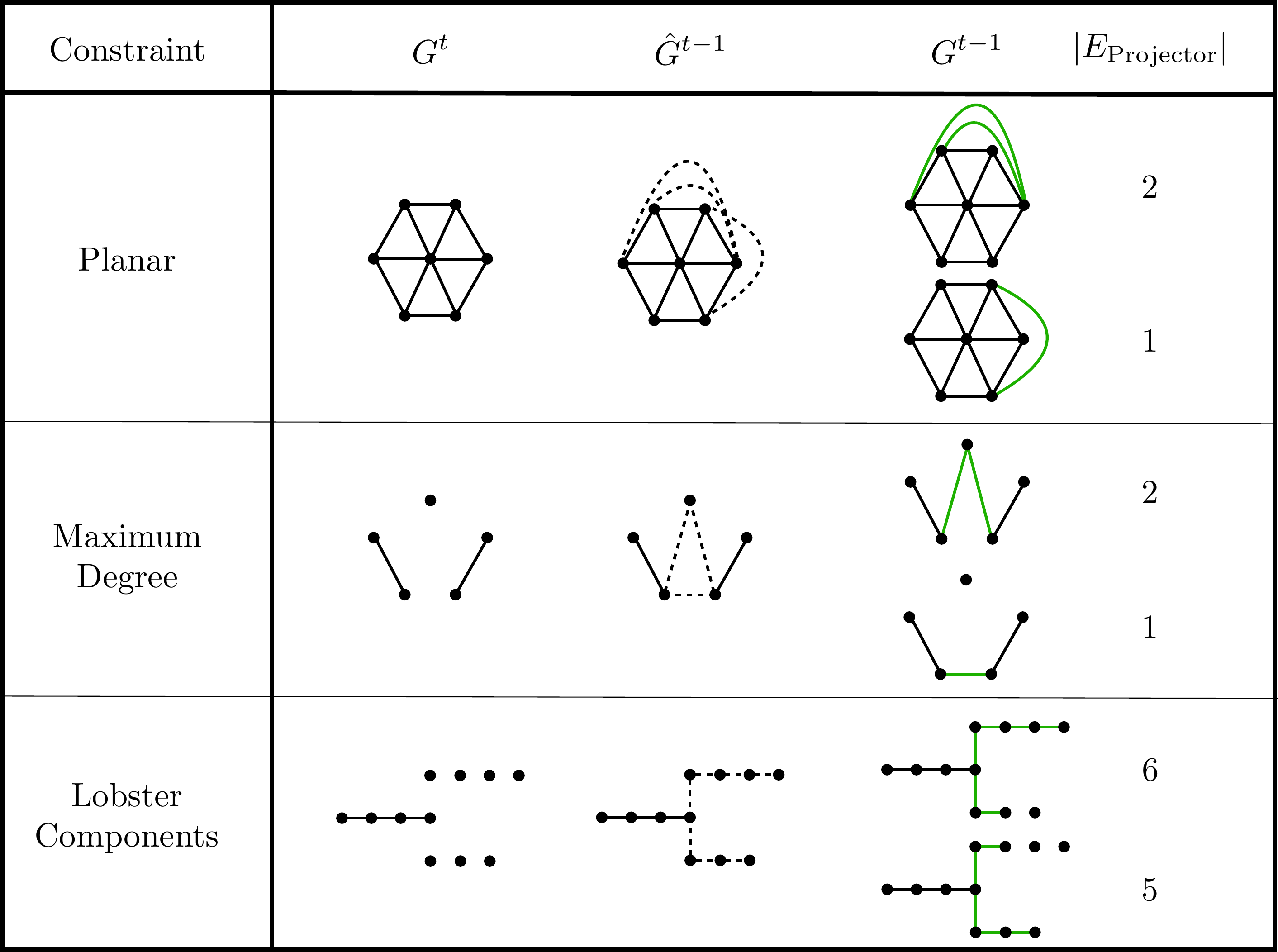}
    \caption{Examples of different $G^{t-1}$ that can be yielded by $\operatorname{Projector}(P, \hat{G}^{t-1}, G^t)$ for given $P$ (column ``Constraint''), $G^t$, and $\hat{G}^{t-1}$ (columns with the respective name) that lead to the insertion of a different number of edges. For the maximum degree row, the example given considers that the maximum allowed degree is 2. For the column $\hat{G}^{t-1}$, the dashed lines represent the candidate edges. For the column $G^{t-1}$, the green lines denote the actually inserted edges by the projector.}
    \label{fig:table_cardinalities}
\end{figure}

Overall, \Cref{th:simplified_construct_attains_optimal}, \Cref{th:acyclicity_equals}, and the counter-examples in \Cref{fig:table_cardinalities} show that the problem that the projector is addressing is not so trivial such that it can always output the optimal graph in the $\operatorname{GED}$ sense for all the edge-deletion invariant properties. Nevertheless, our projector is still guaranteed to produce graphs that meet the specified structural constraints. 

\clearpage
\section{Incremental Algorithms}\label{app:incremental_algos}

As discussed in \Cref{subsec:efficiency_projector}, utilizing the projector in the edge insertion reverse process (i.e., $G^t \subset G^{t-1}$) allows us to further enhance efficiency by leveraging incremental algorithms for graph property satisfaction checking. These algorithms avoid performing a full property satisfaction check on the new graph at each timestep. Instead, they assume that the previous graph (i.e., before the new edge was added) already satisfies the target structural property. Incremental algorithms focus on verifying the impact of the newly added edge by updating and checking only the affected parts of smartly designed data structures. 
In other words, contrary to their full graph counterparts, incremental algorithms allow for property satisfaction checks at a local level. 
This approach accelerates the property satisfaction checking process by reducing redundant computation.

In this section, we discuss the incremental property satisfaction algorithms for the edge-deletion invariant properties analysed throughout the paper. We also note that due to the combinatorial nature of edge-deletion invariant properties, each property satisfaction algorithm is specific to the property in question. There is no general efficient property satisfaction checker for all edge-deletion invariant properties. Consequently, we address each property on a case-by-case basis.

\paragraph{Planar} The best performing full property satisfaction algorithm known for planarity is $O(n)$~\cite{hopcroft1974efficient}, while its fastest known incremental test has amortized running time of $O(\alpha(\mathfrak{q},n))$~\cite{la1994alpha} (``almost constant'' complexity), where $\mathfrak{q}$ is the total number of operations (edge queries and insertions), and $\alpha$ denotes the inverse-Ackermann function.

\paragraph{Acyclicity} In generic undirected graphs (our case), the usual full tests via DFS/BFS have a complexity of $O(n + |E|)$, i.e., the algorithms have to traverse the full graph to reject the existence of any cycle. However, for the dynamic case, given that $G^t$ has no cycles, we can only check if the added edges, $E_{\text{added}}$, connect nodes already in the same connected component. This check can be efficiently performed if we keep an updated hashtable that maps each node to the index of the connected component it belongs at that iteration (an isolated node is a connected component) and another one with all the nodes belonging to each connected component. Whenever there is a new edge proposed, we check if the nodes are already in the same connected component. If not, we insert the edge and update the two hashtables accordingly; otherwise, we reject the edge since it would create a cycle. Therefore, the cycle check can be done in $O(|E_{\text{added}}|)$.

\paragraph{Lobster Components} Its global test involves removing twice the leaves of the graph and checking if the remaining connected components are paths. This algorithm has a complexity of $O(|E|)$. For the incremental version, we can use a similar approach to that of the absence of cycles but additionally check if the newly connected node is not more than two hops away from the path in its connected component, as lobster graphs are specific instances of forest graphs. If, again, we keep track of the paths of each connected component in a hashtable, we still get an incremental algorithm of complexity $O(|E_{\text{added}}|)$ for this property.

\paragraph{Maximum Degree} The optimal full property satisfaction algorithm has a complexity of $O(n)$ since it has to perform a degree check across all nodes. The incremental version is naturally just a quick check for nodes that are vertices of $E_{\text{added}}$. Again, if we keep an updated hashtable with the degree of each node, this can be quickly performed in $O(|E_{\text{added}}|)$.

\clearpage
\section{Experimental Details}\label{app:experimental_details}

\subsection{Training Details}\label{app:training_details}

As mentioned in \Cref{sec:experiments}, we follow the splits originally proposed for each of the unattributed datasets (lobster~\cite{liao2019efficient}, planar~\cite{martinkus2022spectre}, and tree~\cite{bergmeister2023efficient}): 80\% of the graphs are used in the train set and the remaining 20\% are allocated to the test set. We use 20\% of the train set as validation set. We note that for the lobster dataset, the original splits provided in the open-source code from \citet{liao2019efficient} use the validation set as a subset of the train set, i.e., all the samples in the validation set are used to train. In contrast, we follow \citet{martinkus2022spectre}'s protocol, isolating completely the validation samples from the train (again, 20\% of the train split). In any case, the test splits are coincident between our approach and the one from \citet{liao2019efficient}. For the digital pathology datasets, we follow the same protocol. 

These splits and the hyperparameters used for each model are provided in the computational implementation of the paper as the default values for each of the experiments. For each configuration, we save the five best models in terms of negative log likelihood and the last one (for ConStruct, we compute the likelihood of the corresponding unconstrained model) and pick the best performing model across those six checkpoints. Regarding the optimizer, we used the AMSGrad~\cite{reddi2018convergence} version of AdamW~\cite{loshchilov2018decoupled} with a learning rate of 0.0002 and weight decay of 1e-12 for all the experiments.

\subsection{Resources}\label{app:resources}

All our experiments were run in a single Nvidia V100 32Gb GPUs. We present the training times of the diffusion model for each dataset in \Cref{tab:training_times}.

\begin{table*}[h]
    \centering
    \caption{Training times for the diffusion model in different datasets.}
    \begin{tabular}{l*{2}{S}}
        \toprule
        {Dataset} & {Training Time (h)} \\
        \midrule
        {Planar} & {48} \\
        {Tree}  & {44} \\
        {Lobster} & {50} \\
        {High TLS} & {61} \\
        {Low TLS} & {61} \\
        {QM9} & {9.5} \\
        {MOSES} & {335.5} \\
        {GuacaMol} & {502} \\
        \bottomrule
    \end{tabular}
    \label{tab:training_times}
\end{table*}

The baseline model for the digital pathology dataset does not use any GPU. It takes 0.6s to train and 2 minutes to sample from. The sampling times for ConStruct can be found in \Cref{app:sampling_times}. As the order of magnitude of training times is significantly larger than the one of sampling times, \Cref{tab:training_times} provides a good estimate of the total computational resources required for this paper.

\subsection{Runtimes}\label{app:sampling_times}

A major advantage of our framework is that it does not interfere with the training of the diffusion model, preserving its efficiency. Therefore, there is no overburden in the training time caused by ConStruct. For this reason, in this section we only analyse the different sampling runtimes. In particular, we track the sampling times of DiGress+ and the ones of ConStruct with and without the efficiency boosting components described in \Cref{subsec:efficiency_projector} (edge blocking hashtable and incremental property satisfaction algorithm).

Additionally, a natural procedure to ensure 100\% constraint verification solely using DiGress+ is to first directly perform unconstrained generation and then applying a validation process to filter out the ones that do not verify the constraint. This \emph{a posteriori} filtering requires a full graph property check, run only once after the graph has been generated but that is thus more computationally expensive when compared to their incremental versions employed by ConStruct. The complexity comparison between a full graph property check ran only once \textit{vs} an incremental check for each added edge is property specific. However, the main bottleneck of the \textit{a posteriori} filtering is that it wastes computational resources in case of graph rejection (i.e., in the case of a graph not verifying the property, all the resources used in its generation and full graph property checking are wasted) and requires restarting the sampling from zero again with an additional property check at the end. Furthermore, this procedure has to be performed sequentially, since we can only check the graphs constraint satisfaction after their generation. ConStruct avoids such redundancy and, thus, waste of computation by generating property satisfying graphs by design: throughout the reverse process, we know that the previous graph verified the property, so we can just check property satisfaction for the newly added edges (via incremental algorithm) that have not been checked yet (via edge blocking hashtable). 

We compare the sampling runtimes of the aforementioned algorithmic variants in the table below. We run this experiment for the tree dataset, where we use acylicity as target structural property. We picked this dataset due to its simpler incremental check, described in \Cref{app:incremental_algos}.

\begin{table*}[h]
    \centering
    \caption{Runtimes comparison. We performed five sampling runs for each method and present their results in the format mean ± standard error of the mean. For each run, we generated 100 graphs. All our experiments were run in a single Nvidia V100 32Gb GPUs. ConStruct [efficient] uses the edge blocking hashtable and the incremental version of the property satisfaction algorithm, while ConStruct [baseline] does not. DiGress+ refers to regular unconstrained generation, while DiGress+ [rejection] applies \emph{a posteriori} filtering of unconstrained generation until we get the intended amount of graphs.}
    \begin{tabular}{l*{1}{S}}
        \toprule
        {Dataset} & {Sampling Time (s)} \\
        \midrule
        {DiGress+} & {266.0 \scriptsize{±0.1}}  \\
        {DiGress+ [rejection]}  & {310.7 \scriptsize{±5.5}} \\
        {ConStruct [efficient]}  & {290.2 \scriptsize{±0.1}} \\
        {ConStruct [baseline]}  & {349.0 \scriptsize{±0.3}} \\
        \bottomrule
    \end{tabular}
    \label{tab:sampling_times}
\end{table*}

By implementing the edge blocking hashtable and the incremental checker, we observe a significant efficiency improvement: the additional runtime imposed by ConStruct over the unconstrained setting decreases from 31\% to 9\%. This trend should hold for other datasets as far as both lookup and update operations in hashtables ($O(1)$) and incremental property checks are more efficient than full-graph constraint checks, which is the typical case.

ConStruct also outperforms the \emph{a posteriori} filtering of unconstrained generation. In this case, we used an unconstrained model that generates constraint satisfying properties 97\% (DiGress+ in \Cref{tab:synthetic_graphs} of the paper), therefore largely benefitting the unconstrained model. For example, if we considered the digital pathology setting, where we can have only 6.6\% (see \Cref{tab:digipath_graphs}, high TLS dataset, DiGress+) of the generated graphs with the unconstrained model satisfying the constraint, the amount of wasted computation would be dramatically larger, implying a much worse runtime. 
In such setting, ConStruct would be approximately 12 times more efficient in generating valid graphs than DiGress+.
Additionally, this gap in sampling efficiency may become particularly critical in settings where the amount of generated graphs is much larger, as is the case for molecular generation (two orders of magnitude greater than in the settings with synthetic datasets, see \Cref{app:molecular_datasets}).

\clearpage
\section{Synthetic Datasets}\label{app:syn_datasets}

In this section, we provide further information about the unattributed synthetic datasets used in \Cref{sec:exp_synthetic_graphs}.

\subsection{Statistics}

In \Cref{tab:syn_data_stats}, we provide the minimum, maximum, and average number of nodes, minimum, maximum, and average number of edges, and the number of training, validation and test graphs used for each synthetic unattributed dataset.

\begin{table*}[h]
    \centering
    \caption{Synthetic dataset statistics. $\#$Train, $\#$Val and $\#$Test denote the number of graphs considered in the train, validation and test splits, respectively.}
    \resizebox{\linewidth}{!}{
    \begin{tabular}{l*{9}{S}}
        \toprule
        {Dataset} & {Min. nodes} & {Max. nodes} & {Avg. nodes} & {Min. edges} & {Max. edges} & {Avg. edges} & {$\#$Train} & {$\#$Val} & {$\#$Test} \\
        \midrule
        {Planar} & {64} & {64} & {64} & {173} & {181} & {177.8} & {128} & {32} & {40} \\
        {Tree} & {64} & {64} & {64} & {63} & {63} & {63} & {128} & {32} & {40} \\
        {Lobster} & {11} & {99} & {50.2} & {10} & {99} & {49.2} & {64} & {16} & {20}\\
        \bottomrule
    \end{tabular}
    }
    \label{tab:syn_data_stats}
\end{table*}

\subsection{Compared Methods}\label{app:compared_methods}

In \Cref{sec:experiments}, we compare ConStruct with several unconstrained graph generative models. We consider:
\begin{itemize}[topsep=0pt, partopsep=0pt, itemsep=0pt, parsep=0pt]
    \item the two first widely adopted autoregressive models for graph generation, GraphRNN~\cite{you2018graphrnn} and GRAN~\cite{liao2019efficient};
    \item two spectrally conditioned methods: SPECTRE~\cite{martinkus2022spectre} is a GAN-based approach and HSpectre~\cite{bergmeister2023efficient} consists of an iterative local expansion method that takes advantage of a score-based formulation for intermediate steps;
    \item we also compare to the original implementation of DiGress~\cite{vignac2022digress} without the additional features described in \Cref{app:training_algo};
    \item GraphGen~\cite{goyal2020graphgen} is a scalable autoregressive method based on graph canonization through minimum DFS codes. Importantly, this method is domain-agnostic and supports attributed graphs by default;
    \item GraphGen-Redux~\cite{bacciu2021graphgen} improves over GraphGen by jointly modelling the node and edge labels;
    \item BwR~\cite{diamant2023improving} and GEEL~\cite{jang2023simple} also explore more scalable graph representations via bandwidth restriction schemes, which are then fed to other graph generation architectures;
    \item HDDT~\cite{jang2023graph} leverages a $K^2-$tree representation of graphs to capture their hierarchical structure in an autoregressive manner;
    \item GDSS~\cite{jo2022score} is a purely score-based formulation for graph generation;
    \item BiGG~\cite{dai2020scalable} is a parallelizable autoregressive model that takes advantage of graph sparsity to scale for large graphs;
    \item EDGE~\cite{chen2023efficient} is a degree-guided scalable discrete diffusion method (more details in \Cref{sec:related_work}).
    
\end{itemize}

\clearpage
\section{Digital Pathology}\label{app:digipath}

In this section, we go through additional information related to the digital pathology datasets.

\subsection{Digital Pathology Primer}\label{app:digipath_primer}

Digital pathology consists of an advanced form of pathology that involves digitizing tissue slides into whole-slide images (WSI), allowing for computer-based analysis and storage. Deep learning approaches quickly integrated digital pathology processing methods, primarily focusing on extracting image-level representations for tasks such as slide segmentation and structure detection. These have also  been used for downstream tasks such as cancer grading, or survival prediction~\cite{bera2019artificial,serag2019translational}. However, existing image-based approaches face challenges with the sizes of WSIs, requiring their patching. This procedure raises a trade-off between the context and the size of the patch provided to the model. Moreover, image-based deep learning lacks efficient representations of biological entities and their relations, resulting in less interpretable models. Recently, entity-graph based approaches have emerged as a promising alternative to evade such limitations~\cite{jaume2021histocartography,ahmedt2022survey}. These graphs are built by directly assigning nodes to biological entities and modelling their interactions with edges~\cite{gunduz2004cell,jaume2021histocartography}, providing enhanced predictive performance and interpretability~\cite{jaume2021quantifying,wu2022graph}. 

Importantly, most of the deep learning contributions in the digital pathology realm have been in the discriminative setting. However, digital pathology could profoundly benefit from the development of generative formulations in several dimensions: first, there is is a lack of high-quality annotated samples, mostly due to their heavy ethical and privacy regulation. Besides, collecting these samples is remarkably costly, both economically and in terms of time and labor required~\cite{jaume2021histocartography}. Most of the discriminative approaches are also instance-based. The developed models then become highly sensitive to distribution shifts, which is a common challenge across biomedical datasets, for instance due to batch effects~\cite{falahkheirkhah2023domain}. The development of generative models in digital pathology can address these limitations by enabling both the generation of synthetic data and distribution-based characterisations of the data. Even though some approaches have been carried out using image-based methods (e.g., GANs~\cite{jose2021generative} or even diffusion models~\cite{moghadam2023morphology}), these lack the advantages of graph-based approaches.
To the best of our knowledge, graph-based generative modelling in digital pathology has only been explored by~\citet{madeira2023tertiary}. Despite the promising results for data augmentation settings, only an off-the-shelf graph generative model (DiGress) is explored and in a proprietary dataset.

\subsection{Building Whole-slide Cell Graphs}\label{app:cell_graphs}

We build the whole-slide cell graphs from the genomic and clinical data available from the Molecular Taxonomy of Breast Cancer International Consortium (METABRIC) molecular dataset\footnote{Data retrieved from \url{https://zenodo.org/records/7324285}}. 
This dataset has been extensively used in previous breast cancer studies~\cite{rueda2019dynamics,curtis2012genomic,danenberg2022breast}. Using the single cell data, we mapped 32 different annotated cell phenotypes to 9 more generic phenotypes in a biologically grounded manner. We used the mapping detailed in \Cref{tab:phenotype_mapping}. Therefore, each node is assigned to one of the resulting nine possible phenotypes. We assume these phenotypes to extensively characterize a cell both anatomically and physiologically.

\begin{table}[ht]
\centering
\caption{Mapping used to convert the original phenotypes to the adopted phenotypes.}
\label{tab:phenotype_mapping}
\begin{tabular}{ll}
\toprule
\multicolumn{1}{c}{\textbf{Original Phenotype}} & \multicolumn{1}{c}{\textbf{Mapping Phenotype}} \\
\midrule
CK8-18$^{\text{hi}}$CXCL12$^{\text{hi}}$ & Epithelial \\
HER2$^{+}$ &  \\
MHC$^\text{hi}$CD15$^{+}$ &  \\
CK8-18$^\text{hi}$ER$^{lo}$ &  \\
CK$^\text{lo}$ER$^\text{lo}$ &  \\
CK$^\text{lo}$ER$^\text{med}$ &  \\
CK8-18$^{+}$ ER$^\text{hi}$ &  \\
CK$^\text{med}$ER$^\text{lo}$ &  \\
MHC I \& II$^\text{hi}$ &  \\
Basal &  \\
Ep CD57$^{+}$ &  \\
MHC I$^\text{hi}$CD57$^\text{+}$ &  \\
ER$^\text{hi}$CXCL12$^\text{+}$ &  \\
Ep Ki67$^{+}$ &  \\
CK$^{+}$ CXCL12$^{+}$ &  \\
CD15$^{+}$ &  \\
\midrule
Endothelial & Endothelial \\
\midrule
Macrophages \& granulocytes & Macrophages/Granulocytes \\
Macrophages &  \\
Granulocytes &  \\
\midrule
Fibroblasts & Fibroblast \\
Fibroblasts FSP1$^{+}$ &  \\
\midrule
Myofibroblasts & Myofibroblast \\
Myofibroblasts PDPN$^{+}$ &  \\
\midrule
CD4$^{+}$ T cells & T \\
CD4$^{+}$ T cells \& APCs &  \\
CD8$^{+}$ T cells &  \\
T$_\text{Reg}$ \& T$_\text{Ex}$ &  \\
\midrule
B cells & B \\
\midrule
CD57$^{+}$ & Marker \\
Ki67$^{+}$ &  \\
\midrule
CD38$^{+}$ lymphocytes & CD38+ Lymphocyte \\
\bottomrule
\end{tabular}
\end{table}

Regarding edges, we followed the typical procedure for cell-graphs in digital pathology~\cite{jaume2021histocartography,jaume2021quantifying,ahmedt2022survey,wu2022graph}: first we used Delaunay triangulation on the cell positions to build them. Then, we discard edges longer than 25 $\mu m$. We note that we obtain different graphs than the ones considered by \citet{danenberg2022breast}. In terms of dimensionality, we obtain graphs with $b=9$ and $c=1$. We focus on the generation of simple yet biologically meaningful structures, Tertiary Lymphoid Structures (TLSs), further described in the next section. Thus, we extract 4-hop non-overlapping subgraphs centered at nodes whose class is ``B-cell" from the whole-slide graphs.

\subsection{Tertiary Lymphoid Structures}\label{app:tertiary_lymphoid_structures}

Tertiary Lymphoid Structures (TLSs) are simple yet biologically meaningful structures. Structurally, TLSs are well-organized biological entities where clusters of B-cells are enveloped by supporting T-cells. Typically observed in ectopic locations associated with chronic inflammation~\cite{pitzalis2014ectopic,schaadt2020graph}, these structures have been linked to extended disease-free survival in cancer~\cite{helmink2020b,lee2015tertiary,dieu2016tertiary,munoz2020tertiary,schaadt2020graph}, thus constituting an important indicator for medical prognosis in cancer. Since these are small structures when compared with the size of whole-slide graph, we extract non-overlapping 4-hop subgraphs centered at nodes whose class is ``B'', corresponding to B cells, from the WSI graphs. This procedure is illustrated in \Cref{fig:tls_extraction}.

\begin{figure*}[tb]
    \centering
    \includegraphics[width=\linewidth]{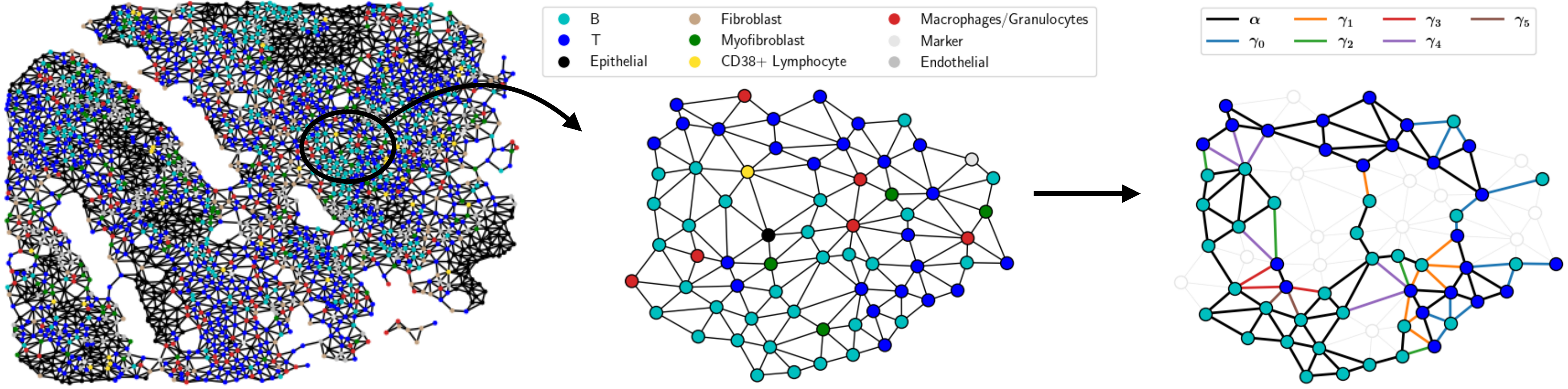}
    \vspace{-20pt}
    \caption{Extraction of a cell subgraph (center) from a WSI graph (left). From this cell subgraph, we can then compute the TLS embedding based on the classification of the edges into different categories, shown on the right. We can observe a cluster of B-cells surrounded by some support T-cells, characteristic of a high TLS content.}
    \label{fig:tls_extraction}
\end{figure*}

As mentioned in \Cref{sec:digipath}, the TLS content of a cell graph can be quantified using the \textit{TLS embedding}, $\kappa=[\kappa_0, \ldots, \kappa_5] \in \mathbb{R}^6$~\cite{schaadt2020graph,madeira2023tertiary}.
This TLS-like organization metric considers only edges between B and T-cells and classifies them into several categories: $\alpha$ edges link two cells of the same type, while $\gamma_j$ edges connect a B to a T-cell, where $j$ is the number of B-cell neighboring the B-cell vertex (see \Cref{fig:tls_extraction}). Therefore, the entry $i$ of $\kappa$ is defined as the proportion of its $\gamma$ edges whose index is larger than $i$:
\begin{equation}
    \label{eq:tls_embedding}
    \kappa_i(G) = \frac{|E_\text{BT}| - |E_\alpha| - \sum_{j=0}^i |E_{\gamma_j}|}{|E_\text{BT}| - |E_\alpha|},
\end{equation}
where $|E_\text{BT}|$, $|E_\alpha|$, and $|E_{\gamma_j}|$ correspond to the number of edges whose both vertices are B or T-cells, of $\alpha$ edges and of $\gamma_j$ edges in a given graph, $G$.
Note that, by definition, the entries of $\kappa$ take values between 0 and 1 and are monotonically non-increasing with $i$.

\subsection{Statistics of digital pathology datasets}\label{app:digipath_stats}

In this section we provide the statistics for the low and high TLS content datasets. In \Cref{tab:digipath_data_stats}, we provide their structural statistics and, in \Cref{tab:digipath_data_marginals}, the prevalence for each of the nine phenotypes (after mapping) across all the nodes in the datasets. In \Cref{fig:tls_embeddings distributions}, we provide their entry-wise distributions for the TLS embedding, $\kappa$.

\begin{table*}[h]
    \centering
    \caption{Digital pathology datasets statistics. Here, we report the same stats as in \Cref{tab:syn_data_stats}. $\#$Train, $\#$Val and $\#$Test denote the number of graphs considered in the train, validation, and test splits, respectively.}\resizebox{\linewidth}{!}{
    \begin{tabular}{l*{9}{S}}
        \toprule
        {Dataset} & {Min. nodes} & {Max. nodes} & {Avg. nodes} & {Min. edges} & {Max. edges} & {Avg. edges} & {$\#$Train} & {$\#$Val} & {$\#$Test} \\
        \midrule
        {High TLS} & {20} & {81} & {57.9} & {39} & {203} & {143.8} & {128} & {32} & {40} \\
        {Low TLS} & {20} & {81} & {51.7} & {37} & {204} & {123.7} & {128} & {32} & {40} \\
        \bottomrule
    \end{tabular}
    }
    \label{tab:digipath_data_stats}
\end{table*}

\begin{table*}[h]
    \centering
    \caption{Prevalence (in \%) of the different cell phenotypes for the digital pathology datasets.}
    \resizebox{\linewidth}{!}{
    \begin{tabular}{l*{9}{S}}
        \toprule
        {Dataset} & {B} & {CD38+ Lymphocyte} & {Endothelial} & {Epithelial} & {Fibroblast} & {Macrophages/Granulocytes} & {Marker} & {Myofibroblast} & {T} \\
        \midrule
        {High TLS} & {39.3} & {1.9} & {4.6} & {9.4} & {4.4} & {6.3} & {0.6} & {7.2} & {26.4} \\
        {Low TLS} & {7.7} & {2.4} & {5.9} & {33.4} & {17.7} & {8.4} & {0.2} & {9.9} & {14.1} \\
        \bottomrule
    \end{tabular}
    }
    
    \label{tab:digipath_data_marginals}
\end{table*}

\begin{figure}
    \centering
    \includegraphics[width=0.48\linewidth]{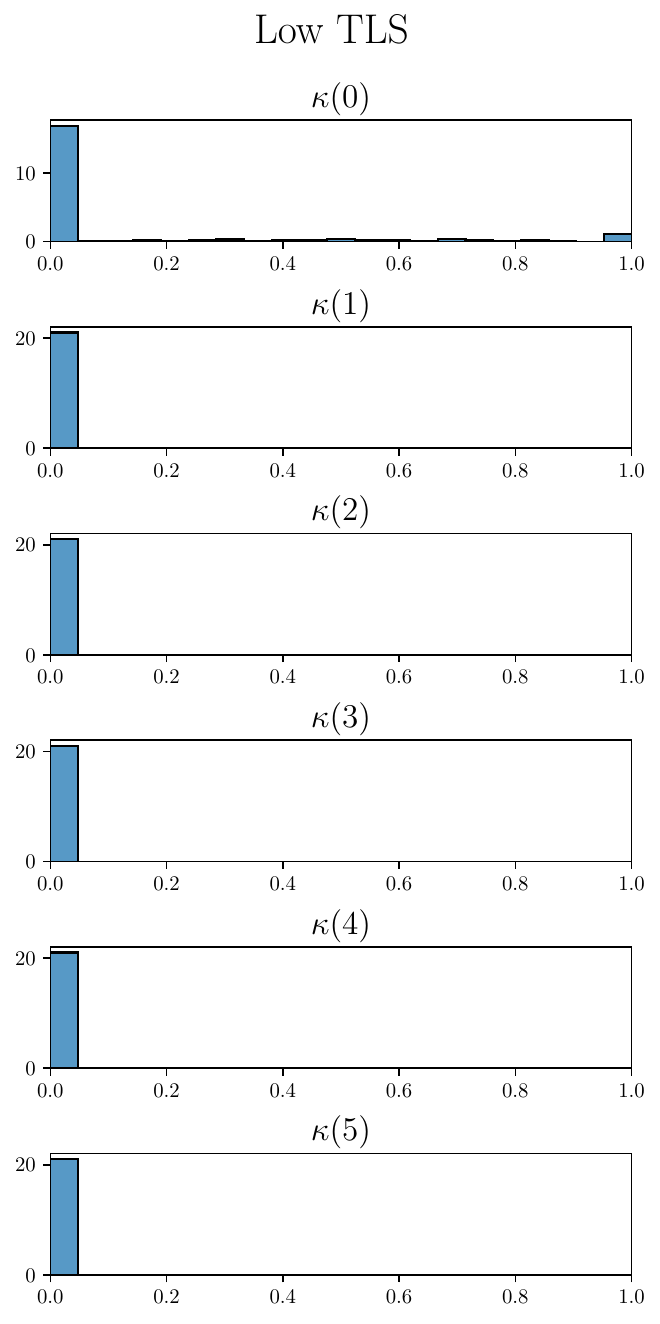}
    \includegraphics[width=0.48\linewidth]{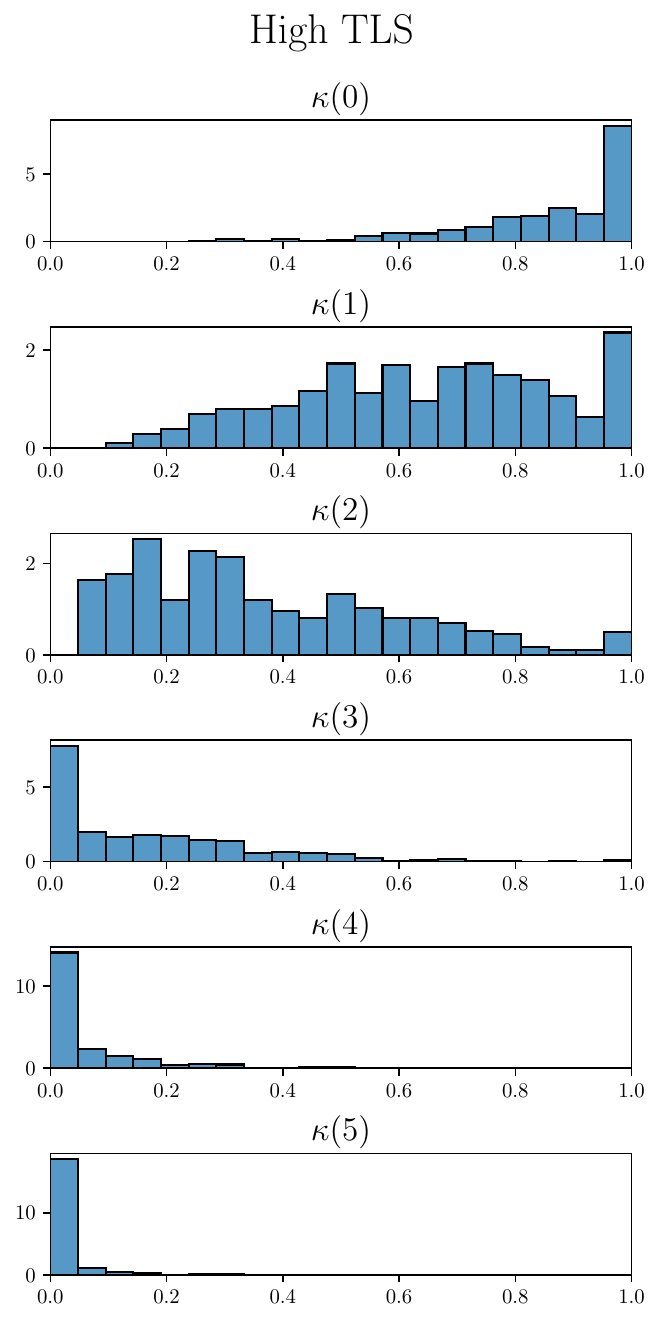}
    \caption{Distributions of the TLS embedding entries for the low TLS (left) and the high TLS (right) datasets.}
    \label{fig:tls_embeddings distributions}
\end{figure}

\subsection{Baseline Method for Digital Pathology}\label{app:baseline}
The non-deep learning method used as baseline for the digital pathology dataset follows~\citet{madeira2023tertiary}. This model learns three distributions by counting the frequencies of given events in the train dataset. In particular:
\begin{itemize}[leftmargin=10pt]
    \item Categorical distribution for the number of nodes, where the probability of sampling a given number of nodes is the same as its proportion in the train dataset, $D_{\text{train}}$, i.e.:
    \begin{equation*}
        P(|X| = k) = \frac{{\left| \{ G \in D_{\text{train}}: |X| = k \} \right|}}{\left| D_{\text{train}} \right|},    
    \end{equation*}
    \item Categorical distribution for the cell phenotypes, where the probability for each cell phenotype corresponds to its marginal probability in the dataset:
    \begin{equation*}
        P(\operatorname{Ph}(\nu) = \operatorname{ph}_i) = \frac{{\sum_{G \in D_{\operatorname{train}}} \left| \{ \nu \in X : \operatorname{Ph}(\nu) = \operatorname{ph}_i \} \right|}}{{\sum_{G \in D_{\operatorname{train}}} \left| X \right|}},
    \end{equation*}
    where \( \operatorname{Ph}(\nu) \) refers to the phenotype of node \( \nu \), and \( \operatorname{ph}_i \) denotes the specific phenotype labeled as \( i \). These consist of the phenotypes described in \Cref{app:cell_graphs} with a fixed (arbitrary) order. 
    
    \item Bernoulli distribution for the edge type (no edge \textit{vs} edge) conditioned on the phenotypes of its two vertices, again computed based on its marginal distribution in the train set. 
    \begin{equation*}
        P(\operatorname{Edge} \mid \text{Ph}(\nu_1) = \text{ph}_i, \text{Ph}(\nu_2) = \text{ph}_j) = \frac{{\sum_{G \in D_{\operatorname{train}}} \left| \left\{ (v_1, v_2) \in C(G): (v_1, v_2) \in E \right \} \right| }}{{\sum_{G \in D_{\operatorname{train}}} \left| C(G) \right|}},
    \end{equation*}
    where $C(G) = \{ v_1 \in X: \operatorname{Ph}(v_1) = \operatorname{ph}_i \} \times \{ v_2 \in X : \operatorname{Ph}(v_2) = \operatorname{ph}_j \}$ for $ 1 \leq \operatorname{ph}_i < \operatorname{ph}_j \leq 9$.
\end{itemize}
To sample a new graph, we first sample a number of nodes for the graph from the first distribution. Then, for each of those nodes sample a cell phenotype from the second distribution. Finally, between every pair of cells, we sample an edge type given the two phenotypes previously sampled from the third distribution. This sampling algorithm is described in \Cref{alg:sampling_baseline}.

\begin{algorithm}[H]
  \caption{Sampling Algorithm for the Digital Pathology Baseline}
  \label{alg:sampling_baseline}
  \SetAlgoLined
  \KwIn{Number of graphs to sample $N$}
  \For{$i = 1$ \KwTo $N$}{
    Sample $|X| \sim P(|X|)$   \tcp*[r]{Sample number of nodes}
    \For{$n = 1$ \KwTo $|X|$}{
      $X[n] \sim P(\operatorname{Ph}(\nu))$   \tcp*[r]{Sample node phenotypes}
    }
    \For{$1\leq i < j \leq 9$}{
      $E[i,j] \sim P(\text{Edge} \mid \text{Ph}(X[i]), \text{Ph}(X[j]))$   \tcp*[r]{Sample edges}
    }
    Store $G = (X, E)$\;
  }
\end{algorithm}

\clearpage
\section{Molecular Datasets}\label{app:molecular_datasets}

\subsection{Exploring Planarity}
In this section, we show the results for 3 molecular datasets: QM9~\cite{wu2018moleculenet}, MOSES~\cite{polykovskiy2020molecular} and GuacaMol~\cite{brown2019guacamol}. 
Importantly, for QM9 and GuacaMol, we include formal charges as additional node labels, since this information has been shown beneficial for diffusion-based molecular generation~\cite{vignac2023midi}. For MOSES, such information is not available.

The metrics used to evaluate generation were: 

\begin{itemize}
    \item FCD - Fréchet ChemNet Distance~\cite{preuer2018frechet}, similar to Fréchet Inception Distance (FID) but for molecules, represented as SMILES. This metric evaluates the similarity between the generated and test molecule sets, providing an indicator for the sample quality in unconstrained settings;
    \item Uniqueness - proportion of not repeated molecules across all the generated molecules;
    \item Novelty - proportion of generated molecules that are not in the train set;
    \item Valid - proportion of generated molecules that are valid. This metric evaluates sample validity.
\end{itemize}

In particular, we explore planarity as target structural property in the molecular setting, as previously suggested by recent works in discriminative tasks~\cite{dimitrov2024plane}. Therefore, we also include the proportion of planar molecules from the generated set as an evaluation metric. As a remark, QM9 and MOSES are exclusively composed of planar molecules. GuacaMol contains 3 non-planar molecules out of 1273104 molecules in the train set and 3 non-planar molecules out of 238706 molecules in the test set. We considered these non-planar examples negligible in model training/evaluation, thus fully preserving the original dataset. The validation set has 79568 planar molecules. The results are shown in \Cref{tab:molecular_datasets}.

\begin{table*}[ht]
    \caption{Graph diffusion performance on molecular generation. The constraining property used for ConStruct is planarity. 
   We performed five sampling runs for each method and present their results in the format mean ± standard error of the mean. For each run, we generated 10000, 25000, and 18000 generated molecules for QM9, MOSES, and GuacaMol, respectively, following the protocol from \citet{vignac2022digress}. Note that for MOSES and GuacaMol, we do not report their benchmarking metrics, as we focus on an overview comparative analysis of the two methods, DiGress+ and ConStruct. The FCD is computed using the official implementation from \citet{preuer2018frechet}.
    }
    \centering
    \resizebox{0.8\linewidth}{!}{
    \begin{tabular}{l*{5}{S}}
        
        \toprule
        & \multicolumn{5}{c}{QM9 Dataset} \\
        \cmidrule(lr){2-6}
        {Model} & {FCD\, $\downarrow$} & {Unique\, $\uparrow$} & {Novel\,$\uparrow$} & {Valid\,$\uparrow$} & {Planarity \,$\uparrow$} \\
        \midrule
        {DiGress+}  & {0.2090 \scriptsize{±0.0068}} & {96.0 \scriptsize{±0.1}} & {36.6 \scriptsize{±0.1}} & {99.0 \scriptsize{±0.0}} & {99.7 \scriptsize{±0.1}}  \\
        \rowcolor{Gray}
        {ConStruct} & {0.3443 \scriptsize{±0.0061}} & {96.1 \scriptsize{±0.1}} & {40.1 \scriptsize{±0.2}} & {98.5 \scriptsize{±0.0}} & {100.0 \scriptsize{±0.0}} \\

        \midrule
        & \multicolumn{5}{c}{MOSES Dataset} \\
        \cmidrule(lr){2-6}
        {Model} & {FCD\, $\downarrow$} & {Unique\, $\uparrow$} & {Novel\,$\uparrow$} & {Valid\,$\uparrow$} & {Planarity \,$\uparrow$} \\
        \midrule
        {DiGress+} & {0.5447 \scriptsize{±0.0080}} & {100.0 \scriptsize{±0.0}} & {93.5 \scriptsize{±0.1}} & {87.5 \scriptsize{±0.1}} & {100.0 \scriptsize{±0.0}}  \\
        \rowcolor{Gray}
        {ConStruct} & {0.6068 \scriptsize{±0.0045}} & {100.0 \scriptsize{±0.0}} & {93.7 \scriptsize{±0.1}} & {84.1 \scriptsize{±0.1}} & {100.0 \scriptsize{±0.0}} \\
        
        \midrule
        & \multicolumn{5}{c}{GuacaMol Dataset} \\
        \cmidrule(lr){2-6}
        {Model} & {FCD\, $\downarrow$} & {Unique\, $\uparrow$} & {Novel\,$\uparrow$} & {Valid\,$\uparrow$} & {Planarity \,$\uparrow$} \\
        \midrule
        {DiGress+} & {0.9663 \scriptsize{±0.0063}} & {100.0 \scriptsize{±0.0}} & {100.0 \scriptsize{±0.0}} & {84.7 \scriptsize{±0.2}} & {99.9 \scriptsize{±0.0}} \\
        \rowcolor{Gray}
        {ConStruct} & {1.0538 \scriptsize{±0.0045}} & {100.0 \scriptsize{±0.0}} & {100.0 \scriptsize{±0.0}} & {81.9 \scriptsize{±0.1}} & {100.0 \scriptsize{±0.0}}  \\

        \bottomrule
    \end{tabular}}
    \label{tab:molecular_datasets}
\end{table*}

We only observe an incremental improvement in the planarity satisfaction of the output graphs of ConStruct, since DiGress+ learns to almost always generate planar graphs. As a consequence, the constrained model ends up being marginally less expressive than the unconstrained one, as extensively discussed in \Cref{app:performance analysis}. This impacts both sample quality and sample validity. In fact, while for the discriminative setting, the main aim is to design fully expressive architectures for classes of graphs as broad as possible, e.g., PlanE~\cite{dimitrov2024plane} for planar graphs, in constrained generation this is not the case. As exemplified, planarity is too loose of a constraint, since the atoms composing molecules typically have low degrees and it becomes highly unlikely for the unconstrained diffusion model to violate planarity. This ends up slightly harming the performance of the constrained generative model without bringing the benefits of increased validity, as observed in \Cref{sec:experiments}. 

As a side note for the interested reader, the low values of novelty for QM9 are a result of the nature of this dataset, which consists of an exhaustive enumeration of small molecules satisfying a given set of properties~\cite{vignac2021top,vignac2022digress}. Therefore, there is small room for the generation of new molecules within such space.

\subsection{Controlled Molecular Generation}

In many real world scenarios, we want to generate molecules to target different goals, ranging from specific drug interactions to particular material properties. 
In such cases, we are not interested in generating any realistic molecules, but in obtaining molecules that are endowed with given properties matching our specific objectives.
Constrained graph generation appears as a promising research direction to accomplish such tasks, as the generated molecules will necessarily verify the enforced properties by design. In this section, we explore how to use ConStruct to successfully address such challenge.

One relevant property of molecules is acyclicity. In molecules, this structural property dictates distinct chemical characteristics compared to their cyclic counterparts. In fact, acyclic molecules are frequently encountered in natural products and pharmaceuticals, where their linear structures contribute to enhanced solubility, bioavailability, and metabolic stability. Additionally, acyclic molecules offer simplified synthetic routes and reduced computational complexity in modeling studies. 
\looseness=-1

We explore the generation of acyclic molecules by picking the absence of cycles as constraining property for ConStruct. Importantly, in contrast to all the experiments in the paper, we do \emph{not} train with only graphs that verify the property. Instead, we use the two models trained in the unconstrained setting (from previous section): one with edge-absorbing transitions (ConStruct) and another with marginal edge transitions (DiGress+). We then sample from these models using the absorbing noise model and the projector for acyclicity. The results are presented in \Cref{tab:molecular_controlled_gen}.

\begin{table*}[ht]
    \caption{Controlled graph diffusion for acyclic molecules. The constrained property used is acyclicity. Constrained DiGress+ denotes a model that was trained with a marginal noise model, but where the sampling is performed using the edge-absorbing noise model and projector. Both models were trained in the full QM9 dataset.
    We performed five sampling runs for each method and present their results in the format mean ± standard error of the mean. For each run, we generated 10000 molecules, following the protocol from \citet{vignac2022digress}. The FCD is computed using the official implementation from \citet{preuer2018frechet}.
    }
    \centering
    \resizebox{0.8\linewidth}{!}{
    \begin{tabular}{l*{4}{S}}
        
        \toprule
        & \multicolumn{4}{c}{QM9 Dataset} \\
        \cmidrule(lr){2-5}
        {Model} & {Unique\, $\uparrow$} & {Novel\,$\uparrow$} & {Valid\,$\uparrow$} & {Acyclicity \,$\uparrow$} \\
        \midrule
        {Constrained DiGress+}& {80.7 \scriptsize{±0.1}} & {64.7 \scriptsize{±0.2}} & {81.3 \scriptsize{±0.3}} & {100.0 \scriptsize{±0.0}} \\
        \rowcolor{Gray}
        {ConStruct}  & {79.2 \scriptsize{±0.2}} & {68.8 \scriptsize{±0.1}} & {99.8 \scriptsize{±0.0}} & {100.0 \scriptsize{±0.0}} \\

        \bottomrule
    \end{tabular}}
    \label{tab:molecular_controlled_gen}
\end{table*}

We observe that both methods output only acyclic molecules, which is a necessary consequence of the utilization of the projector.
As the set of acyclic molecules is a subset of the set of unconstrained molecules, we verify some repetition among the generated samples. This leads to a decrease in the values of uniqueness for both models when compared to the unconstrained setting.
Most remarkably, while the validity of the molecules generated by the model trained with the marginal noise model is significantly lower than the observed one for the unconstrained sampling setting, ConStruct preserves its high validity values (even higher than in the unconstrained setting). This result validates the foundation upon which ConStruct is laid: with the marginal noise model, the forward process distribution does not match the reverse one when employing the projector, harming the molecular validity of the generated instances. In contrast, the edge-absorbing noise model of ConStruct allows the forward and reverse processes to match, staying in distribution.

\clearpage
\section{Variants of ConStruct - Performance Analysis and Extensions}\label{app:variants}

In this section, we perform some ablations to ConStruct/DiGress+ to further analyse its performance and explore methodological extensions to the proposed method.

\subsection{Performance Analysis}\label{app:performance analysis}

From \Cref{tab:synthetic_graphs}, we observed that, for the specific case of the tree dataset, DiGress+ outperforms ConStruct. To explain why, we start by noting that approximately 97\% of the graphs generated by DiGress+ already comply with the target structural property. This value indicates that DiGress+ had access to sufficient data and it is sufficiently expressive to learn the dependencies of tree graphs almost perfectly, leaving little room for improvement with ConStruct. In contrast, for the lobster and planar datasets, the corresponding values of DiGress+ are significantly lower, hinting the pertinence of ConStruct in such scenarios.

However, this observation alone does not explain the slight performance gap. To investigate further, we ran DiGress+ with an edge-absorbing noise model but without projector, designating it as DiGress+ [absorbing] in \Cref{tab:construct_variants}. 
We observe a significant decrease in performance with this modification.
This suggests that it is the choice of the noise model that is hindering ConStruct's performance for this dataset.
In fact, ConStruct outperforms DiGress+ [absorbing], emphasizing the relevance of the projector step.
In any case, we remark that we did not adjust the variance schedule beyond the one proposed by \citet{austin2021structured}, leaving room for potential improvement in this aspect.
Importantly, we observe a dataset dependency of performance by applying the same modified model to the planar dataset (again, see \Cref{tab:construct_variants}), where it exhibits a significantly better average ratio compared to DiGress+.
This observation aligns with recent research indicating that there is no clear evidence that an optimal noise model can be deduced \textit{a priori} from dataset statistics~\cite{tseng2023complex}.

\subsection{\emph{A Posteriori} Modifications}

An advantage of our setting is that the projector merely interferes with the sampling algorithm, avoiding to affect the efficiency of the diffusion model training. Otherwise, if we were to directly block the model's predictions, it would require a constraint satisfaction check for each potentially added edge at every forward pass, resulting in a prohibitive computational overhead.

In constrast, we could also consider the opposite setting: only applying \textit{a posteriori} modifications to graphs generated by the unconstrained model. Two possible alternatives emerge: 

\begin{itemize}
    \item \textbf{DiGress+ [rejection]} - we reject the final samples generated by the unsconstrained model that do not satisfy the provided constraint. While we should expect a good performance from this approach, it wastes computational resources as it requires discarding the rejected graphs and restarting the whole sampling process until we get the desired amount of generated graphs.
    \item\textbf{DiGress+ [projection]} - we only apply the projector to the final samples generated by the unconstrained method. For example, in the case of planarity as target property, we could find the maximal planar subgraphs of the generated samples. This method would necessarily provide a more efficient sampling procedure (as we only execute the projector step once). 
\end{itemize}

We provide the results in synthetic graphs for both methods in \Cref{tab:construct_variants}. We observe that DiGress+ [rejection] attains great V.U.N. values, as expected. Nevertheless, we analyse its alarming computational inefficiencies in \Cref{app:sampling_times}. Additionally, we see that DiGress+ (projection) achieves worse performance than ConStruct. We attribute this result to the fact that such a scheme fails to inform the generative model about the constraining condition throughout the reverse process, thus not harnessing the full expressivity of the diffusion model. We also attribute the anomalously good performance of this method for the tree dataset to the optimal properties of the projector in such case (see \Cref{th:acyclicity_equals}).

Finally, considering the two extreme cases described above (blocking edges at every forward pass \emph{vs} \emph{a posteriori} modifications), we conclude that ConStruct finds itself in a sweet spot in the trade-off between additional computational burden and constraint integration into the generative process.  

\subsection{Likelihood-based constrained generation}

We also consider two variants of ConStruct where the projector is no longer independent from the diffusion model. Instead, we use the associated likelihoods to each of the sampled candidate edges at a given time step $t$, $p_\theta(e^{t-1}_{ij}|G^{t})$, to define an order by which we add the edges. Therefore, instead of uniformly sampling them at random, we propose two methods of integrating such information:
\begin{itemize}
    \item Deterministic: we add the edges with higher $p_\theta(e^{t-1}_{ij}|G^{t})$ first.
    \item Stochastic: we sample without replacement from the set of candidates edges, where the probability of sampling each of them is proportional to the respective $p_\theta(e^{t-1}_{ij}|G^{t})$.
\end{itemize}

We present the results for these likelihood-based variants of ConStruct in \Cref{tab:construct_variants}. As we can observe, for all the analysed datasets, the three ConStruct variants are statistically equivalent in terms of performance. 
In terms of efficiency, there is no meaningful difference among the three methods: even though the uniformly random sampling does not have to access $p(e^{t-1}_{ij}|G^{t})$, nor sort the order of the corresponding edges, the computational overburden of these operations is negligible. 
We remark that the theoretical analysis performed in \Cref{app:theoretical_analysis} also holds for the stochastic likelihood-based variant, but not for the deterministic one, as the latter is not able to select any permutation from the candidate edges with non-zero probability: up to the degenerate cases where different candidate edges have the same \( p(e^{t-1}_{ij} \mid G^t) \), it always selects candidate edges by the same order.

\begin{table*}[ht]
    \caption{Graph generation performance on synthetic graphs. DiGress+, ConStruct are retrieved from \Cref{tab:synthetic_graphs}, from which we follow the same experimental protocol. DiGress+ [absorbing] denotes DiGress+ with an edge-absorbing noise model.
    DiGress+ [rejection] refers to the baseline that rejects the unconstrainedly generated graphs that do not satisfy the constraint and resamples until the intended number of valid graphs is reached, while DiGress+ [projection] directly applies the projector on unconstrainedly generated graphs.
    ConStruct [model - det] and ConStruct [model - stoch] denote the deterministic and stochastic likelihood-based variants of ConStruct.
    }
    \centering
    \resizebox{\linewidth}{!}{
    \begin{tabular}{l*{5}{S}g*{3}{S}g*{1}{S}}
        \toprule
        & \multicolumn{11}{c}{Planar Dataset} \\
        \cmidrule(lr){2-12}
        {Model} & {Deg.\,$\downarrow$} & {Clus.\,$\downarrow$} & {Orbit\, $\downarrow$} & {Spec.\,$\downarrow$} & {Wavelet\, $\downarrow$}  & {Ratio\,$\downarrow$} & {Valid\,$\uparrow$} & {Unique\, $\uparrow$} & {Novel\,$\uparrow$} & {V.U.N.\, $\uparrow$} & {Property \,$\uparrow$} \\
        \midrule
        {Train set} & {0.0002} & {0.0310} & {0.0005} & {0.0038} & {0.0012} & {1.0}   & {100}  & {100}  & {0.0}   & {0.0} & {100}  \\
        \midrule
        {DiGress+} & {0.0008 \scriptsize{±0.0001}} & {0.0410 \scriptsize{±0.0033}} & {0.0048 \scriptsize{±0.0004}} & {0.0056 \scriptsize{±0.0004}} & {0.0020 \scriptsize{±0.0002}} & {3.6 \scriptsize{±0.2}} & {76.4 \scriptsize{±1.3}} & {100.0 \scriptsize{±0.0}} & {100.0 \scriptsize{±0.0}} & {76.4 \scriptsize{±1.3}} & {76.4 \scriptsize{±1.3}}
 \\
        \midrule
        \rowcolor{Gray}
        {ConStruct} & {0.0003 \scriptsize{±0.0001}} & {0.0403 \scriptsize{±0.0047}} & {0.0004 \scriptsize{±0.0001}} & {0.0053 \scriptsize{±0.0004}} & {0.0009 \scriptsize{±0.0001}} & {1.1 \scriptsize{±0.1}} & {100.0 \scriptsize{±0.0}} & {100.0 \scriptsize{±0.0}} & {100.0 \scriptsize{±0.0}} & {100.0 \scriptsize{±0.0}} & {100.0 \scriptsize{±0.0}} \\

        \midrule
        {DiGress+ [absorbing]} & {0.0006 \scriptsize{±0.0002}} & {0.0383 \scriptsize{±0.0041}} & {0.0028 \scriptsize{±0.0005}} & {0.0050 \scriptsize{±0.0002}} & {0.0010 \scriptsize{±0.0001}} & {2.4 \scriptsize{±0.2}} & {42.4 \scriptsize{±1.0}} & {100.0 \scriptsize{±0.0}} & {100.0 \scriptsize{±0.0}} & {42.4 \scriptsize{±1.0}} & {42.4 \scriptsize{±1.0}} \\

        {DiGress+ [rejection]}  & {0.0008 \scriptsize{±0.0001}} & {0.0418 \scriptsize{±0.0035}} & {0.0022 \scriptsize{±0.0001}} & {0.0054 \scriptsize{±0.0004}} & {0.0019 \scriptsize{±0.0001}} & {2.6 \scriptsize{±0.2}} & {100.0 \scriptsize{±0.0}} & {100.0 \scriptsize{±0.0}} & {100.0 \scriptsize{±0.0}} & {100.0 \scriptsize{±0.0}} & {100.0 \scriptsize{±0.0}}\\
        
        {DiGress+ [projection]}  & {0.0003 \scriptsize{±0.0001}} & {0.0347 \scriptsize{±0.0030}} & {0.0013 \scriptsize{±0.0002}} & {0.0056 \scriptsize{±0.0003}} & {0.0015 \scriptsize{±0.0001}} & {1.6 \scriptsize{±0.1}} & {100.0 \scriptsize{±0.0}} & {100.0 \scriptsize{±0.0}} & {100.0 \scriptsize{±0.0}} & {100.0 \scriptsize{±0.0}} & {100.0 \scriptsize{±0.0}} \\
        
        {ConStruct [model - det]} & {0.0004 \scriptsize{±0.0001}} & {0.0416 \scriptsize{±0.0040}} & {0.0005 \scriptsize{±0.0002}} & {0.0050 \scriptsize{±0.0003}} & {0.0009 \scriptsize{±0.0001}} & {1.3 \scriptsize{±0.2}} & {100.0 \scriptsize{±0.0}} & {100.0 \scriptsize{±0.0}} & {100.0 \scriptsize{±0.0}} & {100.0 \scriptsize{±0.0}} & {100.0 \scriptsize{±0.0}} \\
        {ConStruct [model - stoch]} & {0.0004 \scriptsize{±0.0001}} & {0.0404 \scriptsize{±0.0038}} & {0.0005 \scriptsize{±0.0002}} & {0.0050 \scriptsize{±0.0003}} & {0.0009 \scriptsize{±0.0001}} & {1.2 \scriptsize{±0.1}} & {100.0 \scriptsize{±0.0}} & {100.0 \scriptsize{±0.0}} & {100.0 \scriptsize{±0.0}} & {100.0 \scriptsize{±0.0}} & {100.0 \scriptsize{±0.0}} \\

        \midrule
        & \multicolumn{11}{c}{Tree Dataset} \\
        \cmidrule(lr){2-12}
        {Train set}      & {0.0001} & {0.0000} & {0.0000} & {0.0075} & {0.0030} & {1.0}   & {100}  & {100}  & {0.0}   & {0.0} & {100} \\
        \midrule
        
        {DiGress+} & {0.0002 \scriptsize{±0.0001}} & {0.0000 \scriptsize{±0.0000}} & {0.0000 \scriptsize{±0.0000}} & {0.0092 \scriptsize{±0.0005}} & {0.0032 \scriptsize{±0.0001}} & {1.3 \scriptsize{±0.2}} & {91.6 \scriptsize{±0.7}} & {100.0 \scriptsize{±0.0}} & {100.0 \scriptsize{±0.0}} & {91.6 \scriptsize{±0.7}} & {97.0 \scriptsize{±0.8}}\\
        \midrule
        \rowcolor{Gray}
        {ConStruct}  & {0.0003 \scriptsize{±0.0001}} & {0.0000 \scriptsize{±0.0000}} & {0.0000 \scriptsize{±0.0000}} & {0.0073 \scriptsize{±0.0008}} & {0.0034 \scriptsize{±0.0002}} & {1.9 \scriptsize{±0.3}} & {83.0 \scriptsize{±1.8}} & {100.0 \scriptsize{±0.0}} & {100.0 \scriptsize{±0.0}} & {83.0 \scriptsize{±1.8}} & {100.0 \scriptsize{±0.0}} \\
        \midrule
        {DiGress+ [absorbing]} & {0.0004 \scriptsize{±0.0002}} & {0.0000 \scriptsize{±0.0000}} & {0.0000 \scriptsize{±0.0000}} & {0.0079 \scriptsize{±0.0006}} & {0.0034 \scriptsize{±0.0002}} & {2.3 \scriptsize{±0.5}} & {72.8 \scriptsize{±0.6}} & {100.0 \scriptsize{±0.0}} & {100.0 \scriptsize{±0.0}} & {72.8 \scriptsize{±0.6}} & {85.6 \scriptsize{±1.4}} \\

        {DiGress+ [rejection]} & {0.0002 \scriptsize{±0.0001}} & {0.0000 \scriptsize{±0.0000}} & {0.0000 \scriptsize{±0.0000}} & {0.0093 \scriptsize{±0.0004}} & {0.0032 \scriptsize{±0.0000}} & {1.4 \scriptsize{±0.3}} & {100.0 \scriptsize{±0.0}} & {100.0 \scriptsize{±0.0}} & {100.0 \scriptsize{±0.0}} & {100.0 \scriptsize{±0.0}} & {100.0 \scriptsize{±0.0}} \\

        {DiGress+ [projection]}  & {0.0002 \scriptsize{±0.0001}} & {0.0000 \scriptsize{±0.0000}} & {0.0000 \scriptsize{±0.0000}} & {0.0092 \scriptsize{±0.0004}} & {0.0031 \scriptsize{±0.0001}} & {1.3 \scriptsize{±0.2}} & {94.0 \scriptsize{±0.3}} & {100.0 \scriptsize{±0.0}} & {100.0 \scriptsize{±0.0}} & {94.0 \scriptsize{±0.3}} & {100.0 \scriptsize{±0.0}} \\
        
        {ConStruct [model - det]}  & {0.0003 \scriptsize{±0.0001}} & {0.0000 \scriptsize{±0.0000}} & {0.0000 \scriptsize{±0.0000}} & {0.0076 \scriptsize{±0.0008}} & {0.0034 \scriptsize{±0.0001}} & {1.9 \scriptsize{±0.3}} & {83.2 \scriptsize{±1.7}} & {100.0 \scriptsize{±0.0}} & {100.0 \scriptsize{±0.0}} & {83.2 \scriptsize{±1.7}} & {100.0 \scriptsize{±0.0}}\\
        {ConStruct [model - stoch]} & {0.0004 \scriptsize{±0.0001}} & {0.0000 \scriptsize{±0.0000}} & {0.0000 \scriptsize{±0.0000}} & {0.0072 \scriptsize{±0.0008}} & {0.0034 \scriptsize{±0.0001}} & {1.9 \scriptsize{±0.3}} & {83.2 \scriptsize{±1.7}} & {100.0 \scriptsize{±0.0}} & {100.0 \scriptsize{±0.0}} & {83.2 \scriptsize{±1.7}} & {100.0 \scriptsize{±0.0}} \\

        \midrule
        
        & \multicolumn{11}{c}{Lobster Dataset} \\
        \cmidrule(lr){2-12}
        {Train set}   & {0.0002} & {0.0000} & {0.0000} & {0.0070} & {0.0070} & {1.0}   & {100}  & {100}  & {0.0}   & {0.0} & {100}  \\
        \midrule
        
        {DiGress+} & {0.0005 \scriptsize{±0.0001}} & {0.0000 \scriptsize{±0.0000}} & {0.0000 \scriptsize{±0.0000}} & {0.0114 \scriptsize{±0.0006}} & {0.0093 \scriptsize{±0.0005}} & {1.8 \scriptsize{±0.1}} & {79.0 \scriptsize{±1.1}} & {98.0 \scriptsize{±0.7}} & {96.6 \scriptsize{±0.6}} & {69.4 \scriptsize{±1.2}} & {76.8 \scriptsize{±1.7}} \\
        
        \midrule
        \rowcolor{Gray}
        {ConStruct}  & {0.0003 \scriptsize{±0.0001}} & {0.0000 \scriptsize{±0.0000}} & {0.0000 \scriptsize{±0.0000}} & {0.0092 \scriptsize{±0.0009}} & {0.0074 \scriptsize{±0.0004}} & {1.3 \scriptsize{±0.2}} & {86.8 \scriptsize{±2.4}} & {98.8 \scriptsize{±0.6}} & {97.0 \scriptsize{±0.9}} & {83.2 \scriptsize{±2.3}} & {100.0 \scriptsize{±0.0}} \\
        
        \midrule
        {DiGress+ [rejection]} & {0.0006 \scriptsize{±0.0001}} & {0.0000 \scriptsize{±0.0000}} & {0.0000 \scriptsize{±0.0000}} & {0.0130 \scriptsize{±0.0010}} & {0.0106 \scriptsize{±0.0006}} & {2.1 \scriptsize{±0.2}} & {100.0 \scriptsize{±0.0}} & {96.4 \scriptsize{±0.2}} & {95.6 \scriptsize{±0.8}} & {93.6 \scriptsize{±0.8}} & {100.0 \scriptsize{±0.0}} \\
        
        {DiGress+ [projection]} & {0.0006 \scriptsize{±0.0001}} & {0.0000 \scriptsize{±0.0000}} & {0.0000 \scriptsize{±0.0000}} & {0.0109 \scriptsize{±0.0006}} & {0.0098 \scriptsize{±0.0005}} & {2.0 \scriptsize{±0.1}} & {77.8 \scriptsize{±1.2}} & {98.2 \scriptsize{±0.6}} & {96.6 \scriptsize{±0.6}} & {73.6 \scriptsize{±1.0}} & {100.0 \scriptsize{±0.0}} \\
        
        {ConStruct [model - det]} & {0.0003 \scriptsize{±0.0001}} & {0.0000 \scriptsize{±0.0000}} & {0.0000 \scriptsize{±0.0000}} & {0.0093 \scriptsize{±0.0008}} & {0.0075 \scriptsize{±0.0003}} & {1.2 \scriptsize{±0.1}} & {87.0 \scriptsize{±2.3}} & {98.8 \scriptsize{±0.6}} & {97.0 \scriptsize{±0.9}} & {83.4 \scriptsize{±2.2}} & {100.0 \scriptsize{±0.0}} \\
        {ConStruct [model - stoch]} & {0.0003 \scriptsize{±0.0001}} & {0.0000 \scriptsize{±0.0000}} & {0.0000 \scriptsize{±0.0000}} & {0.0093 \scriptsize{±0.0008}} & {0.0075 \scriptsize{±0.0003}} & {1.2 \scriptsize{±0.1}} & {87.0 \scriptsize{±2.3}} & {98.8 \scriptsize{±0.6}} & {97.0 \scriptsize{±0.9}} & {83.4 \scriptsize{±2.2}} & {100.0 \scriptsize{±0.0}} \\
        \bottomrule
    \end{tabular}}\label{tab:construct_variants}
\end{table*}

\clearpage
\section{Visualizations}

In this section, we provide several visualizations of the final generated graphs, comparing them to the ones observed in the different datasets. We also visually expose the effect of the projector in different timesteps. 

\subsection{Graphs generated by ConStruct}

Here we visually compare the graphs from the different datasets to the ones generated by ConStruct.

\subsubsection{Synthetic Datasets}

We provide plots of the sampled graphs from ConStruct for the different datasets: planar in \Cref{fig:planar_gen_plots}, tree in \Cref{fig:tree_gen_plots}, and lobster in \Cref{fig:lobster_gen_plots}.

\begin{figure}[H]
    \centering
    \includegraphics[width=0.24\linewidth]{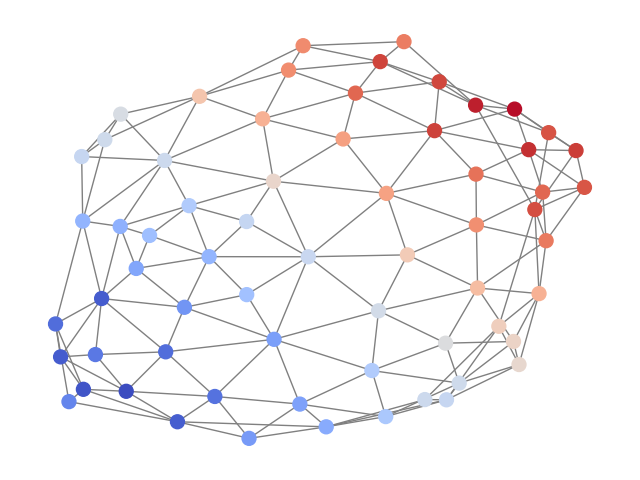}
    \includegraphics[width=0.24\linewidth]{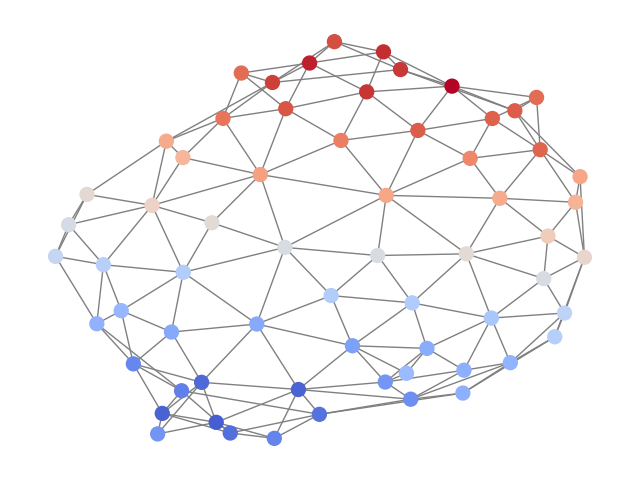}
    \includegraphics[width=0.24\linewidth]{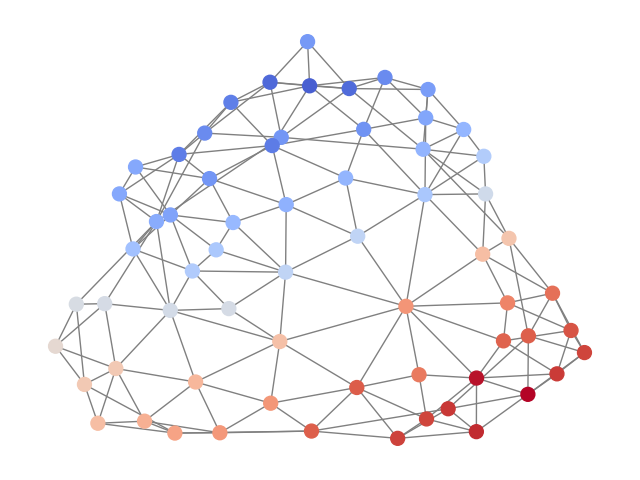}
    \includegraphics[width=0.24\linewidth]{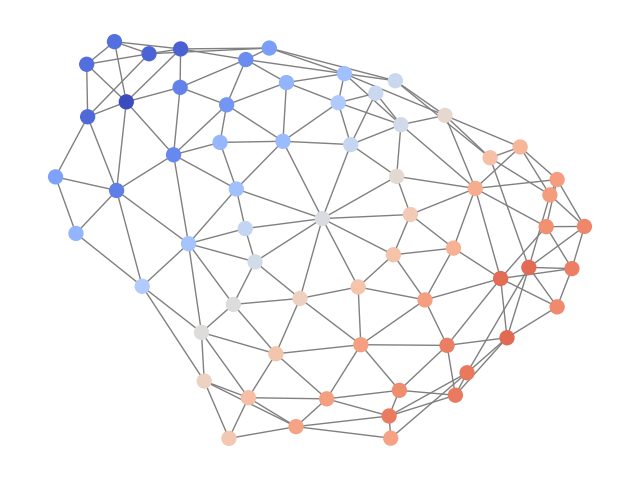}
    \includegraphics[width=0.24\linewidth]{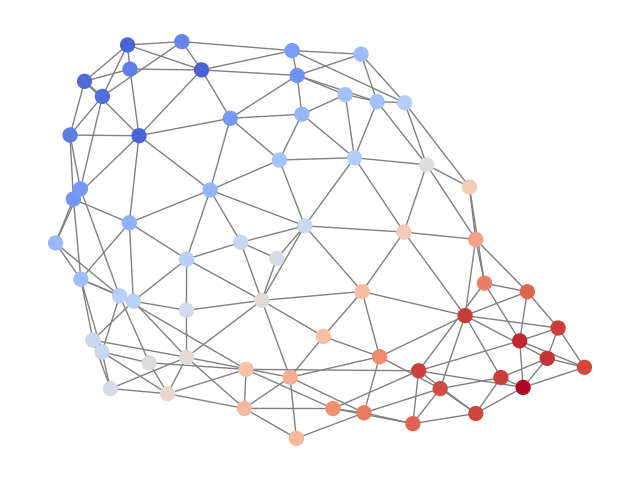}
    \includegraphics[width=0.24\linewidth]{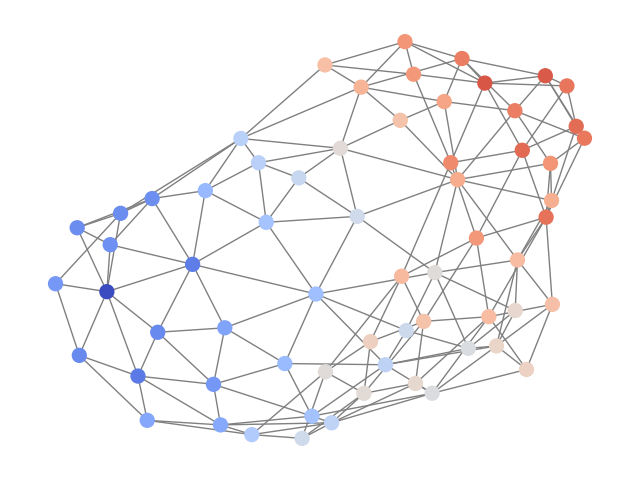}
    \includegraphics[width=0.24\linewidth]{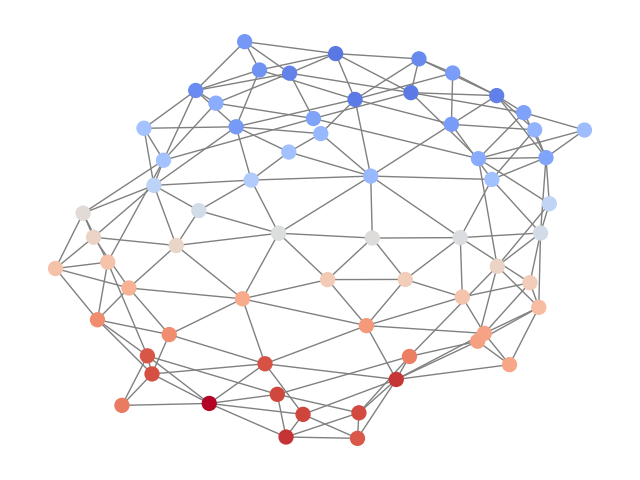}
    \includegraphics[width=0.24\linewidth]{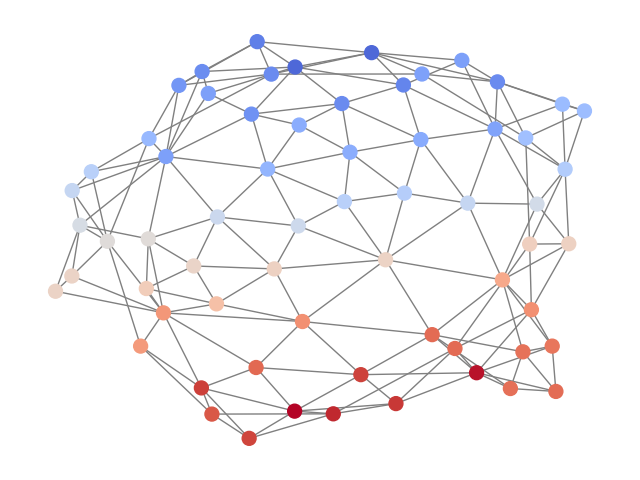}
    \caption{Uncurated set of dataset graphs (top) and generated graphs by ConStruct (bottom) for the planar dataset.}
    \label{fig:planar_gen_plots}
\end{figure}

\begin{figure}[H]
    \centering
    \includegraphics[width=0.24\linewidth]{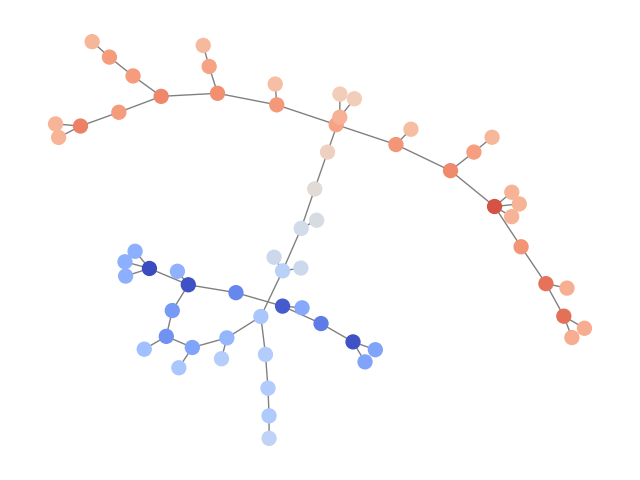}
    \includegraphics[width=0.24\linewidth]{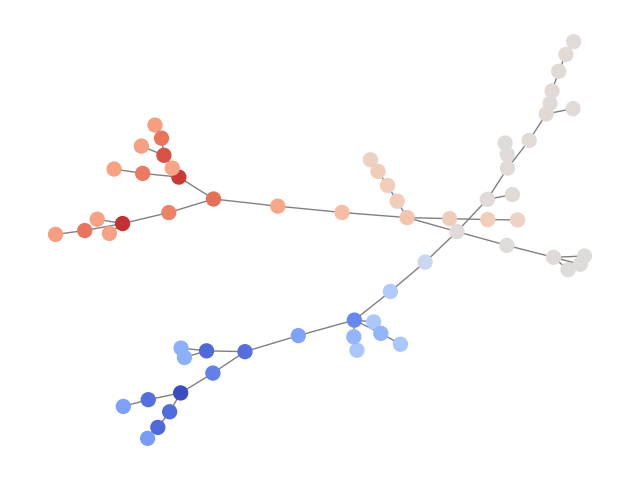}
    \includegraphics[width=0.24\linewidth]{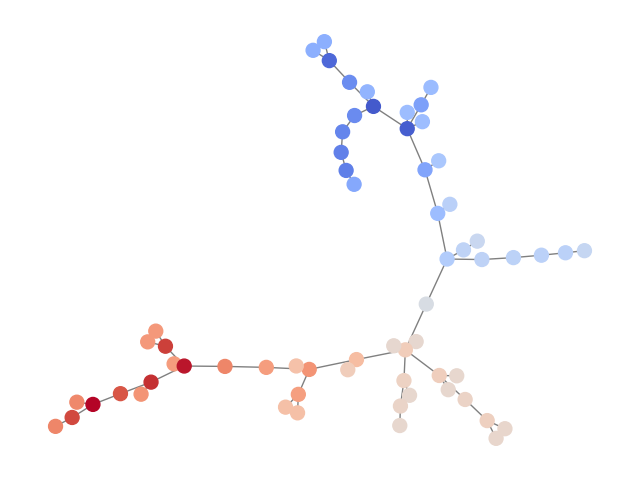}
    \includegraphics[width=0.24\linewidth]{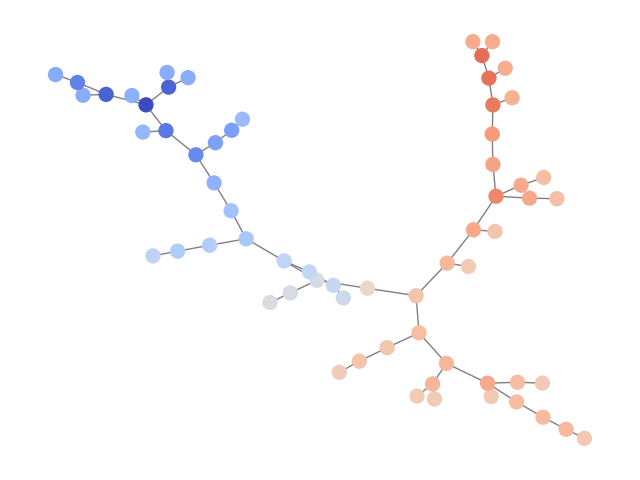}
    \includegraphics[width=0.24\linewidth]{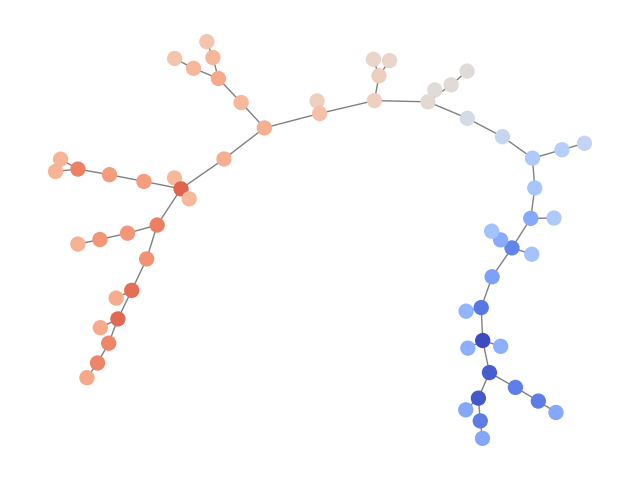}
    \includegraphics[width=0.24\linewidth]{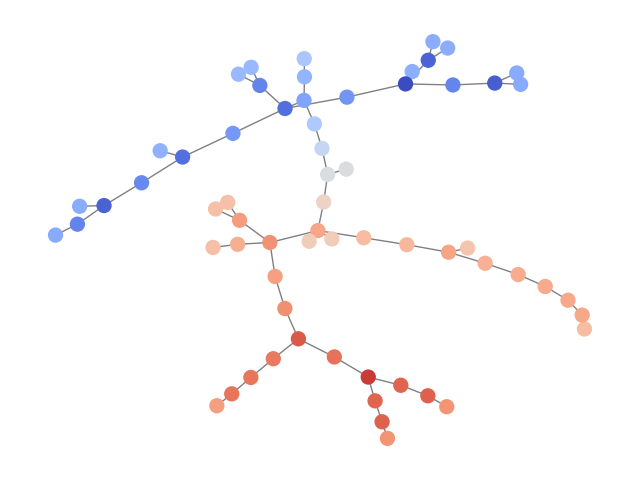}
    \includegraphics[width=0.24\linewidth]{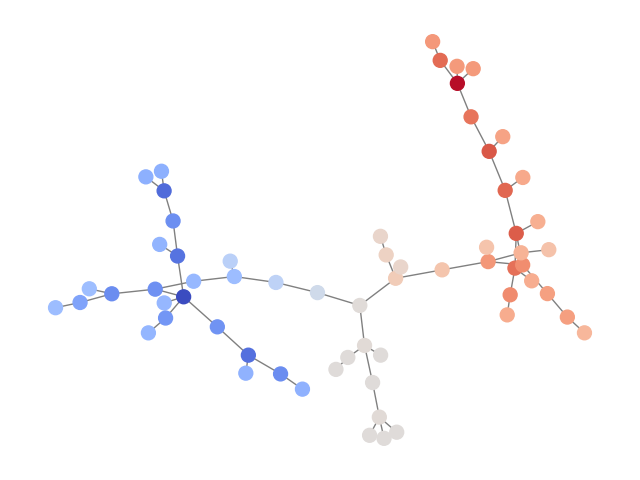}
    \includegraphics[width=0.24\linewidth]{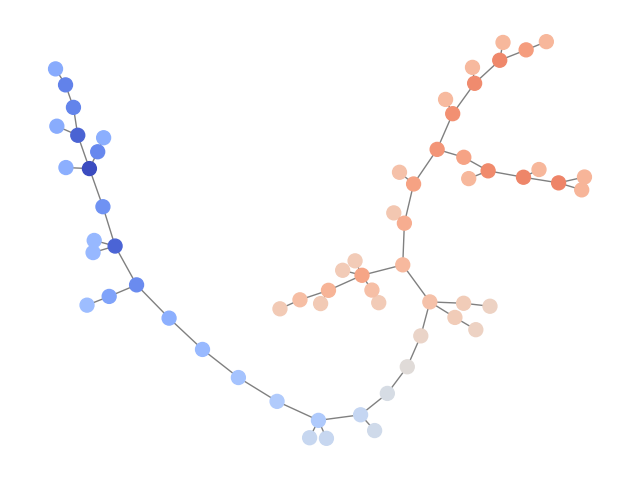}
    \caption{Uncurated set of dataset graphs (top) and generated graphs by ConStruct (bottom) for the tree dataset.}
    \label{fig:tree_gen_plots}
\end{figure}

\begin{figure}[H]
    \centering
    \includegraphics[width=0.24\linewidth]{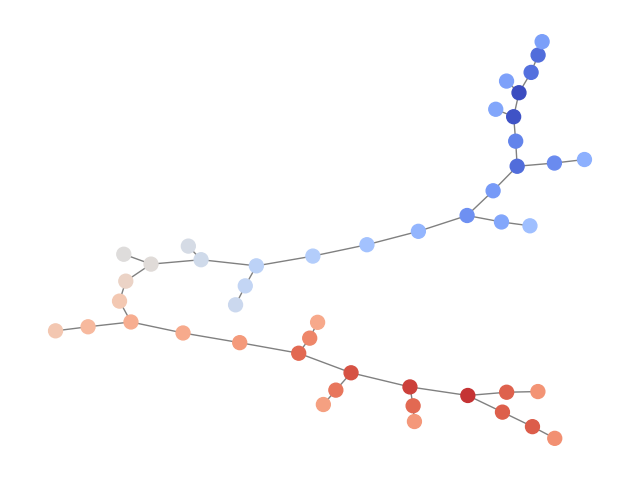}
    \includegraphics[width=0.24\linewidth]{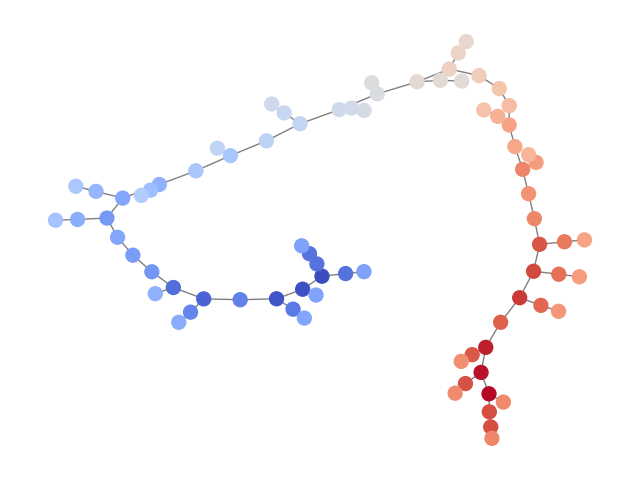}
    \includegraphics[width=0.24\linewidth]{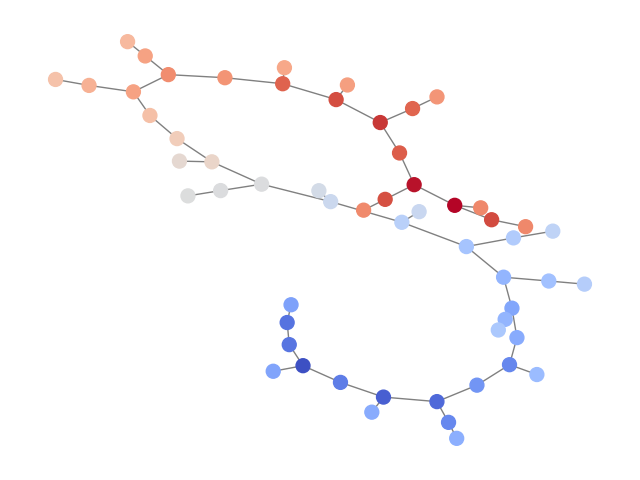}
    \includegraphics[width=0.24\linewidth]{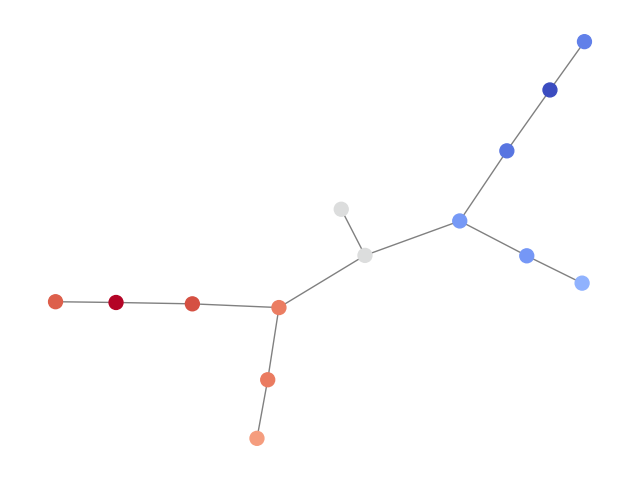}
    \includegraphics[width=0.24\linewidth]{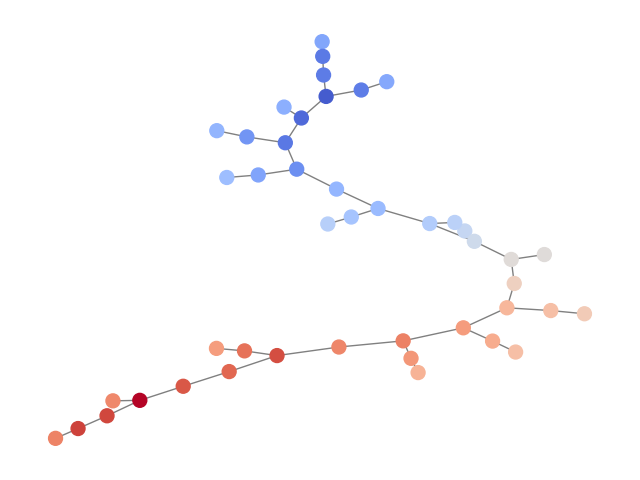}
    \includegraphics[width=0.24\linewidth]{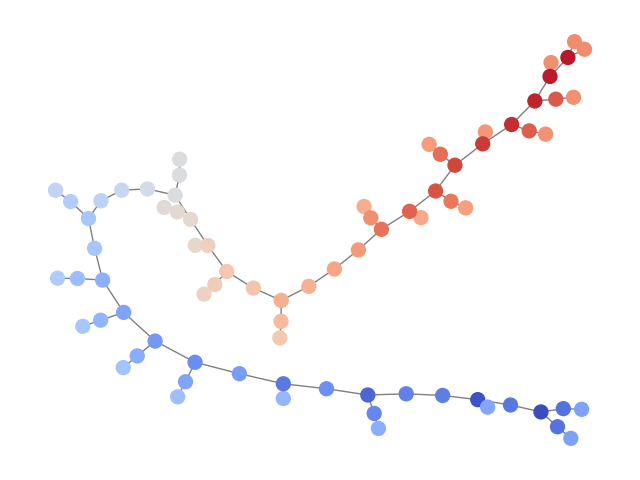}
    \includegraphics[width=0.24\linewidth]{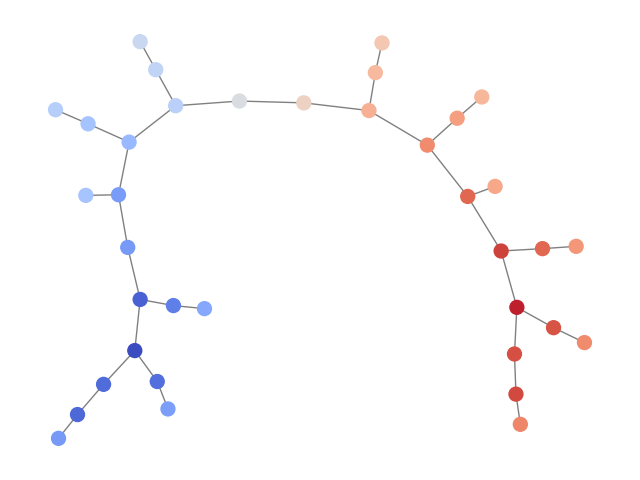}
    \includegraphics[width=0.24\linewidth]{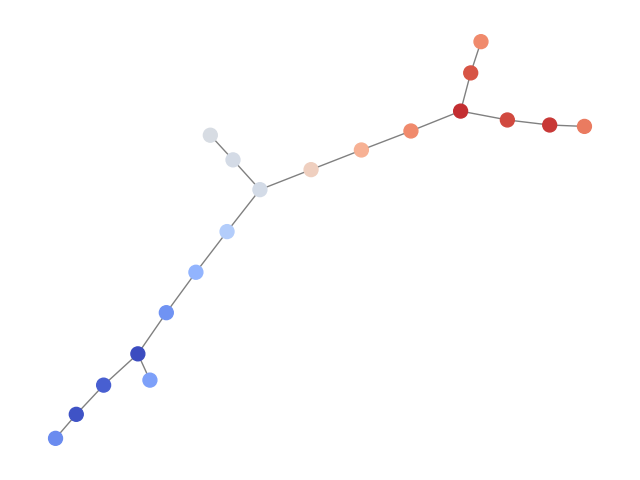}
    \caption{Uncurated set of dataset graphs (top) and generated graphs by ConStruct (bottom) for the lobster dataset.}
    \label{fig:lobster_gen_plots}
\end{figure}

\subsubsection{Digital Pathology}

We provide plots of the sampled graphs from ConStruct for the low TLS content dataset in \Cref{fig:low_tls_gen_plots} and for the high TLS content dataset in \Cref{fig:high_tls_gen_plots}. We also provide several snapshots throughout the reverse process of ConStruct in \Cref{fig:reverse_snapshots} to illustrate it as an edge insertion procedure.

\begin{figure}[H]
    \centering
    \includegraphics[width=0.24\linewidth]{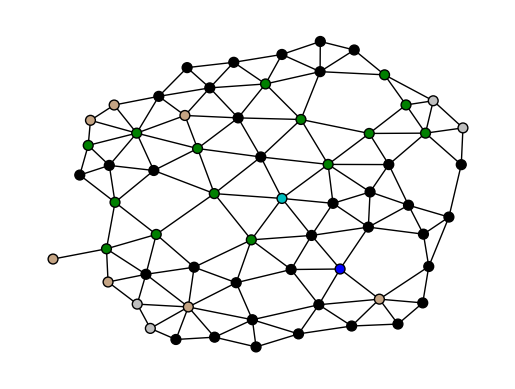}
    \includegraphics[width=0.24\linewidth]{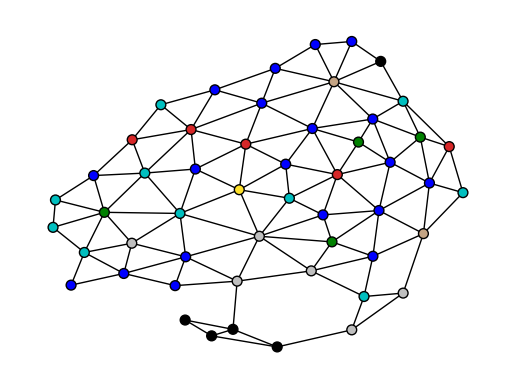}
    \includegraphics[width=0.24\linewidth]{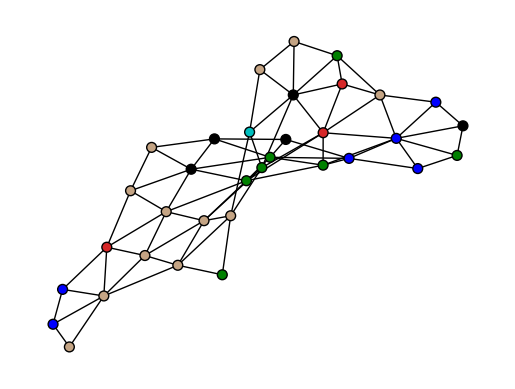}
    \includegraphics[width=0.24\linewidth]{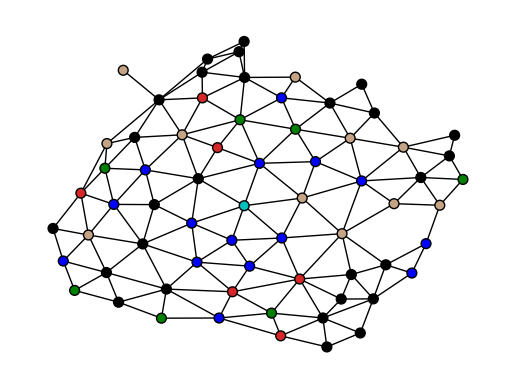}
    \includegraphics[width=0.24\linewidth]{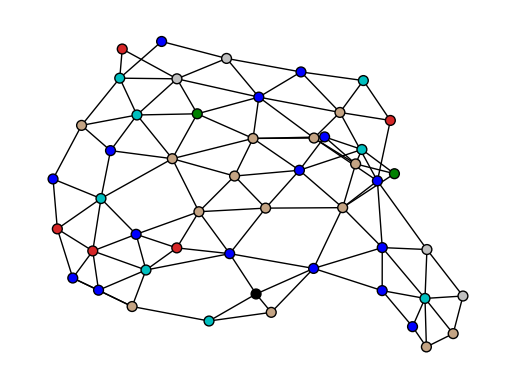}
    \includegraphics[width=0.24\linewidth]{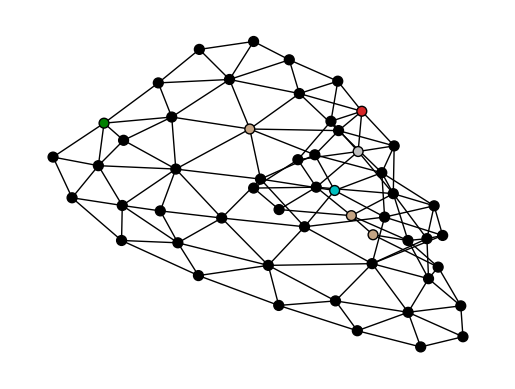}
    \includegraphics[width=0.24\linewidth]{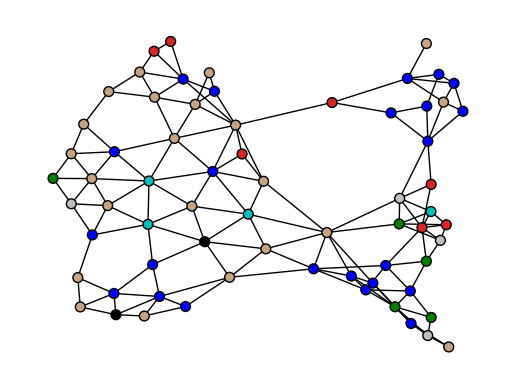}
    \includegraphics[width=0.24\linewidth]{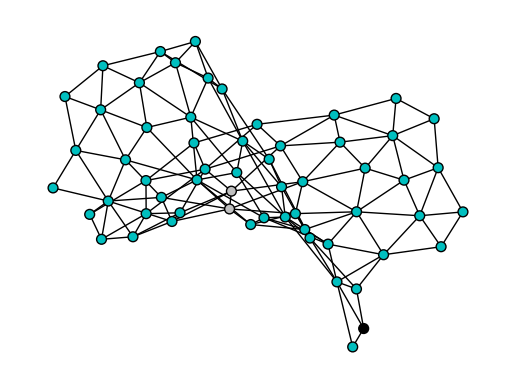}
    \caption{Uncurated set of dataset graphs (top) and generated graphs by ConStruct (bottom) for the low TLS dataset. The phenotype color key is presented in \Cref{fig:tls_extraction}.}
    \label{fig:low_tls_gen_plots}
\end{figure}

\begin{figure}[H]
    \centering
    \includegraphics[width=0.24\linewidth]{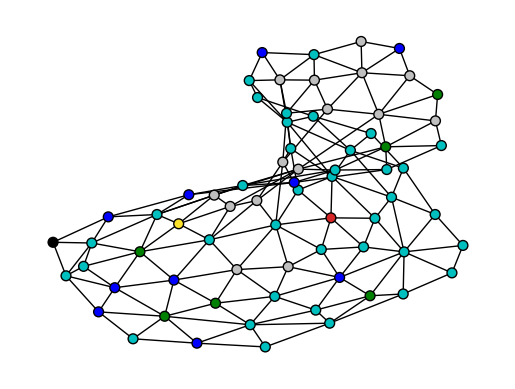}
    \includegraphics[width=0.24\linewidth]{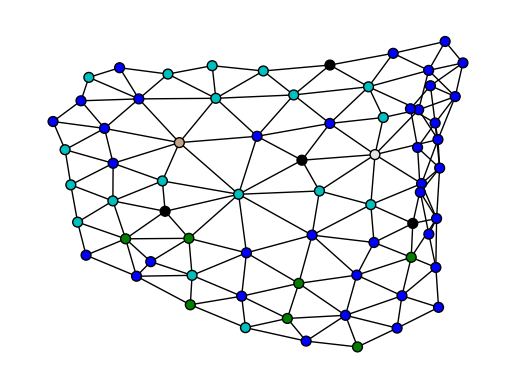}
    \includegraphics[width=0.24\linewidth]{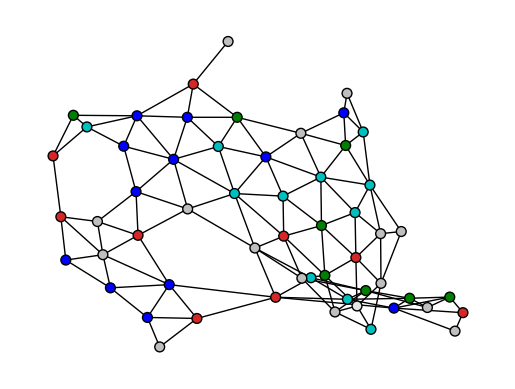}
    \includegraphics[width=0.24\linewidth]{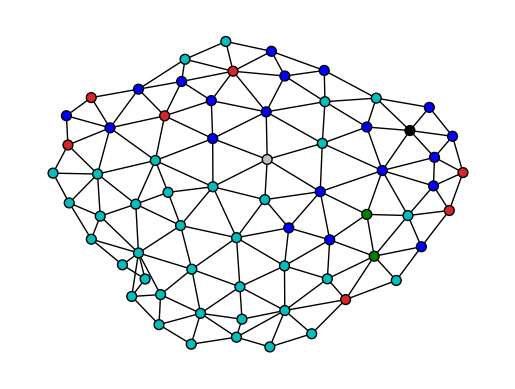}
    \includegraphics[width=0.24\linewidth]{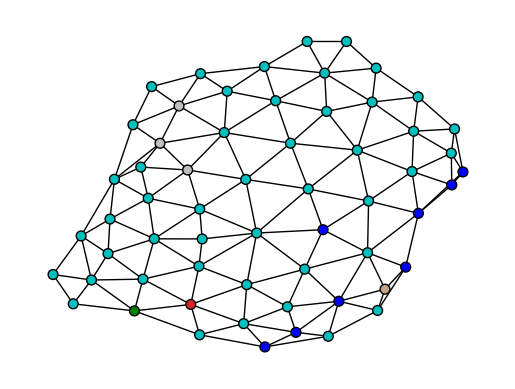}
    \includegraphics[width=0.24\linewidth]{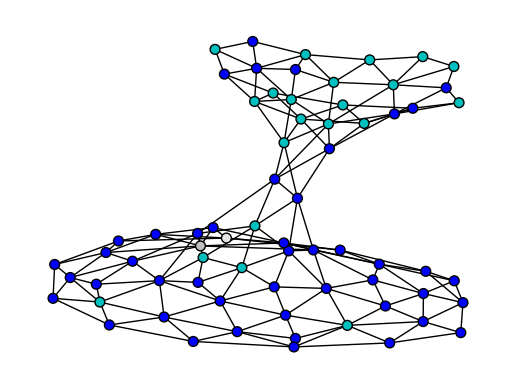}
    \includegraphics[width=0.24\linewidth]{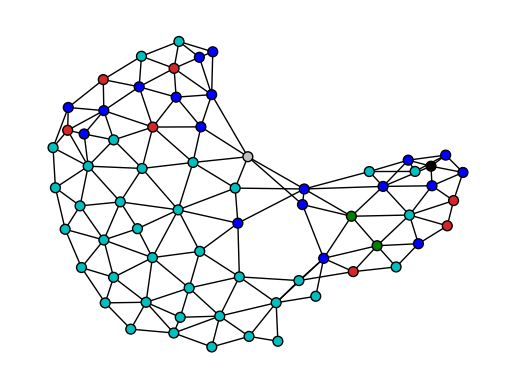}
    \includegraphics[width=0.24\linewidth]{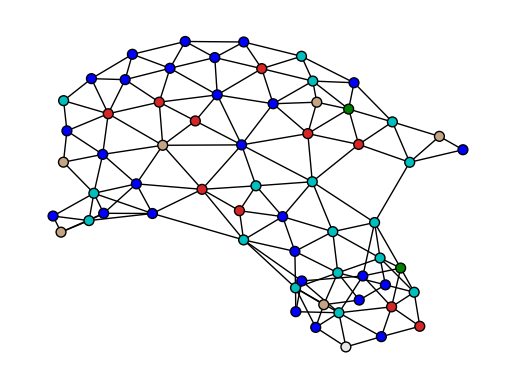}
    \caption{Uncurated set of dataset graphs (top) and generated graphs by ConStruct (bottom) for the high TLS dataset. The phenotype color key is presented in \Cref{fig:tls_extraction}.}
    \label{fig:high_tls_gen_plots}
\end{figure}

\begin{figure}[H]
    \centering
    \includegraphics[width=0.19\linewidth]{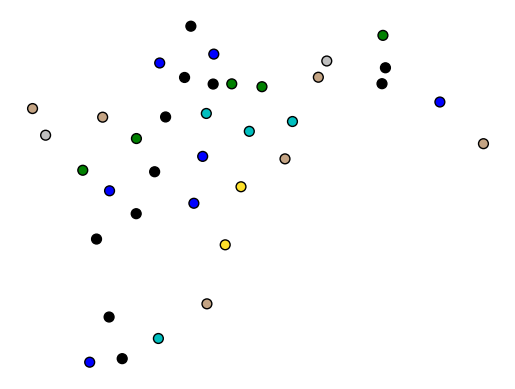}
    \includegraphics[width=0.19\linewidth]{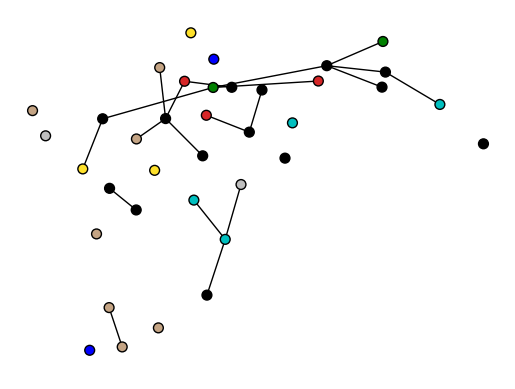}
    \includegraphics[width=0.19\linewidth]{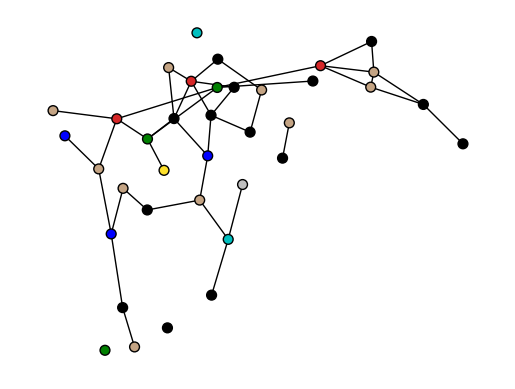}
    \includegraphics[width=0.19\linewidth]{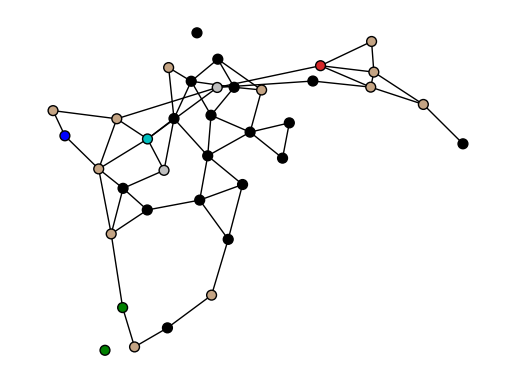}
    \includegraphics[width=0.19\linewidth]{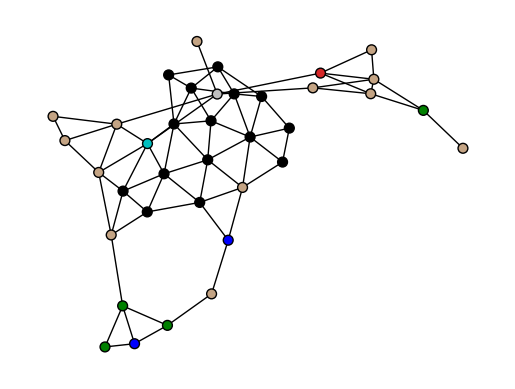}
    \includegraphics[width=0.19\linewidth]{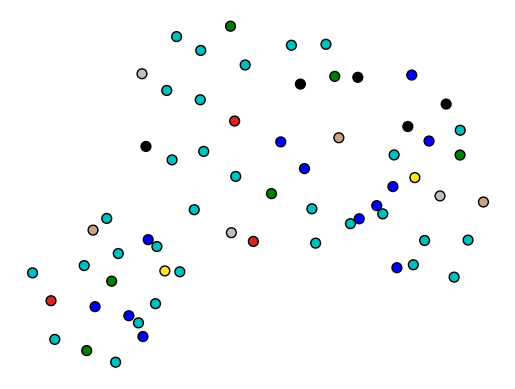}
    \includegraphics[width=0.19\linewidth]{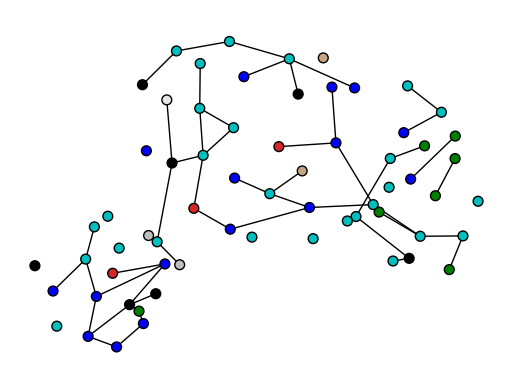}
    \includegraphics[width=0.19\linewidth]{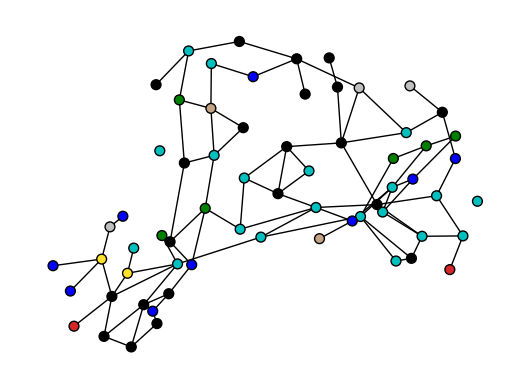}
    \includegraphics[width=0.19\linewidth]{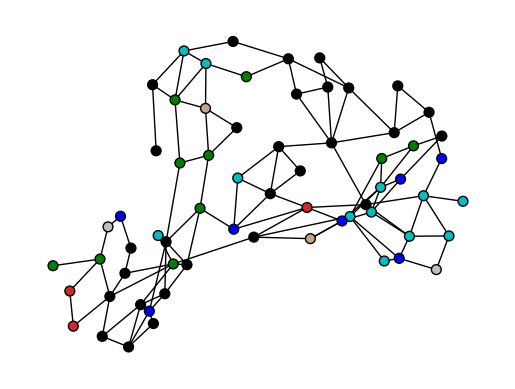}
    \includegraphics[width=0.19\linewidth]{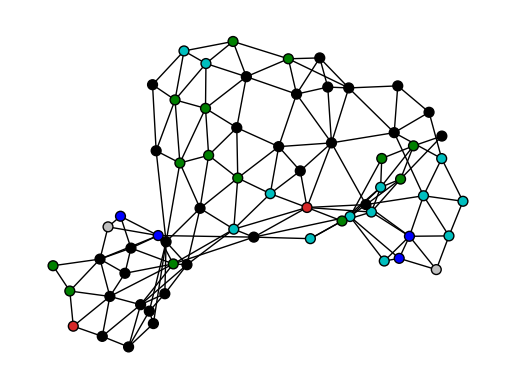}
    \caption{Reverse processes for generation of low (top) and high (bottom) TLS content graphs using ConStruct. We start from a graph without any edge on the left ($t=T$) and progressively build the graph, as a consequence of the absorbing noise model. The node types switch along the trajectory due to the marginal noise model. On the right, we have a fresh new sample ($t=0$). The phenotypes color key is presented in \Cref{fig:tls_extraction}.}
    \label{fig:reverse_snapshots}
\end{figure}

\subsection{Visualizing Intermediate Graphs (Before and After Projector)}

In this section, we provide some visualizations of intermediate graphs obtained throughout the reverse process for three different datasets: planar, tree, and lobster. In \Cref{fig:projector_intermediate_graphs}, we highlight the effect of the projector in rejecting the candidate edges that lead to property violation.

\begin{figure}
    \centering
    \includegraphics[width=\linewidth]{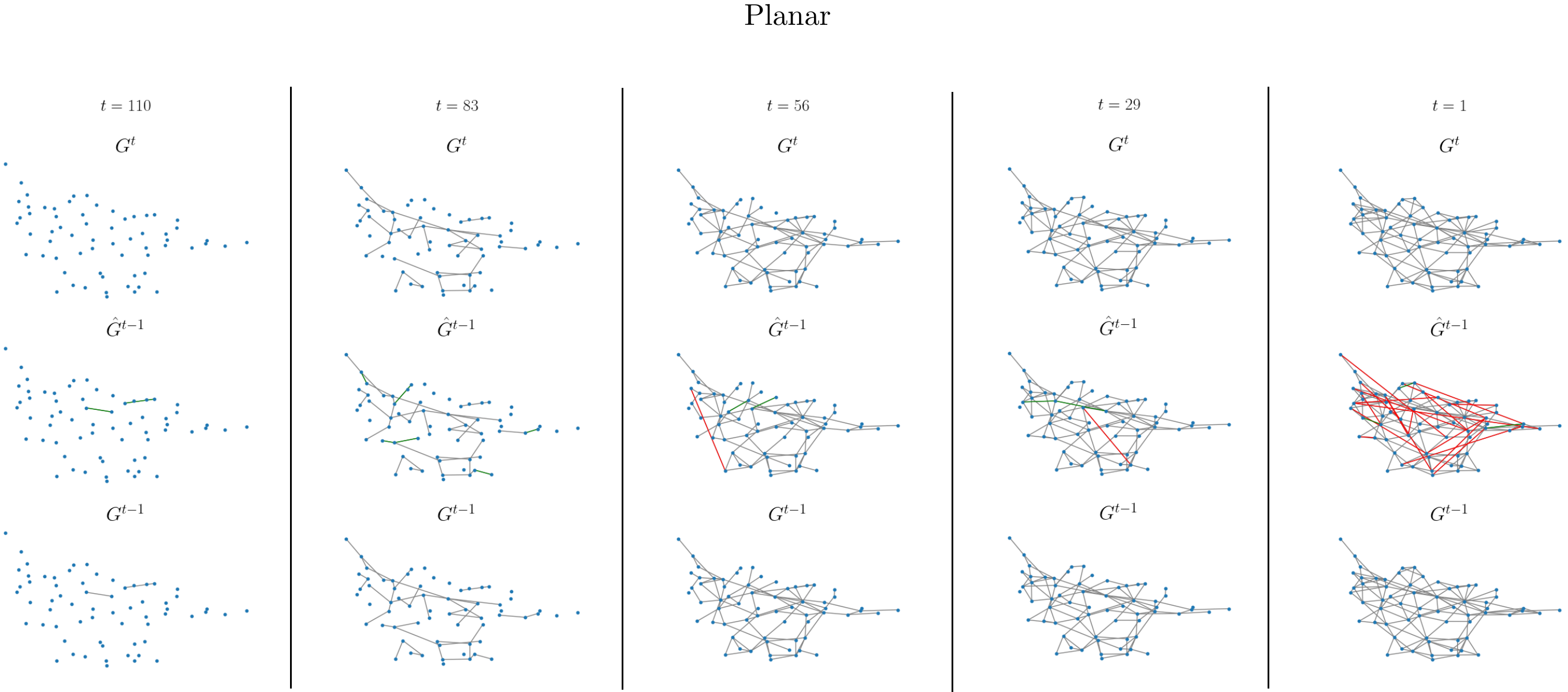}
    \includegraphics[width=\linewidth]{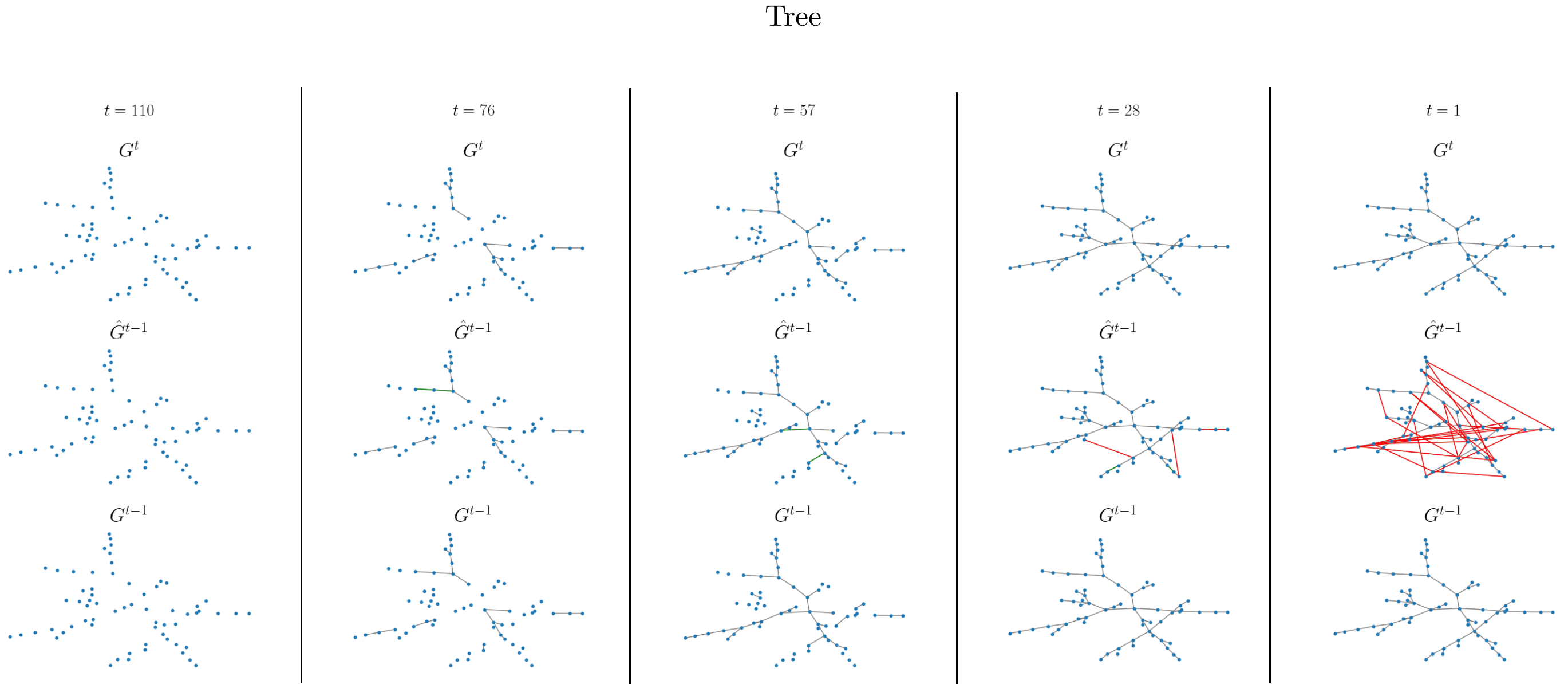}
    \includegraphics[width=\linewidth]{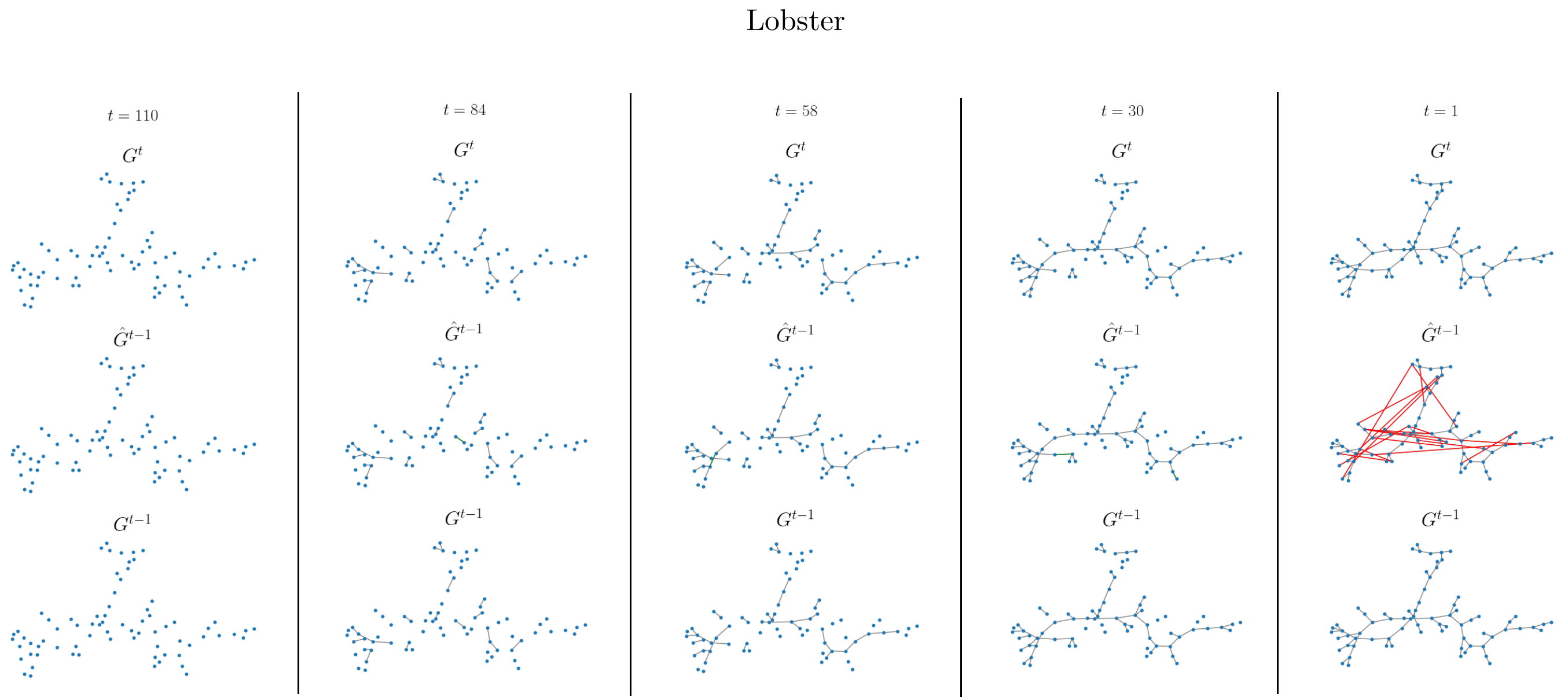}
    \caption{Visualizations of intermediate graphs throughout the reverse process. The notation follows the one of the rest of the paper: we obtain $G^{t-1}$ after applying the projector on $\hat{G}^{t-1}$, which in turn is obtained from $G^t$ through the diffusion model. From the new edges obtained in $\hat{G}^{t-1}$, we color them in green when they do not break the constraining property and in red otherwise. We can observe that the red edges are rejected. To better emphasize the edge rejection by the projector, we do not use a fully trained model and use a trajectory length, $T$, smaller than usual, resulting in less accurate edge predictions.}
    \label{fig:projector_intermediate_graphs}
\end{figure}

\section{Impact Statement}\label{app:impact_statement}

The primary objective of this paper is to enhance graph generation methodologies by enabling the integration of hard constraints into graph diffusion models. Although this problem holds significance for several real-world applications, including digital pathology and molecular generation, as exemplified in the paper, as well as protein design, the potential implications extend to advances in biomedical and chemical research. This development has the capacity to yield both positive and negative societal outcomes. Nonetheless, despite the potential for real-world impact, we currently do not identify any immediate societal concerns associated with the proposed methodology.

For the particular case of the digital pathology setting, while the generated graphs are able to mimic clinically relevant structures, they remain too small to have any direct clinical impact. Pathologists use whole-slide images for informed decisions, whose corresponding cell graphs typically comprise a total number of nodes 3 to 4 orders of magnitude above the graphs generated at this stage.

\end{document}